\newcommand{\blind}{1}
\renewcommand{\hat}{\widehat}
\renewcommand{\tilde}{\widetilde}
\newcommand{\bfm}[1]{\ensuremath{\boldsymbol{#1}}} % bm
\def\bbone{\mathbbm{1}} % package bm
\def\ba{\bfm a}   \def\bA{\bfm A}
     \def\EE{\mathbb{E}}
     \def\PP{\mathbb{P}}
     \def\RR{\mathbb{R}}
\def\bu{\bfm u}     
\def\bv{\bfm v}     
\def\bx{\bfm x}     
\def\by{\bfm y}     
\def\bz{\bfm z}
\def\calD{{\cal  D}} 
 \def\cE{{\cal  E}}
\def\calG{{\cal  G}} \def\cG{{\cal  G}}
\def\calH{{\cal  H}} 
\def\calI{{\cal  I}} \def\cI{{\cal  I}}
\def\calL{{\cal  L}} 
\def\calN{{\cal  N}} 
\def\calP{{\cal  P}}
\def\calU{{\cal  U}}
\newcommand{\bfsym}[1]{\ensuremath{\boldsymbol{#1}}}
 \def\bbeta{\bfsym \beta}
 \def\bgamma{\bfsym \gamma}             
 \def\bdelta{\bfsym {\delta}}           
 \def\bmu{\bfsym {\mu}}                 
 \def\btheta{\bfsym {\theta}}
              \def\bSigma{\bfsym \Sigma}
\def\one {\mathbbm{1}}
                  \def\hbbeta{\hat{\bfsym \beta}}
               \def\hbtheta {\hat{\bfsym {\theta}}}
\providecommand{\abs}[1]{\left\lvert#1\right\rvert}
\providecommand{\norm}[1]{\left\lVert#1\right\rVert}
\providecommand{\angles}[1]{\left\langle #1 \right\rangle}
\providecommand{\paren}[1]{\left( #1 \right)}
\providecommand{\brackets}[1]{\left[ #1 \right]}
\providecommand{\braces}[1]{\left\{ #1 \right\}}
\providecommand{\defeq}{:=}
\DeclarePairedDelimiterX{\infdivx}[2]{(}{)}{%
  #1 \; \delimsize\| \; #2%
}
\DeclareMathOperator{\sgn}{sgn}
\DeclareMathOperator{\Var}{Var}
\DeclareMathOperator{\Reg}{Reg}
\DeclareMathOperator{\reg}{reg}
\newtheorem{definition}{Definition}
\newtheorem{lemma}[definition]{Lemma}
\newtheorem{theorem}[definition]{Theorem}
\newtheorem{remark}{Remark}
\theoremstyle{definition}
\newcommand{\bigO}[1]{ \mathcal{O} \left( #1 \right) }
\definecolor{royalpurple}{rgb}{0.47, 0.32, 0.66}
\definecolor{royalpurple}{rgb}{0.47, 0.32, 0.66}
\def\beq{\begin{equation}}
\def\eeq{\end{equation}}
\def\bet{\begin{theorem}}
\def\eet{\end{theorem}}
\def\bel{\begin{lemma}}
\def\eel{\end{lemma}}
\def\cond{\;|\;}
\newcommand{\oC}{\overline{C}}
\DeclareMathOperator*{\argmax}{\arg\!\max}
\definecolor{DSgray}{cmyk}{0,1,0,0}
\begin{document}
\pagenumbering{arabic}

\def\spacingset#1{\renewcommand{\baselinestretch}%
{#1}\small\normalsize} \spacingset{1}

%---------------------------------------------------
%
% Title page
%
% https://www.math.uh.edu/~torok/math_6298/latex/MANUALS/NASA_Hypertext-Help-with-LaTeX/latex/ltx-407.html
%

\def\TITLE{High-Dimensional Linear Bandits under Stochastic Latent Heterogeneity}

\if1\blind
{
\title{\bf \TITLE}
\author{
Elynn Chen$^\diamondsuit$ \hspace{2ex}
Xi Chen$^\natural$%\thanks{Corresponding author. The authors gratefully acknowledge \textit{please remember to list all relevant funding sources in the unblinded version}} 
\hspace{2ex}
Wenbo Jing$^\sharp$ \hspace{2ex}
Xiao Liu$^\flat$ \hspace{2ex} \\ \normalsize
\medskip
$^{\diamondsuit,\natural,\sharp,\flat}$ Stern School of Business, New York University 
}
\date{}
\maketitle
} \fi

\if0\blind
{
  \bigskip
  \bigskip
  \bigskip
  \begin{center}
    {\LARGE\bf \TITLE}
\end{center}
  \medskip
} \fi

\bigskip
\begin{abstract}
\spacingset{1.2}
This paper addresses the critical challenge of {\it stochastic latent heterogeneity} in online decision-making, where individuals' responses to actions vary not only with observable contexts but also with unobserved, randomly realized subgroups. Existing data-driven approaches largely capture observable heterogeneity through contextual features but fail when the sources of variation are latent and stochastic. We propose a {\it latent heterogeneous bandit framework} that explicitly models probabilistic subgroup membership and group-specific reward functions, using promotion targeting as a motivating example. Our phased EM-greedy algorithm jointly learns latent group probabilities and reward parameters in high dimensions, achieving optimal estimation and classification guarantees.

Our analysis reveals a new phenomenon unique to decision-making with stochastic latent subgroups: randomness in group realizations creates irreducible classification uncertainty, making sub-linear regret against a fully informed strong oracle fundamentally impossible. We establish matching upper and minimax lower bounds for both the strong and regular regrets, corresponding, respectively, to oracles with and without access to realized group memberships. The strong regret necessarily grows linearly, while the regular regret achieves a minimax-optimal sublinear rate. These findings uncover a fundamental stochastic barrier in online decision-making and point to potential remedies through simple strategic interventions and mechanism-design-based elicitation of latent information.
\end{abstract}

\noindent%
{\it Keywords}: High-dimensional estimation; Stochastic bandits; Latent structure modeling; Logistic models; Minimax regret
\vfill

%-------
%
%  main text
%
%\begin{singlespace}
%\tableofcontents
%\end{singlespace}

\spacingset{1.78}

%!TEX root = 0-main.tex

\section{Introduction}  \label{sec:intro}

Latent heterogeneity plays a critical role across domains such as economics \citep{blundell2007labor,bonhomme2022discretizing}, business \citep{cherry1999unobserved,lewis2024latent}, and healthcare \citep{zhou2018using,chen2024reinforcement}. It arises from the inherent complexity of human behavior, where individuals respond differently to the same policy or action due to unobserved psychological, experiential, or social factors. For example, consumer preferences often extend beyond measurable demographics, reflecting hidden attitudes or experiences that shape decision outcomes. Existing data-driven decision frameworks largely capture only observable heterogeneity through contextual features \citep{chen2021statisticala,xu2022langevin,chen2024nearly,xu2025linear,chen2024reinforcement,chen2025transfer,chai2025deep,chai2025transfer,wang2024online,zhou2025stochastic}, leaving latent and stochastic sources of variation largely unexplored.

To address this gap, we develop a new framework that explicitly accounts for {\it stochastic latent heterogeneity} in decision-making. Unlike traditional contextual bandits that assume fully observable features, our model allows each decision instance to arise from an {\it unobserved, randomly realized subgroup} that influences the reward structure. This stochastic latent subgroup formulation captures the randomness of human responses beyond measurable covariates and naturally encompasses scenarios such as personalized promotion targeting, dynamic pricing, and individualized treatment assignment.

We illustrate the proposed framework through a {\it promotion targeting} example, though it extends naturally to other sequential personalization problems. Consider an e-commerce platform that, at each time point $i \in [T]$, decides which of $K$ coupons to offer to an incoming customer. The platform observes contextual information $\bz_i$ describing customer, product, and environmental features, and constructs action-specific feature vectors $\bx_{i,k} \in \mathbb{R}^d$ combining coupon and customer attributes. The observed reward $y_{i,k}$ represents the customer's spending after receiving coupon $k$. Empirical studies show that purchase responses vary not only with observable covariates but also with {\it unobserved behavioral factors} such as lifestyle or brand affinity \citep{cherry1999unobserved,bonhomme2015grouped,lewis2024latent,hess2024latent}. We model this {\it latent heterogeneity} by assuming that each customer belongs to one of two latent subgroups, $g_i \in \{1,2\}$, which can be naturally extended to any finite number of latent subgroups within our framework. Our real data analysis (Section \ref{sec:real}) identifies these two latent groups as high-consumption users, who tend to spend substantially more and respond strongly to cash-bonus incentives, and low-consumption users, whose spending remains relatively stable and less sensitive to promotions. These empirically discovered groups align well with economically meaningful behavioral differences.

Each subgroup generates a distinct reward response for the available actions $k\in[K]$: 
\begin{equation}\label{eqn:hetero-f}
\begin{aligned}
(y_{i,k} \cond g_i = 1) = f_1(\bx_{i,k}) + \epsilon_i , \quad\text{and}\quad
(y_{i,k} \cond g_i = 2) = f_2(\bx_{i,k}) + \epsilon_i, 
\end{aligned}
\end{equation}
where $f_1(\cdot)\ne f_2(\cdot)$ and $\epsilon_i$ is a mean-zero random noise. 
Unlike mixture or clustering bandit models with fixed group assignments \citep{zhou2016latent,gentile2017context}, we assume each subgroup is {\it stochastically realized} according to a feature-dependent probability
\begin{equation} \label{eqn:hetero-status}
\begin{aligned}
\Pr(g_i = 1 \cond \bz_i) = p(\bz_i^\top \btheta^*), \quad\text{and}\quad
\Pr(g_i = 2 \cond \bz_i) = 1-p(\bz_i^\top \btheta^*),
\end{aligned}
\end{equation}
where $p(x) = 1/\paren{1 + \exp(-x)}$.
Although models \eqref{eqn:hetero-f} and \eqref{eqn:hetero-status} resemble the high-dimensional mixed regression formulations studied in \citet{zhang2020estimation} and \cite{javanmard2025prediction}, our setting is fundamentally distinct. Those works address supervised prediction under static samples, whereas we study a sequential decision problem in which different actions $k$ are chosen adaptively at each round. This coupling of latent-structure estimation and policy optimization introduces dynamic feedback between learning and decision-making that has no analogue in the standard mixed regression framework.

The proposed latent heterogeneous bandit model, characterized by \eqref{eqn:hetero-f} and \eqref{eqn:hetero-status}, differs fundamentally from the classic stochastic bandit in two ways: the reward functions $f_1$ and $f_2$ vary across latent groups, and group memberships $g_i$ are unobservable. This combination of hidden subgroup structure and group-specific rewards creates challenges that existing methods cannot address.
Simple classification followed by group-specific bandit learning is infeasible because the latent groups are never directly observed, and standard clustering techniques combined with group-specific bandit algorithms are inadequate because cluster membership may not align with the underlying reward associations, the key source of heterogeneity in decision outcomes.

In contrast to prior online clustering bandits \citep{gentile2014online,zhou2016latent,gentile2017context}, which assume {\it fixed but unknown} group assignments, our model allows group membership to be {\it stochastic and feature-dependent}. 
This probabilistic structure introduces an irreducible misclassification rate, rendering sub-linear strong regret theoretically unattainable. Moreover, their graph- and set-based clustering methods on coefficients rely on accurate per-user estimation before clustering, requiring many samples per user and performing poorly in high dimensions. Our framework departs from this paradigm by modeling latent heterogeneity at the population level and employing a regularized Expectation–Maximization (EM) procedure to jointly estimate subgroup probabilities and high-dimensional reward parameters without per-user pre-estimation.

By addressing these challenges, we make three main contributions.
First, we introduce a {\it new modeling framework} for online decision-making under {\it stochastic} latent heterogeneity, where subgroup memberships are probabilistically realized rather than fixed. This framework provides a principled foundation for analyzing sequential decisions when individuals with similar observable contexts may respond differently due to unobserved, random factors.

Second, focusing on the linear reward setting, we develop a phased EM–greedy algorithm that jointly learns latent group probabilities and group-specific reward parameters in high dimensions. The method integrates regularized estimation and sequential decision-making, yielding high-probability guarantees for parameter estimation and group classification.

Third, we propose new theoretical constructs (strong and regular oracles/regrets) that distinguish between policies with and without access to realized latent groups. We derive corresponding upper and minimax lower bounds for both regrets, revealing a {\it new fundamental phenomenon}: due to the randomness of subgroup realization, strong regret cannot be made sub-linear by algorithmic improvement. This stochastic barrier exposes an intrinsic limit of learning under latent uncertainty and motivates new directions that combine mechanism design and strategic information elicitation to achieve further performance gains.

\subsection{Related Works}\label{sec:related-works}

Our work intersects online stochastic contextual bandits, statistical learning, and decision-making under latent heterogeneity. We review the most relevant literature and distinguish our contributions.

\smallskip
\noindent
\textbf{Online Stochastic Contextual Bandits.}
The bandit problem has been extensively studied across machine learning and statistics \citep{li2019dimension,xu2022langevin,bian2025off,ren2024dynamic,chen2025express,xu2025multitask,huang2025optimal};
see \cite{lattimore2020bandit} for a comprehensive review. Contextual bandits incorporate additional information to predict action quality \citep{auer2002using,dani2008stochastic,li2010contextual,chu2011contextual,ji2022risk}. While adversarial settings achieve $\bigO{d\sqrt{T}}$ regret \citep{abbasi2011improved}, stochastic settings -- suitable for applications like news recommendation and clinical trials -- can improve bounds dependent on $T$ from $\sqrt{T}$ to $\log(T)$ for homogeneous bandits.

In low dimensions, \cite{goldenshluger2013linear} achieved $\bigO{d^3\log(T)}$ regret, though this becomes unfavorable as the dimension increases. For high-dimensional settings with sparsity $s$ ($s \ll d$), several approaches have emerged: \cite{bastani2020online}'s LASSO bandit achieved $\bigO{K s^2 \log^2(dT)}$, %with a $\bigO{s^2\log(d)\log(T)}$ exploration stage; 
\cite{wang2024online}'s MCP-based method improved this to $\bigO{Ks^2(s+\log d)\log T}$. %, and \cite{kim2019doubly}'s doubly-robust approach reached $\bigO{s\log(dT)\sqrt{T}}$. %with $\bigO{\sqrt{T\log(dT)\log(T)}}$ exploration. 
%These methods require known sparsity $s$. 
In contrast, \cite{oh2020sparsity} proposed a sparse-agnostic approach that achieves %$\bigO{s^2\log(d) + s\sqrt{T\log(dT)}}$, improvable to 
$\bigO{\sqrt{sT\log(dT)}}$ under restricted eigenvalue conditions. 
%\cite{ariu2020thresholded}'s Threshold LASSO achieves $\bigO{\log(d) + \sqrt{T\log(T)}}$, reducible to $\bigO{\log(d) + \log(T)}$ under margin conditions.

All of the aforementioned studies work within the classic stochastic bandit framework, which does not address the widely encountered setting of latent heterogeneity in business and economics. When such unobserved heterogeneity is ignored, a single linear expectation becomes misspecified, leading to biased parameter estimates and persistent per-round regret.
In contrast, our proposed latent heterogeneous bandit model explicitly incorporates an unobserved subgroup structure. The proposed algorithm is designed to jointly learn both the latent group memberships and the corresponding group-specific parameters, thereby mitigating model misspecification and enabling more accurate decision-making.

\smallskip
\noindent
\textbf{Statistical learning with Latent heterogeneity.}
In linear regression settings, researchers have explored two major approaches to latent heterogeneity. The non-parametric approach employs grouping penalization on pairwise differences \citep{shen2010grouping,ke2013homogeneity,ma2017concave}, avoiding distributional assumptions but limiting predictive capabilities for new samples. For prediction-oriented tasks, mixture-model-based approaches have shown greater promise. Key developments include: rigorous EM algorithm guarantees for symmetric mixed linear regressions (MLR) where the mixing proportion is known to be $1/2$ \citep{balakrishnan2017statistical}, efficient fixed-parameter algorithms \citep{li2018learning}, computational-statistical tradeoffs \citep{fan2018curse}, improved EM convergence analysis \citep{mclachlan2019finite,klusowski2019estimating}, robust estimation under corruptions \citep{shen2019iterative}, high-dimensional MLR with unknown but constant mixing proportions \citep{zhang2020estimation}, and convergence analysis for federated learning \citep{su2024global, niu2024collaborative}. Our model \eqref{eqn:hetero-f} and \eqref{eqn:hetero-status} extends this literature by addressing variable mixing proportions in high-dimensional MLR.

Compared with high-dimensional mixed regression models with feature-dependent mixing probabilities \citep{zhang2020estimation,javanmard2025prediction}, our setting is more difficult: those works study offline prediction with static samples, whereas we must integrate estimation with sequential decision-making, where even minor misclassification errors accumulate into long-run regret. This coupling motivates two distinct regret notions (i.e., strong and regular regret) and their corresponding minimax analyses that expose the fundamental performance limits under stochastic latent groups.

\smallskip
\noindent
\textbf{Decision with Heterogeneity.}
Research addressing learning and decision-making with heterogeneity remains relatively sparse. Notable work spans several domains: personalized dynamic pricing with high-dimensional features and heterogeneous elasticity \citep{ban2021personalized}; regime-switching bandits with temporal heterogeneity \citep{cao2019nearly,zhou2021regime}; and convergence analysis of Langevin diffusion under mixture distributions \citep{dong2022spectral}, where multiple density components can significantly impact sampling efficiency. In sequential decision settings, recent work has explored policy evaluation and optimization with latent heterogeneity in reinforcement learning \citep{chen2024reinforcement,bian2025off}. However, these approaches focus primarily on multi-stage aspects of reinforcement learning, leaving unexplored the fundamental challenge that latent heterogeneity poses to the exploration-exploitation trade-off. Our work addresses this gap through the lens of the bandit problem, essentially a one-step reinforcement learning.

\subsection{Notations and Organization}\label{sec:notations}

For a positive integer $n$, let $[n] := {1,\dots,n}$. For any vector $\bv$, $\norm{\bv}_0$, $\norm{\bv}_1$, and $\norm{\bv}_2$ denote the $\ell_0$ (number of non-zero elements), $\ell_1$, and $\ell_2$ norms respectively. For a matrix $\bA$, $\lambda_{\min}(\bA)$ and $\lambda_{\max}(A)$ denote its minimum and maximum eigenvalues. For positive sequences ${a_n}$ and ${b_n}$, we write $a_n \lesssim b_n$, $a_n=\bigO{b_n}$, or $b_n=\Omega(a_n)$ if there exists $C > 0$ such that $a_n \leq Cb_n$ for all $n$. We write $a_n \asymp b_n$ if $a_n \lesssim b_n$ and $b_n \lesssim a_n$.

The remainder of this paper is organized as follows. Section \ref{sec:model} formulates our latent heterogeneous bandit model and defines two types of regret measures. Section \ref{sec:method} presents our proposed methodology. Section \ref{sec:theory} establishes theoretical guarantees, including estimation error bounds, misclassification rates, and minimax optimal regret bounds. Sections \ref{sec:numerical} validate our approach through simulation studies and an empirical application using cash bonus data from a mobile commerce platform. Section \ref{sec:conclusion} concludes with discussions.

\section{Problem Formulation} \label{sec:model}

In this section, we formulate the linear bandits problem under latent heterogeneity. Section \ref{sec:problem} introduces the latent heterogeneous linear bandit model, which extends the classical stochastic linear bandit setting by incorporating unobserved group structures. Section \ref{sec:two-type-regret} defines two types of regret—strong regret and regular regret—that evaluate the performance of a policy in the presence of latent heterogeneity.

\subsection{Latent Heterogeneous Linear Bandits}\label{sec:problem}

The latent heterogeneous bandit model \eqref{eqn:hetero-f} and \eqref{eqn:hetero-status} introduced at the beginning of this paper is a general framework that allows for arbitrary functional forms of the mean rewards $f_1(\bx)$ and $f_2(\bx)$. We focus on linear functional form in this work and leave non-linear and non-parametric function approximation for future research.
Without loss of generality, we consider the setting for two latent subgroups, as the extension to any known finite number of latent subgroups follows naturally. 
Each customer $i\in[T]$ is characterized by a customer feature $\bz_i\in\RR^{d'}$. 
For any customer $i$, there are $K$ possible arms (coupons) to offer. 
The combined features of customer $i$ and an arm $k$ are denoted as $\bx_{i,k}\in\RR^d$. 
The latent heterogeneous linear bandits are characterized by
\begin{equation} \label{eqn:lhlb}
\begin{aligned}
&\text{(Subgroup model):} & \Pr(g_i = 1 \cond \bz_i) = p(\bz_i^\top \btheta^*), \quad
\Pr(g_i = 2 \cond \bz_i) = 1-p(\bz_i^\top \btheta^*), \\
&\text{(Reward model):} & (y_{i,k} \cond g_i = 1) = \angles{\bx_{i,k}, \bbeta_1^*} + \epsilon_i , \quad
(y_{i,k} \cond g_i = 2) = \angles{\bx_{i,k}, \bbeta_2^*} + \epsilon_i, 
\end{aligned}
\end{equation}
where $\bbeta_1^*\ne\bbeta_2^*$, $p(x) = 1/\paren{1 + \exp(-x)}$, and $\epsilon_i \sim \calN(0, \sigma^2)$. We refer to the two equations as the ``subgroup model'' and the ``reward model.''

For each customer $i$, while the contextual features $\big\{ \{\bx_{i,k}\}_{k\in[K]}, \; \bz_i \big\}$ are observable, the true group membership $g_i$ remains unknown. 
The objective of the platform is to select one coupon $k\in[K]$ for each customer $i$ to maximize the aggregated rewards across all $T$ customers. 
Our goal is to design a sequential decision rule (policy) $\pi$ that maximizes the expected cumulative reward over the time horizon while simultaneously estimating the model parameters and predicting the latent group $g_i$. 
%Specifically, given $\big\{ \{\bx_{i,k}\}_{k\in[K]}, \; \bz_i \big\}$, the platform uses {\red algorithm} to obtain estimators ($\hat{\btheta}, \hat{\bbeta}_1, \hat{\bbeta}_2$), predict the subgroup membership (or environmental status) $\hat{g}_i$, and prescribe an action $\hat{a}_i\in[K]$. %according to 
%\begin{equation}
%\hat{a}_i = \underset{k \in [K]}{\arg \max} \; \angles{\bx_{i,k}, \hat\bbeta_{\hat g_i}}. 
%\end{equation}

\begin{remark}
The proposed model \eqref{eqn:lhlb} can be naturally extended to settings with more than two latent subgroups. 
Specifically, let $g_i \in \{1, \ldots, G\}$ denote the unobserved group membership, where 
$
\Pr(g_i = g \mid \bz_i) = \exp(\bz_i^{\top}\btheta_g^*) / \sum_{g=1}^G \exp(\bz_i^{\top}\btheta_g^*)$ for $ 
g = 1, \ldots, G$, and $
(y_{i, k} \mid g_i=g) = \bx_{i, k}^\top \bbeta_g^* + \epsilon_i$.
The parameters $\big\{(\btheta_g^*, \bbeta_g^*)\big\}_{g \in [G]}$ can be estimated 
via a regularized multi-class EM algorithm similar to Algorithm \ref{alg:em}. However, this generalization is technically straightforward and does not yield new conceptual insights. In fact,
the binary model already captures the key challenges of latent heterogeneity, 
while larger $G$ mainly increases notational complexity. 
Therefore, we focus on the case $G=2$ throughout the paper for clear presentation.

The model can also be generalized beyond the linear setting. 
In particular, one may consider a generalized linear formulation
$
\mathbb{E}[y_{i, k} \,|\,\bx_{i, k}, g_i] = h^{-1}(\bx_{i, k}^\top \bbeta_{g_i}^*),
$
where $h(\cdot)$ is a known link function.
Analyzing this model would require 
new techniques for high-dimensional generalized EM estimation 
and is therefore left for future research.
\end{remark}

\subsection{Two Types of Regrets under Stochastic Latent Heterogeneity}\label{sec:two-type-regret}

To evaluate the performance of any policy $\pi$, we must account for an important source of randomness: the subgroup model in \eqref{eqn:lhlb}  only specifies probabilities of group assignments rather than deterministic membership. 
As a result, when a customer $i$ arrives, there exist two different types of oracle: (1) the ``ex-post'' oracle who is able to precisely predict the true realized group membership $g_i$, and (2) the ``ex-ante'' oracle who knows the true parameter $\btheta^*$ of the subgroup model and thus knows the group probability $p(\bz_i^{\top}\btheta^*)$. 

This distinction gives rise to two types of regret measures when comparing against optimal policies derived from the two types of oracles. 
Let $\pi^{*}$ denote the {\it strong oracle rule}, which ``knows'' not only the true parameters $\bbeta_1^*$, $\bbeta_2^*$, and $\btheta^*$, but also the realized group $g_i$ beyond the probabilistic structure of the subgroup model. 
For each customer $i$, the strong oracle rule prescribes
\begin{equation}
a_i^{*} = \underset{k \in [K]}{\arg \max} \; \angles{\bx_{i,k}, \bbeta^*_{g_i}}.
\end{equation}
Alternatively, we let $\tilde{\pi} $ denote the  {\it regular oracle rule}, which ``knows'' the true parameters $\bbeta_1^*$, $\bbeta_2^*$, and $\btheta^*$, but {\em not} the realized group $g_i$. 
For each customer $i$, the regular oracle rule prescribes
\begin{equation}
\tilde{a}_i = \underset{k \in [K]}{\arg \max} \; \angles{\bx_{i,k}, \bbeta^*_{\tilde g_i}},
\end{equation}
where $\tilde g_i$ is estimated group using the oracle parameter $\btheta^*$  according to the decision rule in \eqref{eqn:lhlb}, i.e., $\tilde g_i = 1$ if $p(\bz_i^{\top}\btheta^*) \geq 1/2$ and $\tilde g_i=2$ otherwise.

To evaluate any allocation policy $\hat{\pi}$, we measure its performance relative to the two oracle rules.
Let $\hat{a}_i$ denote the action chosen by policy $\hat{\pi}$ for customer $i$. We define the {\em instant strong regret} comparing against the strong oracle,
\begin{equation}
\mathrm{reg}_i^{*} = %\EE\brackets{ 
	\underset{k\in [K]}{\max}\; \angles{\bx_{i,k}, \bbeta^*_{g_i}} - \angles{\bx_{i,\hat a_i}, \bbeta^*_{g_i}}, 
\end{equation}
and the  {\em instant regular regret} comparing against the regular oracle,
\begin{equation}  
\tilde{\mathrm{reg}}_i = %\EE\brackets{
	 \angles{\bx_{i,\tilde a_i}, \bbeta^*_{g_i}} - \angles{\bx_{i,\hat a_i}, \bbeta^*_{g_i}}.
\end{equation}
The expected cumulative strong and regular regret at time $T$ are respectively defined as
\begin{equation}
{\mathrm{Reg}}^*(T) = \EE\brackets{\sum_{i=1}^{T} \mathrm{reg}^*_i},  \quad\text{and}\quad
\tilde{\mathrm{Reg}} (T) = \EE\brackets{\sum_{i=1}^{T} \tilde{\mathrm{reg}}_i},
\end{equation}
where the expectation is taken over the randomness in the feature vectors $\{\bx_{i,k}\}_{1\le k \le K}$ and $\bz_i$, group membership $g_i$, and the stochastic noise $\epsilon_i$. 
Our objective is to develop a policy that minimizes both types of expected cumulative regrets. 

\section{Learning and Decision under Latent Heterogeneity} \label{sec:method}

In this section, we present our methodological framework for addressing latent heterogeneity in the linear bandits problem. We first propose our phased learning algorithm that accounts for the latent group structures in Section \ref{sec:phased}, followed by a tailored expectation-maximization (EM) algorithm for parameter estimation in Section \ref{sec:EM}.

\subsection{Phased Learning and Greedy Decisions}\label{sec:phased}

\begin{algorithm}[ht!]
	\caption{Phased Learning and Greedy Decision under Latent Heterogeneity}
	\label{alg:em-bandit}
	\SetKwInOut{Input}{Input}
	\SetKwInOut{Output}{Output}
	\Input{Features $\{\bx_{i, k}, k \in [K]\}$ and $\bz_i$ for sequentially arriving customers $i$, and the minimal episode length $n_0$.}
	
	%	\Output{Estimated optimal $a_i$ for $i\ge 1$.}
	
	%Initialize $\tau \leftarrow 1$, $\ba_1 \leftarrow \cU([K])$, $\hat{\btheta}$, $\hat{\bbeta}_1 \leftarrow 0$, $\hat{\bbeta}_1 \leftarrow 0$.
	
	\For{each episode $\tau=0, 1, 2, \dots$}{
		Set the length of the $\tau$-th episode as $N_{\tau} = 2^{\tau}n_0$ and define an index set $\calI_{\tau}$ with cardinality $N_{\tau}$;
		
		\If{$\tau=0$}{
			\For{$i\in\cI_{\tau}$}{
				Receive features $\{\bx_{i,k}, \bz_{i}\}_{k \in [K]}$ ;\\
				Select $a_i \sim {\rm Uniform}([K])$;\\
				Receive the reward $y_i$;
			}
		}
		\Else{
			
			%\tcc{Estimate parameter estimators using data from the previous episode.}
			
			Call Algorithm \ref{alg:em} ``Learning under Latent Heterogeneity'' to obtain $\hat{\btheta}^{(\tau)}$, $\hat{\bbeta}_1^{(\tau)}$, and $\hat{\bbeta}_2^{(\tau)}$ using data collected in the $(\tau - 1)$ -th episode, i.e.,  $\calD_{\tau-1}$;
			
			%\tcc{At each time point of the $\tau$-th episode, act greedily.}
			\For{$i\in\cI_{\tau}$}{
				
				Receive features $\{\bx_{i,k}, \bz_{i}\}_{k \in [K]}$;\\
				Predict the group membership $\hat{g}_i = 1$ if $\bz_i^{\top}\hat{\btheta}^{(\tau)}\ge 0$ and $\hat{g}_i = 2$ otherwise;
				
				Prescribe the optimal action based on estimation and prediction, that is, 
				\[
				a_i=\argmax_{k \in [K]}\; \angles{\bx_{i,k}, \hat{\bbeta}_{\hat{g}_i}^{(\tau)}};
				\]
				Receive the reward $y_i$;
			}
		}
		Collect the dataset $\calD_{\tau}=\{y_i, \bx_{i, a_i}, \bz_{i}\}_{i \in \cI_{\tau}}$;
	}
\end{algorithm}

Our proposed method exploits a key structural property of model \eqref{eqn:lhlb}: customers within the same latent group share common reward parameter $\bbeta^*_1$ or $\bbeta^*_2$ for different actions. The reward observed from taking an action provides information about the rewards of other potential actions due to the shared parametric structure. We leverage this property to develop an exploration-free algorithm that achieves minimax optimal regret.

The proposed Algorithm \ref{alg:em-bandit} implements a phased learning approach that divides the time horizon into non-overlapping episodes, indexed by $\tau\ge 0$, and let $i \ge 1$ index sequentially arriving customers.
%The length of the $\tau$-th episode is denoted by $N_{\tau}$. 
In the initial episode 0, the actions are uniformly selected from the $K$ arms, which generates the necessary samples for the learning procedure in episode 1.
%When $\tau\ge 1$, the actions are chosen greedily according to model \eqref{eqn:lhlb}, the predicted subgroup membership $\hat{g}_i$, and the estimated model parameters $\hat{\btheta}$, $\hat{\bbeta}_1$, and $\hat{\bbeta}_2$.

For subsequent episodes ($\tau \geq 1$), model parameters $\hat{\btheta}$, $\hat{\bbeta}_1$, and $\hat{\bbeta}_2$ are updated at the start of each episode using Algorithm \ref{alg:em}, which employs expectation-maximization (EM) iterations and is detailed in Section \ref{sec:EM}. The updates utilize only the samples $\calD_{\tau-1}$ collected from the previous episode.%, which ensures that the actions chosen in each episode are independent of the model noises in the same episode. %which enables us to decouple the action selection from the noise terms, yielding sharper concentration bounds in our theoretical analysis.

With the updated parameter estimates, actions are chosen greedily by first predicting the customer's group membership $\hat{g}_i$ using current $\btheta^*$ estimates, then selecting the action that maximizes expected reward under the predicted group and current $\hbbeta_{\hat{g}_i}$ estimates.
The length of each episode, denoted by $N_{\tau}$,  increases geometrically as $N_{\tau} = n_0 2^{\tau}$, allowing for a more accurate estimate as the episodes progress.
While the algorithm terminates at the end of the horizon (time $T$), it does not require prior knowledge of $T$.

\subsection{Learning under Latent Heterogeneity and High-dimensionality} \label{sec:EM}

We now present the details of the learning procedure under latent heterogeneity in Algorithm \ref{alg:em}. 
Given the samples $\calD_{\tau-1}=\braces{ y_i, \bx_i := \bx_{i, a_i}, \bz_i}_{i \in \calI_{\tau-1}}$ collected in episode $\tau-1$, our goal is to estimate the unknown parameters in model \eqref{eqn:lhlb} via maximum likelihood estimator (MLE).
For notational clarity, we denote the index set of the samples input to Algorithm \ref{alg:em} as $\cI_{\tau-1}$ with size $N_{\tau-1}=\abs{\cI_{\tau-1}}$. 
The MLE maximizes the log-likelihood of the observed data $\calD_{\tau-1}$: 
\begin{equation}\label{eqn:llklh}
	\ell_{N_{\tau-1}}(\bgamma) = \frac{1}{N_{\tau-1}}\sum_{i \in \cI_{\tau-1}}\log\brackets{p\paren{\bz_i^\top\btheta} \cdot
		\phi\paren{ \frac{y_i - \bx_i^\top\bbeta_1}{\sigma} }
		+
		\paren{1-p\paren{\bz_i^\top\btheta}} \cdot
		\phi\paren{ \frac{y_i - \bx_i^\top\bbeta_2}{\sigma}}},
\end{equation}
where we denote the unknown parameter by $\bgamma=(\btheta,\bbeta_1,\bbeta_2)$ and the standard normal density function by $\phi(\cdot)$. 
Directly searching for the maximizer of $\ell_{N_{\tau-1}}(\bgamma)$ is computationally intractable due to its non-convexity. Moreover, in the early episodes ($\tau$ is small), the dimension $d$ can substantially exceed the available sample size, making the estimation problem statistically infeasible.

\begin{algorithm}[t!]
	\caption{Learning under Latent Heterogeneity in Episode $\tau$ ($\tau \geq 1$)}
	\label{alg:em}
	\SetKwInOut{Input}{Input}
	\SetKwInOut{Output}{Output}
	\Input{Batch data $\calD_{\tau-1}=\braces{y_i, \bx_i := \bx_{i, a_i}, \bz_i}_{i \in \cI_{\tau-1}}$, batch size $N_{\tau-1} = \abs{\cI_{\tau-1}}$, initial estimators $\bgamma^{(\tau, 0)}=\big(\btheta^{(\tau, 0)}, \bbeta_1^{(\tau, 0)},\bbeta_2^{(\tau, 0)}\big)$, maximum number of iterations $t_{\tau, \max}$, regularization parameters $\big\{\lambda_{n_{\tau}}^{(t)}\big\}_{t \in [t_{\tau, \max}]}$. \\
		%constants $\kappa\in(0,1)$ and $C_{\lambda}>0$. 
	}
	
	\Output{estimates $\hat{\bgamma}^{(\tau)}=\big(\hat{\btheta}^{(\tau)},\hat{\bbeta}_1^{(\tau)},\hat{\bbeta}_2^{(\tau)}\big)$.}
	
	Split $\calI_{\tau-1}$ into $t_{\tau, \max}$ subsets $\big\{\cI_{\tau-1}^{(t)}\big\}_{t \in [t_{\tau, \max}]}$, each of size $n_{\tau}=N_{\tau-1}/t_{\tau, \max}$; %\tcp{for theoretical consideration}
	
	\For{$t=1, \dots, t_{\tau, \max}$}{
		
		%\tcc{Update pseudo-soft-labels:}
		
		For each $i\in\calI_{\tau-1}^{(t)}$, calculate $\omega_{i}^{(\tau, t)} = \omega\paren{y_i, \bx_i,  \bz_i; \bgamma^{(\tau, t-1)}}$, where $\omega$ is defined by
		\begin{equation} \label{eqn:omega-main}
			\omega(y, \bx, \bz ; \bgamma) 
			= \frac{p(\bz^\top\btheta) \cdot
				\phi\paren{\frac{y - \bx^\top\bbeta_1}{\sigma}}}{p(\bz^\top\btheta) \cdot \phi\paren{\frac{y - \bx^\top\bbeta_1}{\sigma}} 
				+ \paren{1-p(\bz^\top\btheta)} \cdot \phi\paren{\frac{y - \bx^\top\bbeta_2}{\sigma}}},
		\end{equation}
		with $\bgamma=(\btheta,\bbeta_1,\bbeta_2)$,  $p(x) = 1/\paren{1 + \exp(-x)}$, and $\phi(x)=\frac{1}{\sqrt{2\pi}}e^{-x^2/2}$. 
		
		Update each elements of $\bgamma^{(\tau, t)}$ by
		\begin{equation} \label{eqn:coeff-updates}
			\begin{aligned}
				\bbeta_{1}^{(\tau, t)} & := \underset{\bbeta_1}{\arg\!\min} \;  -Q_{n_{\tau}1}\paren{\bbeta_1 \cond \bgamma^{(\tau, t-1)}}  
				+ \lambda_{n_{\tau}}^{(t)} \norm{\bbeta_1}_1,  \\
				\bbeta_2^{(\tau, t)} & := \underset{\bbeta_2}{\arg\!\min} \; -Q_{n_{\tau}2}\paren{\bbeta_2 \cond \bgamma^{(\tau, t-1)}}  
				+ \lambda_{n_{\tau}}^{(t)} \norm{\bbeta_2}_1,   \\
				\btheta^{(\tau, t)} & := \underset{\btheta}{\arg\!\min} \; -Q_{n_{\tau}3}\paren{\btheta \cond \bgamma^{(\tau, t-1)}} 
				+ \lambda_{n_{\tau}}^{(t)} \norm{\btheta}_1,
			\end{aligned}
		\end{equation}
		where $Q_{1n_{\tau}}$, $Q_{2n_{\tau}}$, and $Q_{3n_{\tau}}$ are defined in \eqref{eqn:Qn123} with the $t$-th subset $\cI_{\tau-1}^{(t)}$.
		
		%\tcc{Update pseudo-soft-labels:}
		%Update  $\omega_{i}^{(t+1)} = \omega\paren{\bx_i, y_i, \bz_i; \bgamma^{(t+1)}}$. 
	}
	Assign $\hat{\bgamma}^{(\tau)} = \bgamma^{(\tau, t_{\tau, \max})}$. 
\end{algorithm}

Algorithm \ref{alg:em} addresses these challenges through an EM algorithm, handling both the non-convexity of $\ell_{N_{\tau-1}}(\bgamma; y_i,\bx_i,\bz_i)$ and the high-dimensionality of the parameter space.
The EM algorithm is essentially an alternating maximization method, iterating between identifying the latent group membership $\{g_i\}$ and estimating the unknown parameter $\bgamma=(\btheta,\bbeta_1,\bbeta_2)$.
To ensure independence between samples across iterations, we first partition the entire sample set into disjoint subsets. 
In the E-step of the $t$-th iteration during episode $\tau$, given the parameters $\bgamma^{(\tau, t-1)}=(\btheta^{(\tau, t-1)},\bbeta_1^{(\tau, t-1)},\bbeta_2^{(\tau, t-1)})$ estimated from the previous iteration, the conditional probability of the $i$-th sample belonging to group 1 given the observed data is 
\begin{equation}
	\omega_i^{(\tau, t)} = \omega_i(\bgamma^{(\tau, t-1)})
	= \PP(g_i=1\cond y_i, \bx_i, \bz_i;\bgamma^{(\tau, t-1)})  =  \omega(y_i, \bx_i,  \bz_i; \bgamma^{(\tau, t-1)}),    
\end{equation}
where $\omega(y, \bx, \bz; \bgamma)$ is defined in \eqref{eqn:omega-main}. 
Let $\ell\paren{y_i, \bx_i,  \bz_i, g_i; \bgamma}$ be the log-likelihood of complete data where $g_i$ is also observable. 
Thus, conditional on the current estimate $\bgamma^{(\tau, t-1)}$, the conditional log-likelihood function can be calculated as
\begin{equation}
	\begin{aligned}
		Q_{n_{\tau}}(\bgamma \cond \bgamma^{(\tau, t-1)}) & := \; 
		\sum_{i \in \cI_{\tau}^{(t)}}\EE\brackets{ \ell\paren{y_i, \bx_i, \bz_i, g_i; \bgamma} \cond y_i, \bx_i, \bz_i; \bgamma^{(\tau, t-1)}} \\
		& = \; Q_{n_{\tau}1}(\bbeta_1 \cond \bgamma^{(\tau, t-1)}) 
		+ Q_{n_{\tau}2}(\bbeta_2 \cond \bgamma^{(\tau, t-1)})
		+ Q_{n_{\tau}3}(\btheta \cond \bgamma^{(\tau, t-1)}),
	\end{aligned}
\end{equation}
where
\begin{equation} \label{eqn:Qn123}
	\begin{aligned}
		Q_{n_{\tau}1}(\bbeta_1 \cond \bgamma^{(\tau, t-1)}) & := - \frac{1}{2n_{\tau}} \sum_{i \in \cI_{\tau}^{(t)}} \omega_i^{(\tau, t)}\cdot  \frac{(y_i - \bx_i^\top\bbeta_1)^2 }{\sigma^2}, \\
		Q_{n_{\tau}2}(\bbeta_2 \cond \bgamma^{(\tau, t-1)}) & := - \frac{1}{2n_{\tau}} \sum_{i \in \cI_{\tau}^{(t)}} (1- \omega_i^{(\tau, t)}) \cdot \frac{(y_i - \bx_i^\top\bbeta_2)^2 }{\sigma^2}, \quad\text{and}\quad \\
		Q_{n_{\tau}3}(\btheta \cond \bgamma^{(\tau, t-1)}) & :=  \frac{1}{n_{\tau}} \sum_{i \in \cI_{\tau}^{(t)}} \paren{ \omega_i^{(\tau, t)} \cdot \log p(\bz_i^\top\btheta) + (1- \omega_i^{(\tau, t)}) \cdot \log(1-p(\bz_i^\top\btheta)) }.
	\end{aligned}
\end{equation}
The M-step proceeds by maximizing $Q_{n_{\tau}}(\bgamma \cond \bgamma^{(\tau, t-1)})$, which is equivalent to maximizing $Q_{n_{\tau}1}(\bbeta_1 \cond \bgamma^{(\tau,t-1)})$, $Q_{n_{\tau}2}(\bbeta_2 \cond \bgamma^{(\tau,t-1)})$, and $Q_{n_{\tau}3}(\btheta \cond \bgamma^{(\tau,t-1)})$ simultaneously. 
However, in the high-dimensional setting, direct maximization of these objectives tends to overfit the data. To address this challenge, our algorithm incorporates regularization terms $\norm{\bbeta_1}_1$, $\norm{\bbeta_2}_1$, and $\norm{\btheta}_1$ to induce sparsity in the parameter estimates.
The specific updates of the model parameters are presented in \eqref{eqn:coeff-updates}. 
%We also update $\omega_{i}^{(t+1)} = \omega\paren{\bx_i, y_i, \bz_i; \bgamma^{(t+1)}}$. 
The algorithm then proceeds iteratively, alternating between the E-step and M-step until convergence.

\section{Theoretical Analysis}\label{sec:theory}

%The classical low-dimensional setting is considered in \cite{zhu2004hypothesis}.
%We consider a high-dimensional setting. 
 
In this section, we establish theoretical guarantees for the estimation error of  $\big(\hbtheta^{(\tau)}, \hbbeta_1^{(\tau)}, \hbbeta_2^{(\tau)}\big)$ learned in the $\tau$-th episode using $N_{\tau-1}=n_02^{\tau-1}$ samples collected from the previous episode.
%Let $\calI_{\tau-1}$ be the set of time indices collected from the $(\tau-1)$-th episode. 
%We first establish the properties of estimators $\big(\hat\btheta^{(\tau)}, \hat\bbeta_1^{(\tau)}, \hat\bbeta_2^{(\tau)}\big)$ obtained  at the $\tau$-th episode. 
We begin by introducing the parameter space that characterizes the sparsity of the true parameters $( \btheta^*, \bbeta_1^*, \bbeta_2^*)$:
\begin{equation}
\btheta^*, \bbeta_1^*, \bbeta_2^* \in \Theta(d, s) = \braces{ \btheta, \bbeta_1, \bbeta_2 \in \RR^d:\norm{\btheta}_0 \le s,
\norm{\bbeta_1}_0 \le s, 
\norm{\bbeta_2}_0 \le s
},
\end{equation}
where $\btheta^*$ and $\bbeta^*$ are assumed to share the same dimension $d$ and sparsity level $s$ without loss of generality. 
Our analysis relies on the following regularity conditions:
\begin{enumerate}[label=(A\arabic*)]
\item Assume that there exists $0< \xi < 1/2$ such that $\xi < p(\bz_i^\top\btheta^*) < 1 - \xi$ or equivalently, there exists $C_{\xi} > 0$ such that $\abs{\bz_i^{\top}\btheta^*} < C_{\xi}$ for all $i$. \label{A1}
\item Assume that the initial estimators in the first episode satisfy
$\big\|\bbeta_1^{(1, 0)}-\bbeta_1^*\big\|_2 + \big\|\bbeta_2^{(1, 0)}-\bbeta_2^*\big\|_2+ \norm{\btheta^{(1, 0)}-\btheta^*}_2 \le \delta_{1, 0}$ %\min\braces{\xi, 1-\xi, \norm{\bbeta_1^*-\bbeta_2^*}_2}$ 
for a sufficiently small constant $\delta_{1, 0}$. \label{A2}
\item\label{A3} Define the signal strength $\Delta^* \defeq \norm{\bbeta_1^*-\bbeta_2^*}_2$. Assume that the signal-to-noise ratio (SNR), defined as $\Delta^*/ \sigma$, satisfies $\Delta^*/ \sigma \ge C_{\mathrm{SNR}}$ for a sufficiently large constant $C_{\mathrm{SNR}}$, where $\sigma$ is the standard deviation of the noise $\epsilon_i$. 
\item For each $k\in [K]$, assume that the i.i.d. covariates $(\bx_{i, k}, \bz_i)$ are sub-Gaussian. Moreover, let $\bSigma_{x, k}\defeq\EE[\bx_{i,k}\bx_{i, k}^{\top}]$ and $\bSigma_z\defeq\EE[\bz_{i}\bz_i^{\top}]$, and assume that there exists a costant $M>1$ such that  $1/ M < \lambda_{\min}(\bSigma_{x, k}) \le \lambda_{\max}(\bSigma_{x, k}) < M$ for all $k \in [K]$ and $1/ M < \lambda_{\min}(\bSigma_z) \le \lambda_{\max}(\bSigma_z) < M$. \label{A4}
\end{enumerate}

Assumption \ref{A1} requires non-degenerate group assignment probabilities, preventing the variance of the group indicator $g_i$ from vanishing, as $\Var(g_i) = p_i(1-p_i)$ is bounded away from zero. This condition is standard in high-dimensional logistic regression literature \citep{van2014asymptotically,abramovich2018high,athey2018approximate}.
Assumption \ref{A2} assumes the initial estimators to be within a neighborhood of the true parameters, which can be achieved through any consistent initialization procedure with sufficient samples in episode 0. 
%Due to the non-concavity of the log-likelihood, such a condition is common in the literature of mixed linear regression, {\red \citep{???}}.  
We provide a detailed initialization algorithm in Remark \ref{rmk: Initialization}.
Assumption \ref{A3} characterizes the minimal signal strength required for effective group separation, which is necessary for distinguishing different groups in mixture models \citep{loffler2021optimality, ndaoud2022sharp}. For clear presentation, we defer the explicit specification of constants $\delta_{1, 0}$ and $C_{\mathrm{SNR}}$  to Section  \ref{sec:proof-coeff-bound}  of the supplementary material.
Assumption \ref{A4}  imposes standard regularity conditions that ensure the non-singularity and upper-boundedness of the population covariance matrices.

\begin{remark}[Initialization]\label{rmk: Initialization}
	The initial estimators $\left(\btheta^{(1, 0)}, \bbeta^{(1, 0)}_{1}, \bbeta^{(1, 0)}_{2}\right)$ in the first episode can be obtained through any algorithm that generates consistent estimators for $(\btheta^*, \bbeta^*_{1}, \bbeta^*_{2})$ that satisfy Assumption \ref{A2}.  We recommend the following approach, which proceeds in three stages. First, using all the samples $\{\by_i, \bx_{i}\}_{i \in \calI^{(0)}}$ collected in episode 0, we fit a single LASSO model to select the features in $\bx$ with non-zero coefficients.  Second, we apply Gaussian Mixture clustering to $\{y_i, \bx_{i}'\}_{i\in \calI^{(0)}}$, where $\bx_{i}'$ denotes the selected features observation $i$, to cluster the initial data into two groups. Third, we use these group labels to fit a logistic regression model, obtaining the estimator $\btheta^{(1, 0)}$, and separately fit two LASSO models within the two groups to obtain  $\bbeta^{(1, 0)}_{1}$ and $\bbeta^{(1, 0)}_{2}$. Prior research has demonstrated that Gaussian Mixture clustering can achieve high accuracy of the two groups when the signal-to-noise ratio is sufficiently large  \citep{loffler2021optimality, ndaoud2022sharp}, which, combined with \ref{A3} and sufficiently large $n_0$, ensures that our initial estimators satisfy \ref{A2}. 
\end{remark}

 \subsection{Learning Performance} \label{sec:theory-learning}

Building upon the above assumptions, we present our theoretical results, which characterize the estimation error of the learned parameters.

\begin{theorem}%[Coefficients bound under the high-dimensional setting.]
	 \label{thm:coeff-bound-hd}
Suppose Assumptions \ref{A1}--\ref{A4} hold and $s^2\log d\log n_0 \lesssim n_0$. Let the initial estimators $\bgamma^{(\tau, 0)}=\hat\bgamma^{(\tau-1)}$ for $\tau \geq 2$. Furthermore, select $t_{\tau, \max}\asymp 
\log n_0$  for $\tau=1$ and $t_{\tau, \max}\asymp 
1$ for $\tau\geq 2$. By choosing appropriate regularization parameters $\big\{\lambda_{n_{\tau}}^{(t)}\big\}$, we have
\begin{equation}\label{eq:est-bound-l2}
\norm{\hat{\bbeta}_{1}^{(\tau)} - \bbeta_{1}^*}_2 + \norm{\hat{\bbeta}_{2}^{(\tau)} - \bbeta_{2}^*}_2 + \norm{\hat{\btheta}^{(\tau)} - \btheta^*}_2
\lesssim 
\sqrt{\frac{s\log d\log n_0}{N_{\tau-1}}},
\end{equation}
and
\begin{equation}\label{eq:est-bound-l1}
	\norm{\hat{\bbeta}_{1}^{(\tau)} - \bbeta_{1}^*}_1 + \norm{\hat{\bbeta}_{2}^{(\tau)} - \bbeta_{2}^*}_1 + \norm{\hat{\btheta}^{(\tau)} - \btheta^*}_1
	\lesssim 
	\sqrt{\frac{s^2\log d\log n_0}{N_{\tau-1}}},
\end{equation}
with probability at least $1-d^{-1}$, where $\big(\hat{\btheta}^{(\tau)},\hat{\bbeta}_{1}^{(\tau)},\hat{\bbeta}_{2}^{(\tau)}\big)$ are obtained from Algorithm \ref{alg:em} in the $\tau$-th episode.
\end{theorem}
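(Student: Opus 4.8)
The plan is to establish a per-iteration error recursion for the regularized EM updates in \eqref{eqn:coeff-updates}, unroll it over the $t_{\tau,\max}$ iterations inside an episode, and then induct over episodes. Write $\bgamma^{(\tau,t)}=(\btheta^{(\tau,t)},\bbeta_1^{(\tau,t)},\bbeta_2^{(\tau,t)})$, $\bgamma^*=(\btheta^*,\bbeta_1^*,\bbeta_2^*)$, and let $e^{(\tau,t)}:=\|\btheta^{(\tau,t)}-\btheta^*\|_2+\|\bbeta_1^{(\tau,t)}-\bbeta_1^*\|_2+\|\bbeta_2^{(\tau,t)}-\bbeta_2^*\|_2$. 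The goal is to show that, on a high-probability event, whenever $e^{(\tau,t-1)}\le r$ for a fixed basin radius $r$, the update satisfies
\begin{equation*}
e^{(\tau,t)}\;\le\;\kappa\,e^{(\tau,t-1)}\;+\;C_1\sqrt{\frac{s\log d}{n_\tau}},\qquad n_\tau=N_{\tau-1}/t_{\tau,\max},
\end{equation*}
with a contraction factor $\kappa<1/2$ made small by the SNR condition \ref{A3}. The sample splitting in Algorithm \ref{alg:em} is what makes this tractable: iteration $t$ uses the fresh subset $\cI_{\tau-1}^{(t)}$, independent of $\bgamma^{(\tau,t-1)}$, so the soft responsibilities $\omega_i^{(\tau,t)}=\omega(y_i,\bx_i,\bz_i;\bgamma^{(\tau,t-1)})$ may be treated as fixed weights when analyzing the M-step on that subset; and for $\tau\ge2$ the initializer $\bgamma^{(\tau,0)}=\hat\bgamma^{(\tau-1)}$ is measurable with respect to $\calD_{\tau-2}$, which is disjoint from the data $\calD_{\tau-1}$ fed to Algorithm \ref{alg:em} in episode $\tau$.

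The recursion rests on two ingredients, in the spirit of the high-dimensional regularized-EM framework. First, a \emph{population contraction / gradient-stability} statement: for the population versions of the $Q$-functions in \eqref{eqn:Qn123}, the exact M-step operator $M(\cdot)$ obeys $\|M(\bgamma)-\bgamma^*\|_2\le\kappa\|\bgamma-\bgamma^*\|_2$ on a neighborhood of $\bgamma^*$. The mechanism is that when $\bgamma$ is near $\bgamma^*$ and $\Delta^*/\sigma$ is large, $\omega(y,\bx,\bz;\bgamma)$ is close to the true posterior $\omega(y,\bx,\bz;\bgamma^*)$ — the deviation being exponentially small in the SNR and Lipschitz in $\|\bgamma-\bgamma^*\|$ — so feeding these near-oracle weights into the weighted least-squares updates for $\bbeta_1,\bbeta_2$ and the weighted logistic update for $\btheta$ yields a joint contraction; \ref{A1} keeps the logistic Hessian nondegenerate (well-conditioned $\btheta$-update) and \ref{A4} controls the design covariances. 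Second, a \emph{statistical error} bound: on the fresh subset, each M-step is a standard $\ell_1$-penalized weighted least squares (for $\bbeta_1,\bbeta_2$) or $\ell_1$-penalized logistic regression (for $\btheta$); choosing $\lambda_{n_\tau}^{(t)}$ to dominate the sup-norm of the centered score at $\bgamma^*$ — of order $\sqrt{\log d/n_\tau}$ plus a contracting term — and invoking restricted strong convexity (valid w.h.p.\ under \ref{A4} and the sub-Gaussian design, on the relevant $\ell_1$-cone) gives $\|(\text{M-step output})-M(\bgamma^{(\tau,t-1)})\|_2\lesssim\sqrt{s\log d/n_\tau}$ and, via the cone inequality $\|\Delta\|_1\le 4\sqrt s\,\|\Delta\|_2$, the matching $\ell_1$ control. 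Combining the two pieces through $e^{(\tau,t)}\le\|M(\bgamma^{(\tau,t-1)})-\bgamma^*\|_2+\|(\text{M-step})-M(\bgamma^{(\tau,t-1)})\|_2$ yields the displayed recursion; the condition $s^2\log d\log n_0\lesssim n_0$ makes the statistical term $\ll r$, so a short argument shows the iterates never leave the basin.

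Unrolling gives $e^{(\tau,t_{\tau,\max})}\le\kappa^{t_{\tau,\max}}e^{(\tau,0)}+\tfrac{C_1}{1-\kappa}\sqrt{s\log d/n_\tau}$. For $\tau=1$: $e^{(1,0)}\le\delta_{1,0}$ by \ref{A2}, $n_1\asymp n_0/\log n_0$, and with $t_{1,\max}\asymp\log n_0$ the optimization term $\kappa^{t_{1,\max}}\delta_{1,0}$ is polynomially small in $n_0$, hence dominated by $\sqrt{s\log d\log n_0/n_0}$, giving \eqref{eq:est-bound-l2} with $N_0=n_0$. For $\tau\ge2$ I induct on $\tau$: assuming \eqref{eq:est-bound-l2} at episode $\tau-1$, i.e.\ $e^{(\tau,0)}=e^{(\tau-1,t_{\tau-1,\max})}\lesssim\sqrt{s\log d\log n_0/N_{\tau-2}}=\sqrt2\,\sqrt{s\log d\log n_0/N_{\tau-1}}$ (which is $\le\delta_{1,0}$ once $n_0$ is large, keeping us in the basin), and with $t_{\tau,\max}\asymp1$ chosen so that $\kappa^{t_{\tau,\max}}\sqrt2<1/2$ and $n_\tau\asymp N_{\tau-1}$, the recursion gives $e^{(\tau,t_{\tau,\max})}\le\tfrac12 C\sqrt{s\log d\log n_0/N_{\tau-1}}+C_1'\sqrt{s\log d/N_{\tau-1}}\le C\sqrt{s\log d\log n_0/N_{\tau-1}}$ for $C$ large enough, closing the induction; \eqref{eq:est-bound-l1} follows by applying the cone inequality to the final error. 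A union bound over the $O(\log n_0)$ inner iterations and $O(\log T)$ episodes, with per-step failure probability calibrated to $d^{-1}$ up to the logarithmic factors (adjusting the constants in $\lambda_{n_\tau}^{(t)}$), delivers the stated $1-d^{-1}$ guarantee. The main obstacle is the population contraction / gradient-stability step for this specific model: because the mixing weights are themselves an unknown high-dimensional logistic function, the E-step couples errors across all three blocks $(\btheta,\bbeta_1,\bbeta_2)$, and one must quantify precisely how the SNR $\Delta^*/\sigma$ (Assumption \ref{A3}, with $C_{\mathrm{SNR}}$ large) and the radius in \ref{A2} force the joint contraction factor below $1/2$; a secondary technical point is establishing restricted strong convexity for the $\ell_1$-penalized \emph{weighted} logistic M-step with random-but-independent weights, where \ref{A1} and \ref{A4} are the essential inputs.
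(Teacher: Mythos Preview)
Your proposal is correct and follows essentially the same route as the paper: a per-iteration contraction $e^{(\tau,t)}\le \kappa\,e^{(\tau,t-1)}+C\sqrt{s\log d/n_\tau}$ obtained from (i) a population gradient-stability bound driven by the SNR in \ref{A3}, (ii) sup-norm concentration of the sample score on the fresh split, (iii) the cone inequality from the $\lambda$-choice (which, as you note, must include the contracting term $\kappa e^{(\tau,t-1)}/\sqrt s$), and (iv) restricted strong convexity under \ref{A1}, \ref{A4}; then unrolling within an episode and inducting across episodes exactly as you describe. The only cosmetic difference is that the paper never introduces a population M-operator $M(\bgamma)$ and instead works directly with the score at $\bgamma^*$ (decomposing $\nabla Q_n(\bgamma^*\mid\bgamma^{(t)})$ into a concentration piece and a population gradient-stability piece $\EE[\nabla Q(\cdot\mid\bgamma^{(t)})]-\EE[\nabla Q(\cdot\mid\bgamma^*)]$), which sidesteps having to compare a penalized sample argmin to an unpenalized population argmin; your decomposition via $M(\cdot)$ would work too but is slightly less direct in high dimensions.
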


Theorem \ref{thm:coeff-bound-hd} establishes the statistical convergence rates for both $\ell_2$ and $\ell_1$ estimation errors. The $\ell_2$ error bound scales as  $\bigO{\sqrt{s\log d \log n_0 / N_{\tau-1}}}$, while the $\ell_1$ estimation error scales as $\bigO{\sqrt{s^2\log d \log n_0 / N_{\tau-1}}}$, matching the minimax optimal rates for high-dimensional sparse estimation up to a logarithm factor in $n_0$. We note that Theorem \ref{thm:coeff-bound-hd} serves as a simplified version of our theoretical results, a complete version of which is provided as Theorem \ref{thm:1-detailed} in Section \ref{sec:proof-coeff-bound} of the supplemental materials, where we explicitly specify the choice of the regularization parameters $\big\{\lambda_{n_{\tau}}^{(t)}\big\}$, the concrete requirements for the constants $\delta_{1, 0}$ and $C_{\mathrm{SNR}}$, and more accurate probability bounds.

To better understand the convergence behavior, we provide a more detailed analysis in Remark \ref{rmk:logn0}, which elucidates the role of $t_{\tau,\max}$ and explains the origin of the additional $\sqrt{\log n_0}$ factor in the error bounds. This analysis reveals how the algorithm's phased learning structure and the initial estimation error influence the final convergence rates.

\begin{remark}\label{rmk:logn0}
	Our theoretical analysis in Section \ref{sec:proof-coeff-bound} of the supplemental material establishes a finer error bound for the estimators $\big(\hbtheta^{(\tau)}, \hbbeta_1^{(\tau)},  \hbbeta_2^{(\tau)}\big)$:
	\begin{equation}\label{eq:est-bound-t}
		\norm{\hbbeta_1^{(\tau)} - \bbeta_1^*}_2 + \norm{\hbbeta_2^{(\tau)} - \bbeta_2^*}_2 + \norm{\hbtheta^{(\tau)} - \btheta^*}_2 
		\lesssim
		\rho^{t_{\tau, \max}}\delta_{\tau, 0}
		+ \sqrt{\frac{st_{\tau, \max}\log d}{N_{\tau-1}}},
	\end{equation}
	where $\rho<1$ is a contraction factor, and $\delta_{\tau, 0}:=
	\big\|\bbeta_1^{(\tau, 0)} - \bbeta_1^*\big\|_2 + \big\|\bbeta_2^{(\tau, 0)} - \bbeta_2^*\big\|_2 + \norm{\btheta^{(\tau, 0)} - \btheta^*}_2$. The first term in \eqref{eq:est-bound-t} quantifies the contraction of initial estimation error, while the second term represents the statistical error rate. For $\tau=1$, since Assumption \ref{A2} assumes the initial estimation error $\delta_{1, 0}$ to be a constant, we require $t_{1, \max} \asymp \log n_0$ iterations to ensure the first term $\rho^{t_{\tau, \max}}\delta_{1, 0}$ is dominated by the second term, yielding a rate of  $\sqrt{\frac{s\log n_0 \log d}{N_{\tau-1}}}$. For $\tau \geq 2$, since the initial estimation error $\delta_{\tau, 0} \asymp  \sqrt{\frac{s\log n_0 \log d}{N_{\tau-2}}}$, a constant number of iterations suffices. If we strengthen \ref{A2} to require that $\delta_{1, 0}=\bigO{a_{n_0}}$ for some $a_{n_0}=o(1)$, then the extra $\sqrt{\log n_0}$ factor in \eqref{eq:est-bound-l2} and \eqref{eq:est-bound-l1} can be reduced to $\sqrt{\log (n_0a_{n_0}^2)}$.
	\end{remark}

%\section{Theoretical Analysis of Decision Performance}

\subsection{Classification Accuracy}\label{sec:theory-classification}

In this section, we provide a theoretical guarantee for the statistical accuracy of the latent group identification procedure Algorithm \ref{alg:em-bandit}. 
%For ease of presentation, we denote the estimated coefficients for decision in the $\tau$-th episode as $\hat\bgamma = \hat\bgamma^{(T)}:= (\hat\bbeta_1^{(T)}, \hat\bbeta_2^{(T)}, \hat\btheta^{(T)})$. 
In the $\tau$-th episode, after obtaining $\big(\hat\btheta^{(\tau)}, \hat\bbeta_1^{(\tau)}, \hat\bbeta_2^{(\tau)}\big)$, Algorithm \ref{alg:em-bandit} employs a Bayes classifier $G\big(\bz_i; \hat\btheta^{(\tau)}\big)$, defined as  
\begin{align*} \label{eqn:bayes-classifier}
G(\bz_i; \btheta) = 
\begin{cases}
1, & p(\bz_i^\top\btheta)\ge 1/2, \\
2, & p(\bz_i^\top\btheta) < 1/2.
\end{cases}
\end{align*}

To characterize the classification performance, we define the optimal misclassification error achievable with the true parameter,
$
R(\btheta^*) := \EE\brackets{\bbone(G(\bz_i;\btheta^*) \ne g_i)},
$
and the misclassification error of the estimated classifier,
$
R\big(\hat\btheta\big) := \EE\brackets{\bbone(G(\bz_i;\hat\btheta) \ne g_i) \mid \hbtheta}.
$

\begin{theorem} \label{thm:miss-class-rate}
Let $\hat{\btheta}^{(\tau)}$ be the estimator obtained in the $\tau$-th episode for $\tau \geq 1$. Under the assumptions of Theorem \ref{thm:coeff-bound-hd}, we have that the excess misclassification error  satisfies
\begin{align*}
R\big(\hat\btheta^{(\tau)}\big)-R(\btheta^*) \lesssim \sqrt{\frac{s\log d\log n_0}{N_{\tau-1}}},
\end{align*}
with probability at least $1-d^{-1}$.
\end{theorem}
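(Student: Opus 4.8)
The plan is to bound the excess misclassification error $R(\hat\btheta^{(\tau)}) - R(\btheta^*)$ directly in terms of the $\ell_2$ estimation error $\|\hat\btheta^{(\tau)} - \btheta^*\|_2$, and then invoke Theorem \ref{thm:coeff-bound-hd}. The key observation is that the Bayes classifier $G(\bz;\btheta)$ thresholds on the sign of $\bz^\top\btheta$, so a point is classified differently by $\hat\btheta := \hat\btheta^{(\tau)}$ and $\btheta^*$ only if $\bz^\top\btheta^*$ and $\bz^\top\hat\btheta$ lie on opposite sides of $0$, which forces $|\bz^\top\btheta^*| \le |\bz^\top(\hat\btheta - \btheta^*)|$. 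A standard decomposition gives
\begin{equation*}
R(\hat\btheta) - R(\btheta^*) = \EE\brackets{\,\abs{2p(\bz^\top\btheta^*) - 1}\cdot \bbone\paren{G(\bz;\hat\btheta)\ne G(\bz;\btheta^*)}\cond \hat\btheta\,},
\end{equation*}
because on the event that the two classifiers agree the integrand vanishes, and when they disagree the cost of choosing the wrong label is exactly $|2p(\bz^\top\btheta^*)-1| = |\Pr(g=1\mid\bz) - \Pr(g=2\mid\bz)|$. Here the expectation is over $\bz$ (and $g$) only, with $\hat\btheta$ held fixed.

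Next I would control the right-hand side. On the disagreement event we have both $|\bz^\top\btheta^*| \le |\bz^\top(\hat\btheta-\btheta^*)|$ and, by a first-order expansion of $p$ around $0$, $|2p(\bz^\top\btheta^*) - 1| \le \tfrac12|\bz^\top\btheta^*|$ (since $p'(0) = 1/4$ and $p$ is Lipschitz, so $|2p(u)-1|\le |u|/2$ for all $u$). Combining, the integrand is at most $\tfrac12|\bz^\top(\hat\btheta-\btheta^*)|\,\bbone(|\bz^\top\btheta^*|\le |\bz^\top(\hat\btheta-\btheta^*)|)$, hence
\begin{equation*}
R(\hat\btheta) - R(\btheta^*) \le \tfrac12\,\EE\brackets{\,\abs{\bz^\top(\hat\btheta-\btheta^*)}\cond \hat\btheta\,} \le \tfrac12\sqrt{\EE\brackets{(\bz^\top(\hat\btheta-\btheta^*))^2\cond\hat\btheta}} = \tfrac12\sqrt{(\hat\btheta-\btheta^*)^\top\bSigma_z(\hat\btheta-\btheta^*)},
\end{equation*}
using Cauchy–Schwarz. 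By Assumption \ref{A4}, $\lambda_{\max}(\bSigma_z) < M$, so this is $\lesssim \|\hat\btheta-\btheta^*\|_2$. (An even sharper route that also uses $|\bz^\top\btheta^*|\le|\bz^\top(\hat\btheta-\btheta^*)|$ inside the expectation would give a bound of order $\|\hat\btheta-\btheta^*\|_2^2$ under a margin/density condition, but the linear bound already suffices to match the stated rate and avoids extra assumptions, so I would present the linear argument.)

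Finally, I would substitute the $\ell_2$ bound from Theorem \ref{thm:coeff-bound-hd}, namely $\|\hat\btheta^{(\tau)}-\btheta^*\|_2 \lesssim \sqrt{s\log d\,\log n_0 / N_{\tau-1}}$, which holds with probability at least $1 - d^{-1}$, to conclude that on this same high-probability event $R(\hat\btheta^{(\tau)}) - R(\btheta^*) \lesssim \sqrt{s\log d\,\log n_0 / N_{\tau-1}}$. The main technical point to handle carefully is the conditioning: $\hat\btheta^{(\tau)}$ is built from the episode-$(\tau-1)$ data $\calD_{\tau-1}$, while the misclassification error $R(\cdot)$ averages over a fresh draw $(\bz_i, g_i)$ independent of $\calD_{\tau-1}$, so the two-step argument — first bound $R(\hat\btheta) - R(\btheta^*)$ pointwise in $\hat\btheta$ by a deterministic function of $\|\hat\btheta-\btheta^*\|_2$, then plug in the high-probability estimation bound — is legitimate precisely because of this independence. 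I do not anticipate a serious obstacle beyond being careful that the inequality $|2p(u)-1|\le|u|/2$ and the sign-change implication are stated cleanly; the rest is Cauchy–Schwarz plus Assumption \ref{A4} and an appeal to the previous theorem.
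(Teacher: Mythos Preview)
Your proposal is correct and follows essentially the same route as the paper: both arguments arrive at the inequality $R(\hat\btheta)-R(\btheta^*)\le \tfrac12\,\EE\big[|\bz^\top(\hat\btheta-\btheta^*)|\,\big|\,\hat\btheta\big]$, then apply Cauchy--Schwarz together with Assumption~\ref{A4} and plug in the $\ell_2$ bound from Theorem~\ref{thm:coeff-bound-hd}. The only cosmetic difference is that the paper reaches this inequality via an explicit four-case decomposition on the signs of $\bz^\top\btheta^*$ and $\bz^\top\hat\btheta$, whereas you invoke the standard plug-in excess-risk identity $R(\hat\btheta)-R(\btheta^*)=\EE\big[|2p(\bz^\top\btheta^*)-1|\,\bbone(G(\bz;\hat\btheta)\ne G(\bz;\btheta^*))\big]$ directly and then use the sign-change implication; your version is a bit cleaner but substantively identical.
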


Theorem \ref{thm:miss-class-rate} reveals that the excess misclassification error converges at the same rate as the parameter estimation error bound established in Theorem \ref{thm:coeff-bound-hd}, which is useful for our subsequent regret analysis in Section \ref{sec:theory-regret}. %The proof of Theorem \ref{thm:miss-class-rate} is provided in Section \ref{sec:proof-misclustering} of the supplemental material.

\subsection{Regret Analysis}
\label{sec:theory-regret}

Recall that, for any policy $\hat{\pi}$, we define two types of regrets in Section \ref{sec:two-type-regret}: (1) the strong regret, $\reg^{*}_i = \underset{a\in [K]}{\max}\; \angles{\bx_{i,a}, \bbeta^*_{g_i}} - \angles{\bx_{i,\hat a_i}, \bbeta^*_{g_i}}$, where $\hat{a}_i$ is the action chosen by policy $\hat{\pi}$ for customer $i$, and (2) the regular regret, $\tilde{\reg}_i= \angles{\bx_{i,\tilde{a}_i}, \bbeta^*_{g_i}} - \angles{\bx_{i,\hat a_i}, \bbeta^*_{g_i}}$, where $\tilde a_i = \underset{a\in [K]}{\arg \max} \angles{\bx_{i, a}, \bbeta_{\tilde g_i}}$ and $\tilde g_i = G(\bz_i; \btheta^*)$. These two regret formulations arise from comparing $\hat{\pi}$ against two types of oracles: the strong oracle, which has access to the value of $(\bbeta_1^*, \bbeta_2^*)$ and the latent group labels $g_i$, and the regular oracle, which knows $(\btheta^*, \bbeta_1^*, \bbeta_2^*)$.  Notably, the regular oracle is weaker than the strong oracle since even with the known $\btheta^*$, there remains an inherent positive misclassification error $R(\btheta^*)$ due to the probabilistic nature of the subgroup model in \eqref{eqn:lhlb}. Consequently, the strong regret necessarily exceeds the regular regret.

The cumulative strong and  regular regrets over time horizon$T$ are defined, respectively, as 
\begin{equation*}
\Reg^{*} (T)= \sum_{\tau=0}^{\tau_{\max}}\sum_{i\in\calI_{\tau}}\EE[\reg^{*}_i], \quad\text{and}\quad 
\tilde{\Reg} (T)= \sum_{\tau=0}^{\tau_{\max}}\sum_{i\in\calI_{\tau}}\EE\left[\tilde{\reg}_i\right],
\end{equation*}
where $\tau_{\max}:=[\log_2(T/n_0+1)]-1$ is the maximum number of episodes within horizon length $T$. To establish theoretical bounds for $\Reg^{*} (T)$ and $\tilde{\Reg} (T)$, we require the following additional conditions:
\begin{enumerate}[label=(B\arabic*)]
%\item Sparsity assumption: There exist positive constants $c$ and $s$ such that $\norm{\bbeta_k^*}_1 \le c$ and $\norm{\bbeta_k^*}_0 \le s$ for all $k=1,2$. 
%\item Boundedness: There exists a positive constant $B$ such that $\norm{\bx_i}_{\infty} \le B$ for all $i\in[T]$. 
\item  Assume that $\norm{\bx_{i, k}}_{\infty} \leq \overline{x}$ for some constant $\overline{x}>0$ and the coefficient $\big\|\bbeta_g^*\big\|_1 \leq \overline{L}$ for some constant $\overline{L}>0$ for $g=1,2$. Consequently, the reward function is bounded: $\abs{\angles{\bx_{i, k}, \bbeta_g^*}} \leq \overline{R}=\overline{x}\overline{L}$ for all $i$ and $g=1, 2$. \label{B1}
\item  Assume that there exist positive constants $C$ and $\overline{h}$ such that, for all $h\in\big[0, \overline{h}\big]$, we have $\PP\paren{\angles{\bx_{i, \tilde{a}_{i, g}}, \bbeta_g^*} \le \max\limits_{k\ne \tilde{a}_{i,g}}\angles{\bx_{i, k}, \bbeta^*_{g}} + h} \le Ch $ for $g=1,2$, where $\tilde{a}_{i, g}:=\underset{k\in [K]}{\arg \max} \angles{\bx_{i, k}, \bbeta_g^*}$. \label{B2}
\end{enumerate}

Assumptions \ref{B1} and \ref{B2} are both standard conditions that have been widely assumed in the linear bandit literature \citep{goldenshluger2013linear, bastani2020online, bastani2020mostly, li2021regret, wang2024online}. Specifically, Assumption \ref{B1} ensures that the maximum regret at each time is upper bounded.  Assumption \ref{B2} plays an important role in controlling the distribution of the covariates in the neighborhood of decision boundaries, where small perturbations in parameter estimates can lead to changes in the selected actions. This assumption is satisfied under relatively mild conditions, for example, the density of $\angles{x_{i, k}, \bbeta_g^*}$ is uniformly bounded for all $k \in [K]$. Remark \ref{rmk:margin} further explains the necessity of Assumption \ref{B2} in the context of latent heterogeneity.

We then establish theoretical upper bounds for both the strong and regular regrets of Algorithm \ref{alg:em-bandit}.

\begin{theorem} \label{thm:regret}
Assume the assumptions in Theorem \ref{thm:coeff-bound-hd} and Assumptions \ref{B1} and \ref{B2} hold. We have the cumulative strong regret 
\begin{equation}\label{eq:upper-strong}
\Reg^*(T) \lesssim  \overline{R}n_0 +\overline{x}^2s^2\log d \log n_0\log T+ \overline{x} \norm{\bbeta_2^* - \bbeta_1^*}_1R(\btheta^*) T.
\end{equation}
and the cumulative regular regret 
\begin{equation}\label{eq:upper-regular}
\tilde{\Reg}(T) \lesssim \overline{R}n_0+ \overline{x}^2 \norm{\bbeta_2^*-\bbeta_1^*}_1\sqrt{s^2\log d \log n_0 } \sqrt{T}.
\end{equation}
\end{theorem}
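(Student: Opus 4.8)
The plan is to decompose the cumulative regret episode-by-episode and separate, within each episode $\tau\geq 1$, two distinct sources of error: (i) misclassification of the latent group $g_i$, which causes us to use the wrong reward vector $\bbeta^*_{\hat g_i}$ instead of $\bbeta^*_{g_i}$, and (ii) given the predicted group, selecting a suboptimal action because $\hat\bbeta^{(\tau)}_{\hat g_i}$ differs from the truth. For the strong regret, the contribution of episode $0$ is trivially $\bigO{\overline{R}n_0}$ by Assumption \ref{B1}. For $\tau\geq 1$, I would write $\reg_i^* \leq \mathbf{1}(\hat g_i \neq g_i)\cdot(\text{const}\cdot\overline{x}\norm{\bbeta_2^*-\bbeta_1^*}_1) + (\text{action-selection error given correct group})$. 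The first piece, summed over $i$, contributes $\overline{x}\norm{\bbeta_2^*-\bbeta_1^*}_1 \sum_i \PP(\hat g_i\neq g_i)$; here the key point is that $\PP(\hat g_i\neq g_i) = R(\hat\btheta^{(\tau)})$, and by Theorem \ref{thm:miss-class-rate} this equals $R(\btheta^*) + \bigO{\sqrt{s\log d\log n_0/N_{\tau-1}}}$ with high probability. The $R(\btheta^*)$ part contributes the irreducible linear term $\overline{x}\norm{\bbeta_2^*-\bbeta_1^*}_1 R(\btheta^*)T$; the excess part, summed geometrically over episodes, contributes $\bigO{\overline{x}\norm{\bbeta_2^*-\bbeta_1^*}_1\sqrt{s\log d\log n_0}\sqrt{T}}$, which is lower-order than the announced $s^2\log d\log n_0\log T$ term and can be absorbed.

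The action-selection error, conditional on $\hat g_i = g_i$, is where Assumption \ref{B2} (the margin condition) enters. On the high-probability event of Theorem \ref{thm:coeff-bound-hd}, $\hat a_i$ differs from the within-group optimal action $\tilde a_{i,g_i}$ only if the gap between the best and second-best arm is at most $\bigO{\overline{x}\|\hat\bbeta^{(\tau)}_{g_i}-\bbeta^*_{g_i}\|_1}$; by \ref{B2} this has probability $\bigO{\overline{x}\|\hat\bbeta^{(\tau)}_{g_i}-\bbeta^*_{g_i}\|_1}$, and on that event the incurred regret is itself $\bigO{\overline{x}\|\hat\bbeta^{(\tau)}_{g_i}-\bbeta^*_{g_i}\|_1}$, so the expected per-round contribution is $\bigO{\overline{x}^2\|\hat\bbeta^{(\tau)}_{g_i}-\bbeta^*_{g_i}\|_1^2}$. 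Plugging the $\ell_1$ bound from \eqref{eq:est-bound-l1}, this is $\bigO{\overline{x}^2 s^2\log d\log n_0/N_{\tau-1}}$ per round; multiplying by $\abs{\calI_\tau}=N_\tau = 2N_{\tau-1}$ gives $\bigO{\overline{x}^2 s^2\log d\log n_0}$ per episode, and summing over $\tau_{\max}\asymp\log T$ episodes yields $\bigO{\overline{x}^2 s^2\log d\log n_0\log T}$. Adding the low-probability complement (bounded by $\overline{R}$ times $d^{-1}$ times $T$, which is negligible for $T$ polynomial in $d$, or handled by the episode-$0$ term) completes the strong regret bound \eqref{eq:upper-strong}.

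For the regular regret \eqref{eq:upper-regular}, the argument is the same except that we now compare against $\tilde a_i$, the action optimal under $\tilde g_i = G(\bz_i;\btheta^*)$. The crucial difference is that $\tilde{\reg}_i$ is \emph{zero} whenever $\hat g_i = \tilde g_i$ and $\hat a_i = \tilde a_i$; the misclassification event is now $\{\hat g_i \neq \tilde g_i\}$, whose probability is $R(\hat\btheta^{(\tau)}) - R(\btheta^*) = \bigO{\sqrt{s\log d\log n_0/N_{\tau-1}}}$ by Theorem \ref{thm:miss-class-rate} — there is no irreducible $R(\btheta^*)$ term. Each such event costs at most $\bigO{\overline{x}\norm{\bbeta_2^*-\bbeta_1^*}_1}$, so summing over rounds in episode $\tau$ gives $\bigO{\overline{x}\norm{\bbeta_2^*-\bbeta_1^*}_1 N_\tau\sqrt{s\log d\log n_0/N_{\tau-1}}} = \bigO{\overline{x}\norm{\bbeta_2^*-\bbeta_1^*}_1\sqrt{s\log d\log n_0}\sqrt{N_{\tau-1}}}$, and summing the geometric series $\sum_\tau\sqrt{N_{\tau-1}}\asymp\sqrt{T}$ gives the dominant $\sqrt{T}$ term. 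The within-group action-selection error is handled exactly as before and contributes the lower-order $\log T$ term, absorbed into the $\sqrt{T}$ term.

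The main obstacle I anticipate is bookkeeping the interaction between the two error sources cleanly — in particular, making sure the triangle-inequality split $\reg_i^* \leq (\text{group error}) + (\text{action error})$ is valid when \emph{both} errors occur, and verifying that the margin condition \ref{B2}, which is stated for the within-group problem with the \emph{true} $\bbeta^*_g$, can legitimately be applied with $\hat g_i$ substituted for $g_i$ on the event $\{\hat g_i = g_i\}$ (or $\{\hat g_i = \tilde g_i\}$). A secondary technical point is handling the failure event of Theorem \ref{thm:coeff-bound-hd}: since it has probability only $1-d^{-1}$ per episode and the regret is bounded by $\overline{R}$ per round, one needs $T \lesssim \mathrm{poly}(d)$ or a union bound over episodes to ensure the contribution $\overline{R}\cdot d^{-1}\cdot T$ stays subdominant; I would state this as an implicit regime assumption or absorb it, following the convention in the cited bandit literature.
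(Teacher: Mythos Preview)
Your strong-regret argument is essentially the paper's: split on $\{\hat g_i=g_i\}$; on the correct-classification branch use \ref{B2} with the $\ell_1$ bound \eqref{eq:est-bound-l1} to get a per-round cost $\bigO{\overline{x}^2 s^2\log d\log n_0/N_{\tau-1}}$; on the misclassification branch pay $\overline{x}\norm{\bbeta_1^*-\bbeta_2^*}_1$ times $R(\hat\btheta^{(\tau)})\approx R(\btheta^*)$; sum over episodes. That part is fine.

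The regular-regret argument has a genuine gap. You assert that the relevant event is $\{\hat g_i\neq\tilde g_i\}$ and that its probability equals $R(\hat\btheta^{(\tau)})-R(\btheta^*)$. This identity is false. The quantity $\PP(\hat g_i\neq\tilde g_i)=\PP\bigl(\sgn(\bz_i^\top\hat\btheta^{(\tau)})\neq\sgn(\bz_i^\top\btheta^*)\bigr)$ is the measure of the symmetric difference of two half-spaces, whereas $R(\hat\btheta^{(\tau)})-R(\btheta^*)$ is the integral of $\lvert 2p(\bz_i^\top\btheta^*)-1\rvert$ over that region (cf.\ the proof of Theorem~\ref{thm:miss-class-rate}). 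Near the boundary $\bz_i^\top\btheta^*=0$ the latter integrand vanishes while the former does not, so $\PP(\hat g_i\neq\tilde g_i)$ can be arbitrarily larger than the excess risk. Bounding $\PP(\hat g_i\neq\tilde g_i)$ at the rate $\sqrt{s\log d\log n_0/N_{\tau-1}}$ would require a margin-type assumption on the density of $\bz_i^\top\btheta^*$ near zero, which is \emph{not} among \ref{A1}--\ref{A4}, \ref{B1}, \ref{B2}.

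The paper sidesteps this by a different decomposition. It writes the regular regret as a \emph{difference of strong regrets},
\[
\tilde{\reg}_i \;=\; \Bigl(\max_k\angles{\bx_{i,k},\bbeta^*_{g_i}}-\angles{\bx_{i,\hat a_i},\bbeta^*_{g_i}}\Bigr)-\Bigl(\max_k\angles{\bx_{i,k},\bbeta^*_{g_i}}-\angles{\bx_{i,\tilde a_i},\bbeta^*_{g_i}}\Bigr),
\]
and observes that on the good event $\cE_i\cap\cG_i$ both $\hat a_i$ and $\tilde a_i$ coincide with the within-group optimal action for their respective predicted groups. Hence each bracket vanishes on $\{g_i=\hat g_i\}$ and $\{g_i=\tilde g_i\}$ respectively, and on the complementary events both brackets equal the \emph{same} misclassification cost (a function of $(\bx_i,g_i)$ only). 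The expectation therefore collapses to that common cost times $\bigl(R(\hat\btheta^{(\tau)})-R(\btheta^*)\bigr)$, and Theorem~\ref{thm:miss-class-rate} applies directly --- no control of $\PP(\hat g_i\neq\tilde g_i)$ is ever needed. This cancellation via the strong-regret telescoping is the idea missing from your proposal.
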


%The proof of Theorem \ref{thm:regret} is provided in Section \ref{sec:proof-upper} in the supplemental materials.
The upper bound for strong regret comprises three terms. The first term captures the cost of initial exploration in episode 0, and the second term stems from the small failure probabilities established in Theorems \ref{thm:coeff-bound-hd} and \ref{thm:miss-class-rate}. The third term, which dominates the other two terms for large $T$, reflects the impact of misclassification. As defined in Section \ref{sec:theory-classification}, the classifier $G\big(\bz_i; \btheta^{(\tau)}\big)$ misclassifies a customer $i$ in episode $\tau$ with probability $R\big(\btheta^{(\tau)}\big)$, which converges to a constant $R(\btheta^*)$  as characterized by Theorem \ref{thm:miss-class-rate}. Under such misclassification, the instant strong regret $\reg^{*}_i$ attains a constant level $\overline{x} \norm{\bbeta_2^* - \bbeta_1^*}_1$, leading to the linear term $\overline{x} \norm{\bbeta_2^* - \bbeta_1^*}_1R(\btheta^*) T$.

However, this linear term vanishes in the regular regret bound. This occurs because the regular regret compares our algorithm against oracle actions selected using the true value of $\btheta^*$, which itself incurs a constant $R(\btheta^*)$ misclassification rate. The two constants cancel each other, leaving only an $O(\sqrt{T\log d})$ term that emerges from the failure of the event defined in Assumption \ref{B2}.

\begin{remark}\label{rmk:margin}
	Assumption \ref{B2} is essential for establishing bound \eqref{eq:upper-regular} for the regular regret under latent heterogeneity. Violation of this assumption makes it fundamentally difficult to distinguish the optimal arm from the sub-optimal arms, resulting in sub-optimal selection, i.e.,  $\hat{a}_{i}=\argmax_k \angles{\bx_{i, k}, \hbbeta_{g}} \neq \tilde{a}_{i, g}$. This issue remains manageable for regret analysis within a single group since $\angles{\bx_{i, \tilde{a}_{i,g}}, \bbeta^*_{g}}-\angles{\bx_{i, \hat{a}_{i}}, \bbeta^*_{g}}=\max_k \angles{\bx_{i, k}, \bbeta^*_{g}}-\max_k \angles{\bx_{i, k}, \hbbeta_{g}}+\angles{\bx_{i, \hat{a}_i}, \hbbeta_{g}}-\angles{\bx_{i, \hat{a}_{i}}, \bbeta^*_{g}} \leq 2\sup \abs{\angles{\bx_{i, k},\hbbeta_{g}- \bbeta^*_{g}}}$ is still bounded by the $\ell_1$ estimation error $\big\|\hbbeta_{g}-\bbeta^*_{g}\big\|_1$. However, with latent heterogeneity, the consequence of sub-optimal selection becomes more severe. Consider a customer $i$ with $p(\bz_i^{\top}\btheta^*)<1/2$, who can be assigned to group $g_i=1$ with a positive probability. In such case, if the estimated action $\hat{a}_{i}=\argmax_k \angles{\bx_{i, k}, \hbbeta_{2}} \neq \tilde{a}_{i, 2}$, the regular regret $\angles{\bx_{i, \tilde{a}_{i,2}}, \bbeta^*_{1}}-\angles{\bx_{i, \hat{a}_{i}}, \bbeta^*_{1}}$ can attain a constant level, regardless of estimation accuracy. 
\end{remark}

We further develop minimax lower bounds for the strong and regular regrets.  For any policy $\hat\pi$, define \[\EE_{\hat\pi}[\reg^*_{i}]:=\EE\left[\underset{k\in [K]}{\max}\; \angles{\bx_{i,k}, \bbeta^*_{g_i}}\right] -\EE_{\hat\pi}\left[\angles{\bx_{i,\hat a_i}, \bbeta^*_{g_i}}\right], \quad \EE_{\hat\pi}[\widetilde{\reg}_{i}]:=\EE\left[\angles{\bx_{i,\widetilde{a}_i}, \bbeta^*_{g_i}}\right] -\EE_{\hat\pi}\left[\angles{\bx_{i,\hat a_i}, \bbeta^*_{g_i}}\right], \]
where $\EE_{\hat\pi}$ represents that the actions $\hat{a}_i$ are chosen according to the policy $\hat\pi$. We then establish the following lower-bound result.
\begin{theorem}\label{thm:lower-bound} Let $\mu(y, \bx, \bz; \bgamma^*)$ denote a distribution of $(y_i, \bx_i, \bz_i)$ that satisfies model \eqref{eqn:lhlb}  with $\bgamma^*=(\btheta^*, \bbeta_1^*, \bbeta_2^*)$, and define $\calP_{d, s,\overline{x},\overline{L}}$ as the collection of all the distributions $\mu(y, \bx, \bz; \bgamma^*)$ such that $\bgamma^* \in \Theta(d, s)$ and Assumptions \ref{A1}, \ref{A4}, \ref{B1}, and \ref{B2} hold with constants $\overline{x}$ and $\overline{L}$. Then we have
	\begin{equation}\label{eq:lower-strong}
			\inf_{\hat\pi}\sup_{\mu \in \calP_{d, s, \overline{x}, \overline{L}}}\sum_{i=1}^{T}\EE_{\hat\pi}[\reg^*_{i}]\gtrsim \overline{x}\overline{L}R(\btheta^*)T,  \quad	\inf_{\hat\pi}\sup_{\mu \in \calP_{d,s,\overline{x},\overline{L}}} \sum_{i=1}^{T}\EE_{\hat\pi}[\widetilde{\reg}_{i}]\ \gtrsim \overline{x}\overline{L}\sqrt{s\log d}\sqrt{T},
	\end{equation}
	where the infimum is taken over all the possible policies $\hat{\pi}$.
\end{theorem}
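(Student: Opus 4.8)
The plan is to establish the two lower bounds by separate, essentially standard, minimax arguments tailored to the two oracles. For the strong-regret bound $\gtrsim \overline{x}\,\overline{L}\,R(\btheta^*)\,T$, I would argue that the barrier is information-theoretic and present already at a \emph{single} round: fix any instance $\mu\in\calP_{d,s,\overline{x},\overline{L}}$ and condition on the contextual data $\{\bx_{i,k}\}_{k\in[K]},\bz_i$. Conditional on $\bz_i$, the realized group $g_i$ is a genuinely random Bernoulli variable with parameter $p(\bz_i^\top\btheta^*)$, and by Assumption~\ref{A1} this parameter is bounded away from $0$ and $1$. Crucially, the information available to $\hat\pi$ before choosing $\hat a_i$ (the history and the current features) is independent of $g_i$ given $\bz_i$, so no policy can predict $g_i$ better than the Bayes rule $G(\bz_i;\btheta^*)$; its error is exactly $R(\btheta^*)>0$. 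I would then construct, or simply invoke from $\calP$, an instance where on an event of constant probability the two groups' best arms differ and the reward gap between them is of order $\overline{x}\,\overline{L}$ — this uses that $\bbeta_1^*\ne\bbeta_2^*$ together with Assumption~\ref{B2} (the margin condition forces a constant-probability region where the top arm of group $1$ is strictly suboptimal for group $2$ by a constant margin $\asymp\|\bbeta_1^*-\bbeta_2^*\|\asymp\overline{x}\,\overline{L}$). On that event any action incurs instant strong regret $\gtrsim\overline{x}\,\overline{L}$ with probability $\gtrsim R(\btheta^*)$, hence $\EE_{\hat\pi}[\reg_i^*]\gtrsim \overline{x}\,\overline{L}\,R(\btheta^*)$ for every $i$; summing over $i\in[T]$ gives the claim. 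The only subtlety is to exhibit \emph{one} witness instance in $\calP$ with the required constant margin; I would build $\bbeta_1^*,\bbeta_2^*$ with disjoint (or nearly disjoint) supports of size $s$ and a product-type covariate law so that all of \ref{A1}, \ref{A4}, \ref{B1}, \ref{B2} hold with the prescribed constants.

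For the regular-regret bound $\gtrsim \overline{x}\,\overline{L}\sqrt{s\log d}\,\sqrt T$, the relevant oracle $\tilde\pi$ already commits to $\tilde g_i=G(\bz_i;\btheta^*)$, so the irreducible-misclassification term cancels and what remains is the cost of \emph{learning the high-dimensional sparse reward vectors} $\bbeta_g^*$. I would reduce to a single-group sparse linear bandit lower bound: restrict attention to a sub-family of $\calP$ in which $\btheta^*$ is such that $p(\bz_i^\top\btheta^*)$ is very close to (but not exactly) $1/2$ on a region of constant probability — so $\tilde g_i$ is deterministic there but the two groups are both visited with constant frequency — and in which $\bbeta_2^*$ is fixed while $\bbeta_1^*$ ranges over a $2^{\Omega(s\log(d/s))}$-packing of the $s$-sparse $\ell_2$-ball of a suitable radius. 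This is exactly the construction used for $\Omega(\sqrt{sT\log d})$ lower bounds in sparse linear bandits; I would cite/adapt the standard two-point or Assouad/Fano argument (e.g.\ the information-theoretic lower bounds underlying \citealp{oh2020sparsity}, or the classical \citealp{dani2008stochastic}-type argument combined with a sparse packing). The margin condition \ref{B2} is compatible with such a packing provided the covariate density is continuous, which I arrange by using Gaussian (or bounded, smooth) covariates truncated to respect \ref{B1}. Matching the factor $\overline{x}\,\overline{L}$ is a matter of scaling the packing radius to $\asymp \overline{L}$ and the covariates to $\asymp\overline{x}$.

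Concretely, the steps are: (i) formalize the conditional independence of $g_i$ from the decision-relevant history given $\bz_i$, and identify $R(\btheta^*)$ as the Bayes error; (ii) construct the strong-regret witness instance and verify \ref{A1},\ref{A4},\ref{B1},\ref{B2}; (iii) lower-bound the per-round strong regret by $\overline{x}\,\overline{L}\,R(\btheta^*)$ and sum; (iv) for the regular regret, build the sparse packing sub-family with $\tilde g_i$ deterministic on a constant-probability set and both groups frequently visited; (v) apply a Fano/Assouad argument to the $\bbeta_1^*$-estimation problem embedded in the bandit feedback, translating estimation error into regret via the margin-type anti-concentration (which turns an $\ell_2$ parameter error $\epsilon$ into regret $\gtrsim\epsilon$ on a constant-probability event), and optimize the packing radius against $T$ to obtain $\sqrt{s\log d}\sqrt T$. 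I expect step (v) — correctly handling the \emph{adaptive} bandit feedback (only the chosen arm's reward is seen) inside the information-theoretic bound, so that the usual full-information $\sqrt{sT\log d}$ estimation lower bound genuinely transfers to regret — to be the main obstacle; the cleanest route is to mimic the change-of-measure/Assouad argument of the sparse-bandit literature rather than to re-derive it, after checking that our latent-group model restricted to the packing sub-family is a bona fide sparse linear bandit (which it is, since on the constant-probability region $\tilde g_i$ is fixed and the reward is linear in $\bbeta_1^*$).
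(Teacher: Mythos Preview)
Your strong-regret argument is essentially the paper's: both construct a two-arm instance where the group-optimal arms differ by order $\overline{x}\,\overline{L}$, observe that $g_i$ is conditionally independent of the decision history given $\bz_i$, and conclude that any policy incurs per-round strong regret $\gtrsim \overline{x}\,\overline{L}\,R(\btheta^*)$. The paper's witness is explicit ($\bbeta_1^*=(\overline{L},0,\dots)$, $\bbeta_2^*=(-\overline{L},0,\dots)$, $K=2$, uniform covariates), but the idea is the same.

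Your regular-regret argument, however, takes the wrong reduction and would not deliver a $\sqrt{T}$ lower bound. You propose to fix $\btheta^*$ and $\bbeta_2^*$ and let $\bbeta_1^*$ range over a sparse packing, reducing to a single-group sparse linear bandit. The gap is in step~(v): you claim that an $\ell_2$ parameter error $\epsilon$ in $\bbeta_1^*$ translates to per-round regret $\gtrsim\epsilon$. But under Assumption~\ref{B2}, the margin condition forces the opposite scaling: a wrong arm is selected only when the true gap is $\lesssim\epsilon$, which by \ref{B2} happens with probability $\lesssim\epsilon$, and each such mistake costs $\lesssim\epsilon$. So estimation error $\epsilon$ yields per-round regret $\lesssim\epsilon^2$, and summing gives at best $\sum_i s\log d/i \asymp s\log d\cdot\log T$, not $\sqrt{sT\log d}$. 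This is precisely why single-group sparse stochastic bandits under margin conditions achieve polylogarithmic regret (e.g.\ \citealp{goldenshluger2013linear,bastani2020online}); no $\sqrt{T}$ lower bound can come out of your sub-family while respecting \ref{B2}. (The escape route $\bbeta_2^*=0$, which would keep regular regret nonnegative, violates \ref{B2} for $g=2$.)

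The paper's proof avoids this by reducing in the orthogonal direction: it \emph{fixes} $\bbeta_1^*,\bbeta_2^*$ (to the same explicit witness used for the strong regret) and takes the supremum over $\btheta^*$. For that instance they show the expected regular regret equals $\overline{x}\,\overline{L}$ times the \emph{excess classification error} of the induced classifier $\hat\eta_{\hat\pi}(\bz)\sim\text{Bernoulli}(\EE_{\bx}[\hat\pi_1])$ relative to the Bayes rule $\eta^*(\bz)=\mathbb{1}(\bz^\top\btheta^*\ge0)$. The $\sqrt{T}$ then comes from a minimax lower bound on excess misclassification risk in sparse logistic regression (\citealp{abramovich2018high}), namely $\inf_{\hat\eta}\sup_{\|\btheta^*\|_0\le s}\big[\EE R_{\btheta^*}(\hat\eta)-R_{\btheta^*}(\eta^*)\big]\gtrsim\sqrt{s\log(d/s)/n}$. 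The point is that a misclassification of $\tilde g_i$ costs a \emph{constant} $\overline{x}\,\overline{L}$ in regret (not something proportional to the estimation error), so the linear-in-error translation survives and summing $\sqrt{s\log d/i}$ over $i$ gives $\sqrt{sT\log d}$. Assumption~\ref{B2} plays no role in this reduction, since it constrains reward margins, not the $\bz$-classification boundary.
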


Compared to the upper bounds established in Theorem \ref{thm:regret}, the lower bounds replace $\norm{\bbeta_1^{*}-\bbeta_2^{*}}_1$ by $\overline{L}$, which are equivalent since $\sup\norm{\bbeta_1^{*}-\bbeta_2^{*}}_1 = 2\overline{L}$ under Assumption \ref{B1}. The lower bound $\Omega(R(\btheta^*)T)$ for the strong regret precisely matches the dominant linear term in the upper bound \eqref{eq:upper-strong}, while the lower bound $\Omega(\sqrt{Ts\log d})$ aligns with the upper bound \eqref{eq:upper-regular} in terms of $T$ and $d$, demonstrating the minimax optimality of our proposed method under latent heterogeneity. The gap between the upper and lower bounds for the regular regret amounts to a factor of $\sqrt{s\log n_0}$, which, as discussed in Remark \ref{rmk:logn0}, stems from the sample splitting procedure and can be reduced by assuming stronger conditions on the initialization.
%The proof of Theorem \ref{thm:lower-bound} is provided in Section \ref{sec:proof-lower} of the supplementary material.

\section{Numerical Study} \label{sec:numerical}

In this section, we evaluate the performance of our proposed heterogeneous algorithm through numerical studies. Section \ref{sec:simul} presents simulation studies to show the effectiveness of our algorithm and validate our theoretical findings under various settings. Section \ref{sec:real} further illustrates the practical utility of our method through an application to a cash bonus dataset from a mobile commerce platform.

\subsection{Simulations} \label{sec:simul}

In this section, we conduct numerical simulations to demonstrate the performance of our proposed algorithm under varying conditions. Our simulation is implemented based on the model: $y_{i, k}= \angles{\bx_{i, k}, \bbeta_{g_i}^*}+\epsilon_i$ for $\bx_{i, k} \in \RR^d$, $k \in [K]$ and $g_i \in \{1, 2\}$, with $\epsilon_i \sim \calN(0, \sigma^2)$.  We let the number of available actions $K=2$ and generate the covariates $\bx_{i, k} \sim \calN(\bmu_{k}, \bSigma_k)$ ($k = 1,2$). Each entry of $\bmu_{1}$ and $\bmu_{2}$ is independently generated from $\calN(1, (0.5)^2)$ and $\calN(-1, (0.5)^2)$, respectively, and the covariance matrix $\bSigma_1=\bSigma_2$ is set to be an AR(1) matrix with correlation $0.5$, that is, $(\bSigma_1)_{j_1, j_2}=(\bSigma_2)_{j_1, j_2}=0.5^{\abs{j_1-j_2}}$. For the parameters $\bbeta_g^*$, we set $\bbeta_{1, j}^* = \overline{L}\bbone(1\leq j \leq s) / s$ and $\bbeta_{2, j}^* = -\overline{L}\bbone(d/2\leq j \leq d/2+s) / s $ for $j=1,\dots,d$, which ensures that $\norm{\bbeta_1^*}_1=\norm{\bbeta_2^*}_1=\overline{L}$. Specifically, we fix the noise level $\sigma=1$ and the sparsity $s=20$ and vary $\overline{L} \in \{2.5, 5\}$ and $d \in \{500, 1000\}$. The group assignment probability is determined by $\PP(g_i = 1 | \bz_i) = 1 / (1 + \exp(-z^{\top}_i \btheta^*))$, where $z_i \in \RR^{50}$ is generated from $N(0, I_{50})$, and $\btheta^*$ has nonzero entries in its first 10 dimensions, drawn uniformly from $[-1, 1]$. 

To assess the performance of our proposed algorithm (Algorithm \ref{alg:em-bandit}) and verify our theoretical results, we compare the average strong and regular regrets, i.e., $\frac{1}{T}\Reg^*(T)$ and $\frac{1}{T}\widetilde{\Reg}(T)$, against a benchmark---a single LASSO method applied without considering the latent group structure. In both algorithms, the regularization parameters $\lambda$ are chosen through cross-validation, and the maximum number of iterations $t_{\max}$ in Algorithm \ref{alg:em} is set as 1 for all episodes. Figure \ref{fig:regret} presents the average strong and regular regrets of our proposed algorithm (``hetero'') and the single LASSO (``single'') for $s=20$, $d\in\{500, 1000\}$, and $\overline{L} \in \{2.5, 5\}$. All results are averaged over 100 independent runs. %Additional simulation results, including those for $s=5$, are relegated to the supplementary material. 

\begin{figure}[!t]
	\centering
	\begin{subfigure}[b]{0.45\textwidth}
		\centering
		\includegraphics[width=\textwidth]{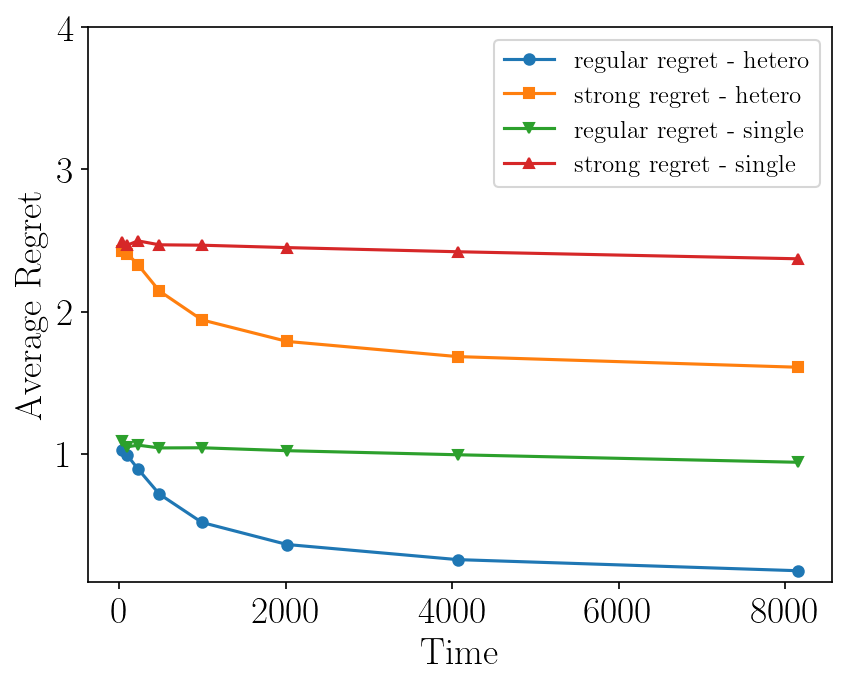}
		\caption{$d=500, \overline{L}=2.5$}
		\label{fig:regret_500_2.5}
	\end{subfigure}
	\hfill
	\begin{subfigure}[b]{0.45\textwidth}
		\centering
		\includegraphics[width=\textwidth]{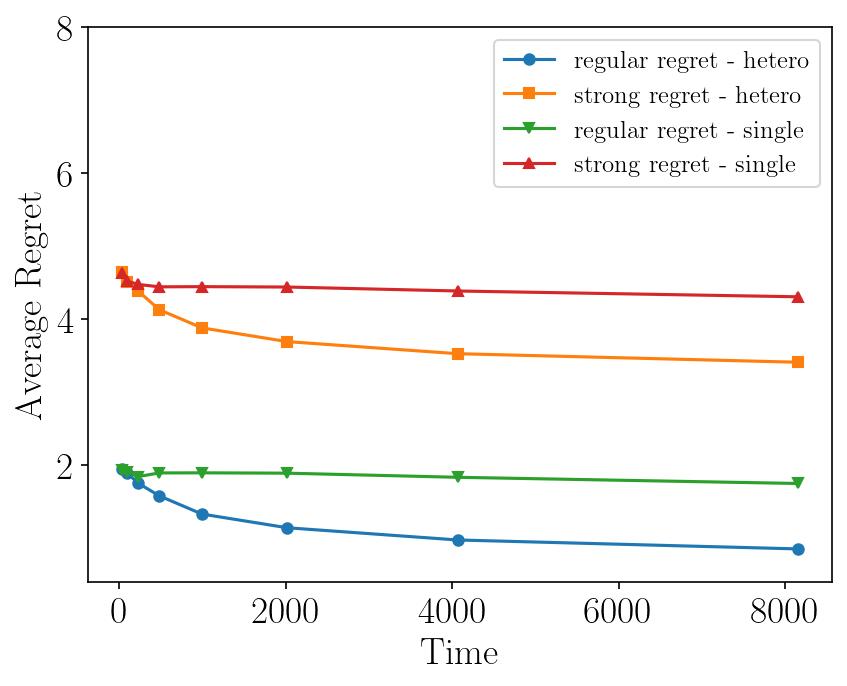}
		\caption{$d=500, \overline{L}=5$}
		\label{fig:regret_500_5}
	\end{subfigure}
	
	\begin{subfigure}[b]{0.45\textwidth}
		\includegraphics[width=\textwidth]{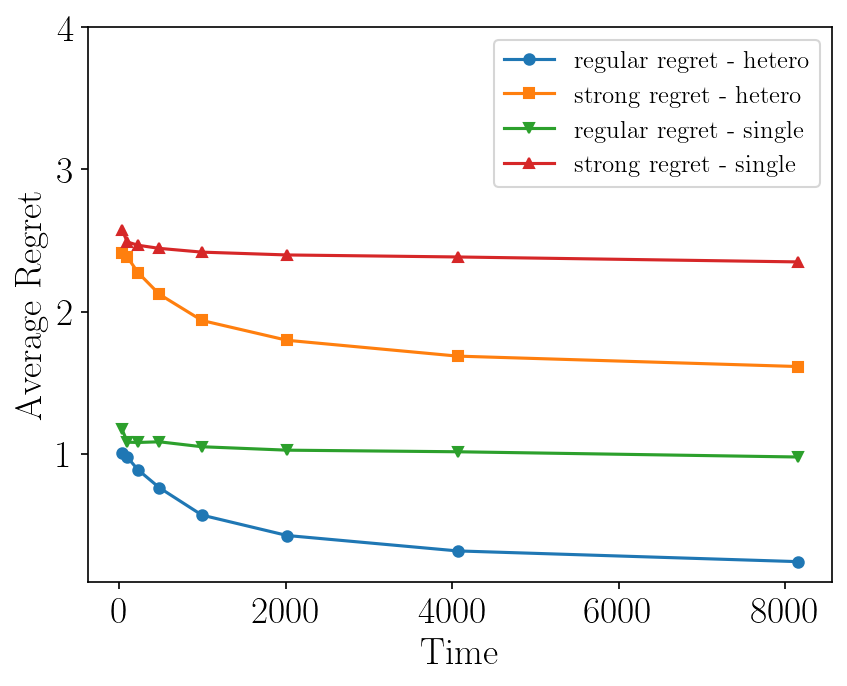}
		\caption{$d=1000, \overline{L}=2.5$}
		\label{fig:regret_1000_2.5}
	\end{subfigure}
	\hfill
	\begin{subfigure}[b]{0.45\textwidth}
		\centering
		\includegraphics[width=\textwidth]{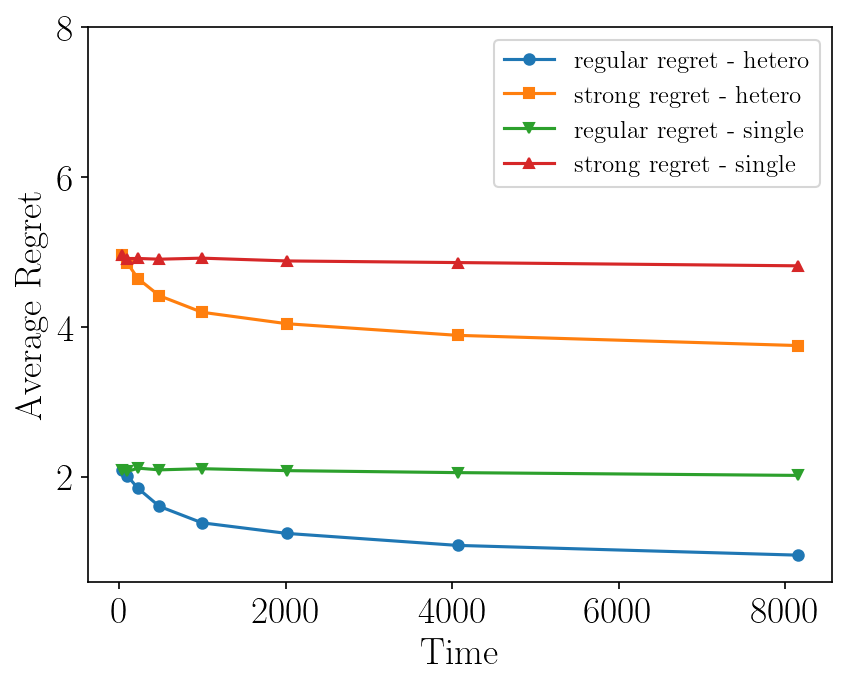}
		\caption{$d=1000, \overline{L}=5$}
		\label{fig:regret_1000_5}
	\end{subfigure}
	\caption{Average strong and regular regrets with $s=20$, $d=\{500, 1000\}$ and $ \overline{L}\in \{2.5, 5\}$. The horizontal axis ``time'' represents the sample size $T$. }
	\label{fig:regret}
\end{figure}

As shown in Figure \ref{fig:regret}, our proposed heterogeneous algorithm significantly outperforms the ``single'' algorithm, demonstrating the critical importance of identifying latent group heterogeneity. Under all scenarios, the average regular regret of our algorithm (``regular regret - hetero'') approaches zero as $T$ increases, while the average strong regret (``strong regret - hetero'') stabilizes at a constant level. This observation aligns closely with Theorem \ref{thm:regret}, where we theoretically establish that the cumulative strong regret grows linearly in $T$, and the cumulative regular regret exhibits sublinear growth.

Furthermore, when the parameter magnitude $\overline{L}$ increases from $2.5$ to $5$, the constant level of average strong regret approximately doubles. This behavior is consistent with the theoretical bound derived in equations \eqref{eq:upper-strong} since $\norm{\bbeta^*_1-\bbeta_2^*}_1 \propto \overline{L}$ in our experimental setup. Moreover, the dimensionality of the problem $d$ demonstrates minimal impact on the regret performance when varying from $500$ to $1000$, illustrating our algorithm's robustness and effectiveness in handling high-dimensional parameter spaces.

\begin{figure}[!t]
	\centering
	\begin{subfigure}[b]{0.45\textwidth}
		\centering
		\includegraphics[width=\textwidth]{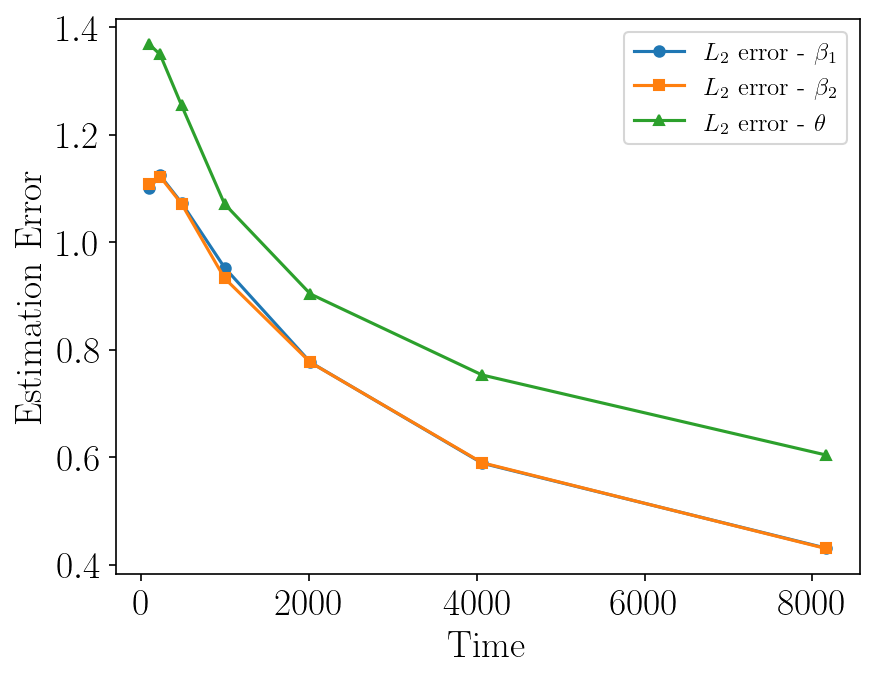}
		\caption{$d=500$}
		\label{fig:error_500_2.5}
	\end{subfigure}
	\hfill
	\begin{subfigure}[b]{0.45\textwidth}
		\centering
		\includegraphics[width=\textwidth]{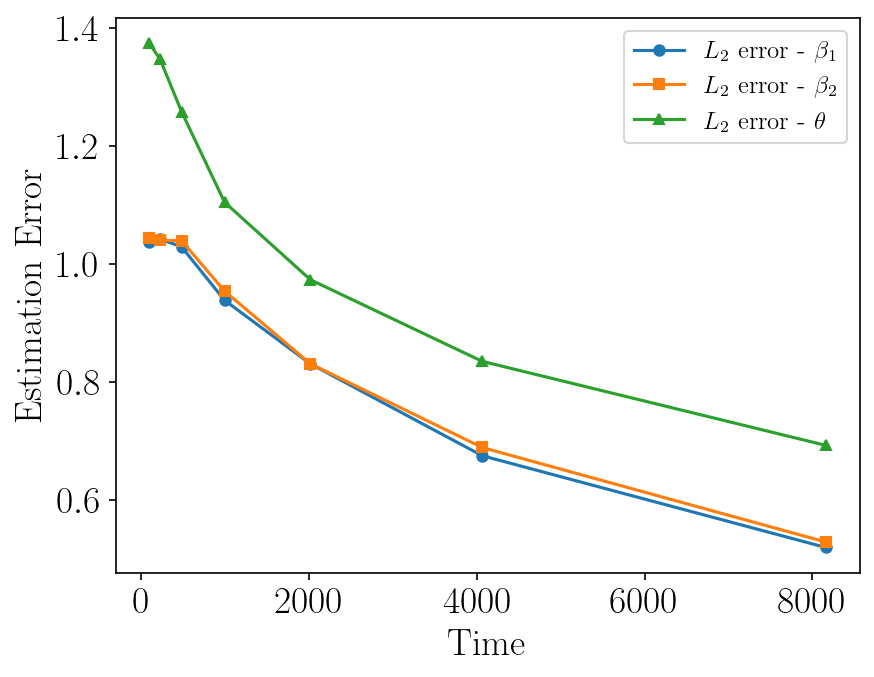}
		\caption{$d=1000$}
		\label{fig:error_1000_2.5}
	\end{subfigure}
	\caption{Estimation errors of the parameters $( \btheta^*,\bbeta_1^*, \bbeta^*_2)$ with $s=20$, $\overline{L}=2.5$, and $d\in\{500, 1000\}$. The horizontal axis ``time'' represents the sample size $T$.}
	\label{fig:error}
\end{figure}

Moreover, we present the $\ell_2$ estimation error for the parameters $(\btheta^*, \bbeta_1^*, \bbeta_2^*)$  in Figure \ref{fig:error}. Due to the potential for the algorithm to interchange the two groups, we compute the $\ell_2$ estimation errors of $\bbeta_g^*$ and $\btheta^*$ as the minimum error between the estimated group and the alternative group, that is, $\min\left\{\norm{\hat\bbeta_g-\bbeta_g^*}_2, \norm{\hat\bbeta_{\{1, 2\} \setminus g}-\bbeta_g^*}_2\right\}$ and $ \min\left\{\norm{\hat\btheta-\btheta^*}_2, \norm{\hat\btheta+\btheta^*}_2\right\}$,
respectively. As illustrated in Figure \ref{fig:error}, the estimation errors consistently decrease with increasing sample size $T$, indicating that our algorithm effectively estimates the model parameters in high-dimensional settings, which leads to the reduction in regret. The estimation errors for $d = 1000$ are only slightly larger than those for $d = 500$, which aligns with our theoretical result that the estimation error scales with $\sqrt{\log d}$.

\subsection{Real Data Analysis} \label{sec:real}

%\begin{figure}[!t]
%	\centering
%	\includegraphics[width=0.5\textwidth]{paper/figs/regret_real.png}
%	\caption{The average regret of different methods on the cash bonus dataset}
%	\label{fig:regret_real}
%\end{figure}

In this section, we illustrate the usefulness of our proposed method with application to the cash bonus dataset, originally presented by \cite{chen2022bcrlsp}. The dataset was collected from a mobile app, Taobao Special Offer Edition, which provided its users with a daily cash bonus that could be subtracted from the final payment at the time of purchase within 24 hours. The aim is to determine the optimal amount of cash bonus allocated to each user that leads to the highest payment.

Each observation in the dataset consists of customer features (user demographics and behavior information), a consecutive action variable $a_i$ (the amount of cash bonus) ranging from 0.25 to 1.95 with increment 0.01, and a continuous reward $y_i$ (the actual payment when the user redeems the cash bonus). To improve the computational efficiency, we compute the top 100 principal components of the customer features as the feature variable $\bz_{i}$. The contextual features $\bx_{i, a_{i}}$ are defined as $\bx_{i, a_{i}}=[a_i, a_i^2, \bz_i, a_i \bz_i, a_i^2\bz_i]$, incorporating quadratic terms of $a_i$ and their interaction terms with $\bz_i$. Furthermore, we divide the dataset into two groups based on the user's consumption level and remove this variable from the dataset to assume that it is unknown in practice. %where the high-consumption and low-consumption groups contain and 290,494 observations, respectively. 
Following model \eqref{eqn:lhlb}, we use the entire dataset to fit an $\ell_1$-regularized logistic regression model for group classification and two LASSO models for the two groups separately. The fitted parameters are viewed as the ground truth $(\btheta^*, \bbeta_1^*, \bbeta_2^*)$, which are further used to generate the unobserved rewards $y_{i, k}$ in model \eqref{eqn:lhlb} for all of the actions $k$. 

Similar to the simulation studies, we evaluate the performance of our proposed heterogeneous method by comparing its average strong and regular regrets with a ``single'' method that applies a single LASSO in each episode without considering the heterogeneous grouping. Additionally, we also implement a ``separate'' method that applies LASSO algorithms separately for the two groups, assuming the group assignment is known. The ``separate'' method indicates the optimal performance one can expect in the heterogeneous setting with unknown groups. 

\begin{figure}[!t]
	\centering
	\begin{subfigure}[b]{0.45\textwidth}
		\centering
		\includegraphics[width=\textwidth]{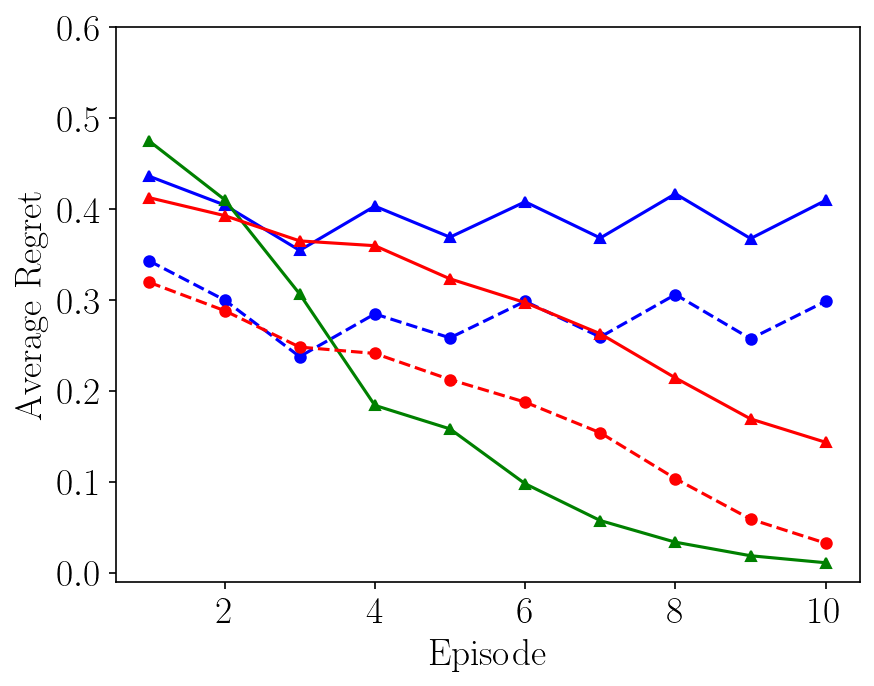}
		\caption{high-consumption group}
		\label{fig:real_g1}
	\end{subfigure}
	\hfill
	\begin{subfigure}[b]{0.45\textwidth}
		\centering
		\includegraphics[width=\textwidth]{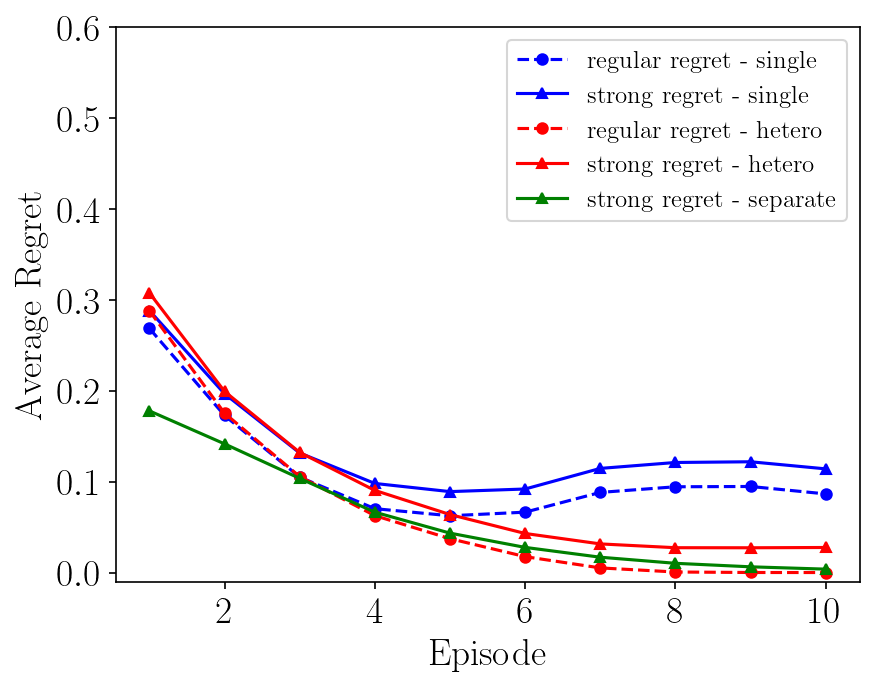}
		\caption{low-consumption group}
		\label{fig:real_g2}
	\end{subfigure}
	\caption{Average strong and regular regrets of different methods on the cash bonus dataset}
	\label{fig:real}
\end{figure}

Figure \ref{fig:real} presents a comparative analysis of the average regrets across different methods for both groups, where we set the initial sample size $n_0$ to be 256. Our proposed heterogeneous method demonstrates superior performance compared to the single LASSO approach in both groups, though with differences in the magnitude and pattern of improvement. For the high-consumption group,  the ``single'' method shows consistently higher regret levels, while our heterogeneous method achieves significantly lower regret values. For the low-consumption group, while the improvement is less clear in early episodes, it becomes particularly evident as the sample size increases. 

The performance difference of the ``single'' method between the two groups reveals the heterogeneity in purchasing behaviors and transaction values. The higher regret level in the high-consumption group reflects larger deviations from optimal bonus allocation, likely due to their higher baseline spending and larger transaction amounts. %Conversely, the lower regret levels observed in the low-consumption group reflect smaller deviations from optimal bonus allocation, consistent with their lower transaction values. 
Since the ``single'' method fits only one LASSO model per episode, it effectively tracks only the behavior patterns of low-consumption users, resulting in more accurate predictions for this group but substantial errors for high-consumption users.

In contrast, our proposed heterogeneous method successfully captures the behavior patterns of both groups, verified by its comparable regret to the ``separate'' method that represents the performance of LASSO under known group assignments. This superior performance indicates that platforms can achieve more efficient bonus allocation strategies by incorporating latent group structures, potentially leading to improved user engagement and platform profitability across different user segments.

\section{Conclusion} \label{sec:conclusion}
This paper advances the field of online decision-making by addressing the critical challenge of latent heterogeneity in stochastic linear bandits. We introduce a novel framework that explicitly models unobserved customer characteristics affecting their responses to business actions, filling a significant gap in existing approaches primarily focusing on observable heterogeneity. Our methodology introduces an innovative algorithm that simultaneously learns both latent group memberships and group-specific reward functions, effectively handling the challenge of not having direct observations of group labels.

Our theoretical analysis reveals important insights about decision-making under latent heterogeneity. We establish that while the ``strong regret'' against an oracle with perfect group knowledge remains non-sub-linear due to inherent classification uncertainty, the ``regular regret'' against an oracle aware of only deterministic components achieves a minimax optimal rate in terms of $T$ and $d$. %Importantly, we demonstrate that traditional bandit algorithms that ignore latent heterogeneity incur linear regret, highlighting the necessity of our approach.

Through empirical validation using data from a mobile commerce platform, we demonstrate the practical value of our framework. The results show that our approach effectively handles real-world scenarios where customer heterogeneity plays a crucial role, such as in personalized pricing, resource allocation, and inventory management. This work provides practitioners with theoretically grounded tools for making sequential decisions under latent heterogeneity, bridging the gap between theoretical understanding and practical implementation in business applications.

Several important directions remain for future research. First, extending our framework to nonlinear structured bandit models would enable applications in more complex decision scenarios. Developing online tests for latent heterogeneity and studying heterogeneous treatment effects in bandits would further enhance the practical utility of our approach. Moreover, our finding that the ``strong regret'' remains non-sub-linear due to inherent classification uncertainty suggests a fundamental limitation that cannot be addressed through algorithmic improvements alone. This points to the necessity of mechanism design approaches that could incentivize customers to reveal their latent group memberships, potentially opening a new avenue of research at the intersection of bandit algorithms and mechanism design.

%-------
%
%  bibliography
%
%\nocite{*} % include all bib's
\spacingset{1.5}

\bibliographystyle{apalike}
\bibliography{0-main}

%-------
%
%  appendix
%
\newpage
\spacingset{1.5}
\appendix
\setcounter{page}{1}

\begin{center}
	{\Large\bf Supplementary Materials of \\
		``\TITLE''}
\end{center}
%\section{Proofs} \label{sec:proof}

%!TEX root = 0-main.tex

\section{Proof of Theorem \ref{thm:coeff-bound-hd}} \label{sec:proof-coeff-bound}

The EM algorithm is essentially an alternating maximization method, which alternatively optimizes between the identification of hidden labels $\{g_i\}$ and the estimation of parameter $\bgamma=(\btheta, \bbeta_1, \bbeta_2)$.
In the $\tau$-th episode, we utilize $N_{\tau-1} = n_02^{\tau - 1}$ samples from the previous episode. 
%We let $N=N_{\tau-1}$ and drop the index $\tau$ in this section since we are considering only one episode $\tau$. 
In the $(t+1)$-th iteration of the EM-algorithm, we have i.i.d samples in the set $\calI_{\tau-1}^{(t+1)}$ of size $n_{\tau}=N_{\tau-1}/t_{\tau, \max}$.

By the sub-Gaussianity of $\bx_i$, $\bz_i$, and $\epsilon_i$, there exist a constant $C$ such that \[\EE\left[(\bx_i^{\top}\bv)^{m}\right] \leq C^m \sigma_{x}^mm^{m/2}, \quad \EE[\epsilon_{i}^m]\leq C^m \sigma^mm^{m/2},
\text{ and } 
\PP\paren{\abs{\bz_i^{\top}\bv} \leq \mu} < 2e^{-\frac{\mu^2}{2\sigma^2_{z}}},\] 
for all $\mu>0$, all non-negative integers $m$, and all $\bv$ such that $\norm{\bv}_2=1$, where $\sigma_x$, $\sigma_z$, and $\sigma$ are the sub-Gaussian parameters of $\bx_i$, $\bz_i$, and $\epsilon_i$, respectively. %Let $\eta_x =\sigma_x/\sigma$. 
Moreover, define that $\delta_{\bgamma}^{(\tau, 0)}=\begin{cases}
	\delta_{1, 0}, & \text{if } \tau=1;\\
	 \sqrt{\frac{s\log d \log n_0}{n_0}}, & \text{if } \tau \geq 2.\end{cases}$
%where $\delta_{1, 0}=c_1\min\big\{\xi, 1-\xi, \norm{\bbeta_1^*-\bbeta_2^*}_2\big\}$. 

We will show the following theorem, which is a more detailed version of Theorem $\ref{thm:coeff-bound-hd}$:
\begin{theorem}\label{thm:1-detailed}
% that satisfy $c_2  \geq \max\left\{1/(2^{1/4}\sigma_x), 256MC^2\sigma_x\eta_x/\kappa, 384C^2\sigma_x^2M/\sqrt{\kappa}\right\}$ and $c_1 \leq \min\{1/4M, 2/\mu_0\}$, where $\mu_0$ is defined by $\mu_0= \sqrt{2}\sigma_z\sqrt{\log\paren{\frac{2144C^4\sigma_x^4c_2^2}{\kappa}}}$. 
	%Furthermore,  assume that $\bx_i^{\top}\bv$ has a bounded density around 0 for all $\norm{\bv}_2=1$, and $\sqrt{\frac{s^2\log d\log n_0}{n_0}} =o(1)$. 
	Let \[\left(\hat{\btheta}^{(\tau)},\hat{\bbeta}_{1}^{(\tau)},\hat{\bbeta}_{2}^{(\tau)}\right)=\left(\btheta^{(\tau,t_{\tau, \max})},\bbeta^{(\tau, t_{\tau, \max})}_{1},\bbeta^{(\tau, t_{\tau, \max})}_{2}\right)\] 
	be the output of Algorithm \ref{alg:em} in the $\tau$-th episode. There exist constants $c_1$, $c_2$,  and $\overline{C}$ such that, if Assumptions \ref{A1}--\ref{A4} hold with $\delta_{1, 0} \leq c_1\min\{\xi, \sigma C_{\rm{SNR}}\}$ and $C_{\rm{SNR}} \geq c_2$, by letting the number of iterations $t_{\tau, \max} \asymp \log n_0$ for $\tau=1$ and $t_{\tau, \max} \asymp 1$  for $\tau \geq 2$ and choosing $\kappa<(2\oC^2)^{-1}$,  $\widetilde\kappa=\oC^2\kappa<1/2$, \[\lambda_{n_{\tau}}^{(t+1)}=\frac{2\oC(1-(2\widetilde\kappa)^{t+1})}{1-2\widetilde\kappa}\sqrt{\frac{\log d}{n_{\tau}}}+\frac{\oC\kappa(2\widetilde\kappa)^{t}}{\sqrt{s}}\delta_{\bgamma}^{(\tau, 0)}, \]
	we have
	\begin{equation}
		\norm{\hat{\bbeta}_{1}^{(\tau)} - \bbeta_{1}^*}_2 + \norm{\hat{\bbeta}_{2}^{(\tau)} - \bbeta_{2}^*}_2 + \norm{\hat{\btheta}^{(\tau)} - \btheta^*}_2
		\lesssim \frac{1}{\kappa(1-2\widetilde\kappa)}\sqrt{\frac{s\log d\log n_0}{N_{\tau-1}}}, 
	\end{equation}
	and
	\begin{equation}
		\norm{\hat{\bbeta}_{1}^{(\tau)} - \bbeta_{1}^*}_1 + \norm{\hat{\bbeta}_{2}^{(\tau)} - \bbeta_{2}^*}_1 + \norm{\hat{\btheta}^{(\tau)} - \btheta^*}_1
		\lesssim   \frac{1}{\kappa(1-2\widetilde\kappa)}\sqrt{\frac{s^2\log d\log n_0}{N_{\tau-1}}},
	\end{equation}
	with probability at least $1-\frac{c\log^3 n_0}{\max^2\{N_{\tau-1}, d\}}$ for some constant $c$.
\end{theorem}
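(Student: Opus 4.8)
The plan is to reduce the whole argument to a single \emph{one-step contraction} for one EM iteration and then unroll it over the $t_{\tau,\max}$ iterations. The sample-splitting in Algorithm~\ref{alg:em} is what makes this modular: since $\bgamma^{(\tau,t-1)}$ depends only on the disjoint subsets $\cI_{\tau-1}^{(1)},\dots,\cI_{\tau-1}^{(t-1)}$, conditional on it the data in $\cI_{\tau-1}^{(t)}$ are fresh i.i.d.\ draws, so the E-step followed by the M-step of iteration $t$ can be treated as one weighted penalized $M$-estimation problem with a \emph{fixed} reference parameter $\bgamma^{(\tau,t-1)}$. I will write $\delta_t := \|\bbeta_1^{(\tau,t)}-\bbeta_1^*\|_2 + \|\bbeta_2^{(\tau,t)}-\bbeta_2^*\|_2 + \|\btheta^{(\tau,t)}-\btheta^*\|_2$ and $\delta_0 = \delta_{\bgamma}^{(\tau,0)}$, and carry the scaling $s^2\log d\log n_0\lesssim n_0$ throughout to guarantee $\delta_t = o(1)$ at every step, so that the local analysis remains valid and the iterate never leaves the basin of the correctly-labeled solution (ruling out the swapped fixed point $(\bbeta_2^*,\bbeta_1^*,-\btheta^*)$, which is where Assumption~\ref{A2} enters).

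For the one-step contraction, assume inductively that $\delta_{t-1}$ is below a small absolute constant (true at $t=1$ by Assumption~\ref{A2} with $\delta_{1,0}\le c_1\min\{\xi,\sigma C_{\mathrm{SNR}}\}$, propagated for $\tau\ge 2$ by the episode-$(\tau-1)$ bound). First I would control the E-step: by the Lipschitz dependence of $\omega$ in \eqref{eqn:omega-main} on $\bgamma$, together with Assumption~\ref{A1} (so $p(\bz_i^\top\btheta^*)$ is bounded away from $0$ and $1$) and Assumption~\ref{A3} (so for large SNR the oracle responsibility $\omega(y_i,\bx_i,\bz_i;\bgamma^*)$ concentrates near $\bbone(g_i=1)$ up to an error exponentially small in $C_{\mathrm{SNR}}$), one obtains $\frac1{n_\tau}\sum_{i\in\cI_{\tau-1}^{(t)}} |\omega_i^{(\tau,t)}-\bbone(g_i=1)|\,|\bx_i^\top\bv| \lesssim \kappa\,\delta_{t-1} + e^{-cC_{\mathrm{SNR}}^2}$ for every unit $\bv$, and likewise with $\bz_i$, where $\kappa$ is the prescribed small contraction constant and the last term is absorbed into the statistical floor $\oC\sqrt{(\log d)/n_\tau}$ by taking $C_{\mathrm{SNR}}\ge c_2$. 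Second, the M-step: the $\bbeta_1$-update is a weighted LASSO, and writing $y_i-\bx_i^\top\bbeta_1^* = \epsilon_i$ if $g_i=1$ and $= \bx_i^\top(\bbeta_2^*-\bbeta_1^*)+\epsilon_i$ if $g_i=2$, its gradient at $\bbeta_1^*$ splits into a mean-zero sub-Gaussian part of $\ell_\infty$-norm $\lesssim\oC\sqrt{(\log d)/n_\tau}$ (maximal inequality under Assumption~\ref{A4}) plus a bias $\lesssim\kappa\,\delta_{t-1}$ from the E-step (the large wrong-group term $\bx_i^\top(\bbeta_2^*-\bbeta_1^*)$ is damped by the near-zero responsibility on group-$2$ samples); restricted strong convexity of the weighted Gram matrix holds because a constant fraction of samples have $g_i=1$ with $\omega_i^{(\tau,t)}$ bounded below, so it inherits the restricted eigenvalue of $\bSigma_{x,1}$, and $\bbeta_2$ is symmetric. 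The $\btheta$-update is $\ell_1$-penalized logistic regression with soft labels, whose gradient at $\btheta^*$ splits analogously and whose Hessian is uniformly positive definite on the cone because $|\bz_i^\top\btheta|$ stays bounded (Assumption~\ref{A1} plus the inductive localization), giving $p(1-p)$ bounded below. With the two-term $\lambda_{n_\tau}^{(t+1)}$ in the statement — the geometric-sum piece absorbing the statistical gradient and the $\oC\kappa(2\widetilde\kappa)^t\delta_0/\sqrt{s}$ piece absorbing the decaying bias — the standard penalized $M$-estimator bound gives $\|\cdot\|_2\lesssim\sqrt{s}\,\lambda_{n_\tau}^{(t+1)}$ for each of the three blocks, hence the recursion $\delta_t \le 2\widetilde\kappa\,\delta_{t-1} + c\sqrt{s(\log d)/n_\tau}$, with $2\widetilde\kappa<1$ since $\widetilde\kappa=\oC^2\kappa<1/2$.

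Unrolling this recursion gives $\delta_{t_{\tau,\max}} \lesssim (2\widetilde\kappa)^{t_{\tau,\max}}\delta_0 + \sqrt{s\,t_{\tau,\max}(\log d)/N_{\tau-1}}$ using $n_\tau = N_{\tau-1}/t_{\tau,\max}$. For $\tau=1$, $\delta_0=\delta_{1,0}$ is a constant, so taking $t_{1,\max}\asymp\log n_0$ drives the first term below the second and leaves the rate $\sqrt{s(\log d)(\log n_0)/N_0}$; for $\tau\ge 2$, the previous-episode error is $\delta_0 \asymp \sqrt{s(\log d)(\log n_0)/N_{\tau-2}}$, already at the target rate since $N_{\tau-1}=2N_{\tau-2}$, so $t_{\tau,\max}\asymp 1$ suffices. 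The $\ell_1$-bounds follow from the LASSO/logistic cone condition $\|\hat\bbeta_g-\bbeta_g^*\|_1\lesssim\sqrt{s}\,\|\hat\bbeta_g-\bbeta_g^*\|_2$ and the analogue for $\btheta$. Finally a union bound over the $t_{\tau,\max}\asymp\log n_0$ iterations — each failing with probability $\lesssim 1/\max^2\{n_\tau,d\}$ and $n_\tau\asymp N_{\tau-1}/\log n_0$ — yields the stated probability $1-c\log^3 n_0/\max^2\{N_{\tau-1},d\}$.

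The main obstacle is closing the two-sided loop inside the one-step contraction: the M-step gradient bias is governed by the E-step responsibility error, which is in turn governed by the previous $\delta_{t-1}$, and this loop must close with a strict factor $2\widetilde\kappa<1$ \emph{uniformly over the LASSO and logistic cones and simultaneously for all three blocks of $\bgamma$}. Making this quantitative requires carefully tracking how the large cross term $\bx_i^\top(\bbeta_2^*-\bbeta_1^*)$ is suppressed by the near-zero responsibility on wrong-group samples, which is exactly where Assumptions~\ref{A1} and \ref{A3} and the explicit calibration $\delta_{1,0}\le c_1\min\{\xi,\sigma C_{\mathrm{SNR}}\}$, $C_{\mathrm{SNR}}\ge c_2$ against $\kappa$ and $\oC$ must be done with care. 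A secondary difficulty is establishing restricted strong convexity for the \emph{weighted} quadratic loss, and for the logistic loss when the iterate is only guaranteed to lie in a shrinking ball; this couples the sub-Gaussian design control of Assumption~\ref{A4} with the scaling $s^2\log d\log n_0\lesssim n_0$, and together they also justify the constant-iteration warm-start argument used for $\tau\ge 2$.
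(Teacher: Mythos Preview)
Your proposal is correct and follows the same architecture as the paper's proof: a one-step contraction (the paper packages the E-step control and concentration as Lemmas~\ref{thm:lemma-expectation}--\ref{thm:lemma-sample}, and the cone inequality as Lemma~\ref{thm:diff-L1-Bound}) giving $\delta_t \le 2\widetilde\kappa\,\delta_{t-1} + c\sqrt{s\log d/n_\tau}$, then an induction/unrolling over $t_{\tau,\max}$ iterations with exactly your episode-specific choices, followed by the cone lift to $\ell_1$ and the union bound over iterations for the probability. The only cosmetic difference is that the paper centers the responsibilities at the oracle \emph{soft} label $\omega_i^*=\omega(y_i,\bx_i,\bz_i;\bgamma^*)$ and uses the score identity $\EE[\omega_i^*\bx_i(y_i-\bx_i^\top\bbeta_1^*)]=0$ to make the population gradient vanish exactly, rather than centering at the hard label $\bbone(g_i=1)$ with an $e^{-cC_{\mathrm{SNR}}^2}$ residual as you do; both devices deliver the same $\kappa\,\delta_{t-1}$ bias, and your M-step paragraph already states the needed $\ell_\infty$-concentration/$\ell_2$-bias split (so your looser E-step display ``for every unit $\bv$'' is not actually used).
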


We temporarily drop the index $\tau$ in $N_{\tau-1}$, $n_{\tau}$, $\cI_{\tau-1}^{(t)}$, $t_{\tau, \max}$, $\omega_i^{(\tau, t)}$, $\bgamma^{(\tau, t)}$, and $\lambda_{n_{\tau}}^{(t)}$ when we are considering a single episode $\tau$. 
 The overall objective function $Q_n$ in the M-Step is the sum of the single-observation objective functions, that is, 
\begin{equation}
    Q_n(\bgamma \cond \bgamma^{(t)})
    \; := \; 
    \sum_{i \in \calI^{(t+1)}}\EE\brackets{ \ell\paren{\bx_i, y_i, \bz_i, g_i; \bgamma} \cond \bx_i, y_i, \bz_i; \bgamma^{(t)}}.
\end{equation}
For clearness, we replace $i \in \calI^{(t+1)}$ with $\sum_{i=1}^{n}$ when there is no ambiguity.
Simple calculation yields that 
\begin{align}\label{eqn:Qn}
    Q_n(\bgamma \cond \bgamma^{(t)})
    & = 
   \frac{1}{n} \sum_{i=1}^{n} \left[\omega_i^{(t)}\cdot \paren{ \log p(\bz_i^\top\btheta)-
    \frac{(y_i - \bx_i^\top\bbeta_1)^2 }{2\sigma^2 }} \right.   \\
    &\quad +
    \left. (1- \omega_i^{(t)}) \cdot \paren{\log(1-p(\bz_i^\top\btheta)) -
    \frac{(y_i - \bx_i^\top\bbeta_2)^2}{2\sigma^2}} \right]  \nonumber \\
   & = -  \frac{1}{2n} \sum_{i=1}^{n} \omega_i^{(t)} \cdot  \frac{(y_i - \bx_i^\top\bbeta_1)^2 }{\sigma^2} - \frac{1}{2n} \sum_{i=1}^{n} (1- \omega_i^{(t)}) \cdot \frac{(y_i - \bx_i^\top\bbeta_2)^2 }{\sigma^2}  \nonumber   \\
   & \quad + \frac{1}{n} \sum_{i=1}^{n}\left[ \omega_i^{(t)} \cdot \log p(\bz_i^\top\btheta) + (1- \omega_i^{(t)}) \cdot \log(1-p(\bz_i^\top\btheta))\right]  \nonumber 
\end{align}
where $\omega_i^{(t)} = \omega_i(\bgamma^{(t)}) = \omega(\bx_i, y_i, \bz_i ; \bgamma^{(t)})$ is defined by
\[
\omega(\bx, y, \bz ; \bgamma) 
= \frac{p(\bz^\top\btheta) \cdot
	    \phi\paren{\frac{y - \bx^\top\bbeta_1}{\sigma}}}{p(\bz^\top\btheta) \cdot \phi\paren{\frac{y - \bx^\top\bbeta_1}{\sigma}} 
	    + \paren{1-p(\bz^\top\btheta)} \cdot \phi\paren{\frac{y - \bx^\top\bbeta_2}{\sigma}}}.
\]
Let 
\begin{equation} %\label{eqn:Qn123}
\begin{aligned}
    Q_{n1}(\bbeta_1 \cond \bgamma^{(t)}) & =  -\frac{1}{2n} \sum_{i=1}^{n} \omega_i^{(t)}\cdot  \frac{(y_i - \bx_i^\top\bbeta_1)^2 }{\sigma^2}, \\
    Q_{n2}(\bbeta_2 \cond \bgamma^{(t)}) & =  -\frac{1}{2n} \sum_{i=1}^{n} (1- \omega_i^{(t)}) \cdot \frac{(y_i - \bx_i^\top\bbeta_2)^2 }{\sigma^2}, \quad\text{and}\quad \\
    Q_{n3}(\btheta \cond \bgamma^{(t)}) & =  \frac{1}{n} \sum_{i=1}^{n} \paren{ \omega_i^{(t)} \cdot \log p(\bz_i^\top\btheta) + (1- \omega_i^{(t)}) \cdot \log(1-p(\bz_i^\top\btheta)) }.
\end{aligned}
\end{equation}
Then, in the $(t+1)$-th iteration, the M-step is implemented as 
\begin{align}
    \bbeta_1^{(t+1)} & := \underset{\bbeta_1}{\arg\min} \;  -Q_{n1}\paren{\bbeta_1 \cond \bgamma^{(t)}}  
    + \lambda_n^{(t)} \norm{\bbeta_1}_1,  \label{eqn:beta-1-opt}\\
    \bbeta_2^{(t+1)} & := \underset{\bbeta_2}{\arg\min} \; -Q_{n2}\paren{\bbeta_2 \cond \bgamma^{(t)}}  
    + \lambda_n^{(t)} \norm{\bbeta_2}_1,  \label{eqn:beta-2-opt} \\
    \btheta^{(t+1)} & := \underset{\btheta}{\arg\min} \; -Q_{n3}\paren{\btheta \cond \bgamma^{(t)}} 
    + \lambda_n^{(t)} \norm{\btheta}_1.  \label{eqn:theta-opt}
\end{align}

%Given a suitable initialization, the proposed EM algorithm then proceeds by iterating between the E-step and the M-step, which is summarized in the Algorithm \ref{alg:em-bandit}.

To prove Theorem \ref{thm:1-detailed}, we first present Lemmas \ref{thm:lemma-expectation} and \ref{thm:lemma-sample} on iterative estimation bounds. Let $\omega_i^* = \omega_i(\bgamma^*) = \omega(\bx_i, y_i, \bz_i ; \bgamma^*)$.

\begin{lemma}[Population EM iterates]  
\label{thm:lemma-expectation}
Assume that Assumptions \ref{A1}--\ref{A4} hold and that $\bx_i^{\top}(\bbeta_1^*-\bbeta^*_2)/\norm{\bbeta_1^*-\bbeta^*_2}_2$ has a bounded density around 0.  For any constant $\kappa>0$,  there exist constants $c_1$, $c_2$ such that, 
if $\norm{\bbeta_1^{(t)} - \bbeta_1^*}_2 + \norm{\bbeta_2^{(t)} - \bbeta_2^*}_2 + \norm{\btheta^{(t)} - \btheta^*}_2 \le c_1\min\big\{\xi, \norm{\bbeta_1^*-\bbeta_2^*}_2\big\}$ and $\norm{\bbeta_1^*-\bbeta_2^*}_2 /\sigma > c_2$, we have
\begin{align*}
    \norm{ \EE\left[ \omega_i^{(t)} \bx_i ( \bx_i^\top \bbeta_1^* - y_i) \right] 
        - \EE\left[ \omega_i^* \bx_i ( \bx_i^\top \bbeta_1^* - y_i) \right] }_2  
    & \le \kappa \cdot \paren{\norm{\bbeta_1^{(t)} - \bbeta_1^*}_2 + \norm{\bbeta_2^{(t)} - \bbeta_2^*}_2 + \norm{\btheta^{(t)} - \btheta^*}_2}, 
    \\
    \norm{ \EE\left[ ( \omega_i^{(t)} - p(\bz_i^\top\btheta^{*}) ) \bz_i \right] 
       - \EE\left[ (\omega_i^* - p(\bz_i^\top\btheta^{*}) ) \bz_i \right] }_2
    & \le  \kappa \cdot \paren{\norm{\bbeta_1^{(t)} - \bbeta_1^*}_2 + \norm{\bbeta_2^{(t)} - \bbeta_2^*}_2 + \norm{\btheta^{(t)} - \btheta^*}_2},\\
       \norm{ \EE\left[\omega_i^{(t)} \bx_i\bx^{\top}_i\right] 
    	- \EE\left[\omega_i^*\bx_i\bx^{\top}_i \right]}_2& \leq \kappa \cdot \paren{\norm{\bbeta_1^{(t)} - \bbeta_1^*}_2 + \norm{\bbeta_2^{(t)} - \bbeta_2^*}_2 + \norm{\btheta^{(t)} - \btheta^*}_2}.
\end{align*}
\end{lemma}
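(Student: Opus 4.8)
\emph{Strategy.} The plan is to exploit the logit form of the responsibility weight. With $p(x)=1/(1+e^{-x})$, rearranging \eqref{eqn:omega-main} gives $\omega(\bx,y,\bz;\bgamma)=p\big(s(\bx,y,\bz;\bgamma)\big)$, where the log-odds index is
\begin{equation*}
s(\bx,y,\bz;\bgamma)=\bz^\top\btheta+\frac{1}{2\sigma^2}\Big[(y-\bx^\top\bbeta_2)^2-(y-\bx^\top\bbeta_1)^2\Big],
\end{equation*}
so that $\nabla_\btheta\omega=\omega(1-\omega)\bz$, $\nabla_{\bbeta_1}\omega=\omega(1-\omega)\sigma^{-2}(y-\bx^\top\bbeta_1)\bx$, and $\nabla_{\bbeta_2}\omega=-\omega(1-\omega)\sigma^{-2}(y-\bx^\top\bbeta_2)\bx$. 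Each of the three quantities in the lemma has the form $\big\|\EE[(\omega_i^{(t)}-\omega_i^*)H_i]\big\|$ with $H_i\in\{\bx_i(\bx_i^\top\bbeta_1^*-y_i),\,\bz_i,\,\bx_i\bx_i^\top\}$. Writing $\bgamma_\lambda=\bgamma^*+\lambda(\bgamma^{(t)}-\bgamma^*)$ and $\omega_\lambda=\omega(\cdot\,;\bgamma_\lambda)$, the fundamental theorem of calculus gives
\begin{equation*}
\omega_i^{(t)}-\omega_i^*=\int_0^1\omega_\lambda(1-\omega_\lambda)\,\angles{\nabla_\bgamma s(\bx_i,y_i,\bz_i;\bgamma_\lambda),\,\bgamma^{(t)}-\bgamma^*}\,d\lambda,
\end{equation*}
and, after factoring out the parameter-error sum $\rho_t:=\norm{\bbeta_1^{(t)}-\bbeta_1^*}_2+\norm{\bbeta_2^{(t)}-\bbeta_2^*}_2+\norm{\btheta^{(t)}-\btheta^*}_2$ (the quantity on the right-hand side of the lemma), the inner product is $\rho_t$ times a low-degree polynomial in $(\bx_i,y_i,\bz_i)$ whose moments, after projecting vectors onto unit directions, are bounded free of the dimension by Assumption \ref{A4}.

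\emph{Dimension-free reduction.} Substituting this into $\|\EE[(\omega_i^{(t)}-\omega_i^*)H_i]\|$, moving $\EE$ inside the integral, and testing against a unit vector reduces the lemma to bounding, uniformly in $\lambda\in[0,1]$, quantities of the form $\EE[\omega_\lambda(1-\omega_\lambda)\,\rho_t\,\widetilde P_\lambda\,V_i]$, where $V_i\in\{\bx_i^\top\bw(\bx_i^\top\bbeta_1^*-y_i),\,\bz_i^\top\bw,\,(\bx_i^\top\bw)^2\}$ and $\widetilde P_\lambda$ is a dimension-free sub-Gaussian polynomial. Since $\omega_\lambda(1-\omega_\lambda)\le\tfrac14$, Cauchy--Schwarz yields
\begin{equation*}
\big|\EE[\omega_\lambda(1-\omega_\lambda)\,\rho_t\widetilde P_\lambda V_i]\big|\ \le\ \tfrac12\,\rho_t\,\sqrt{\EE[\omega_\lambda(1-\omega_\lambda)]}\ \sqrt{\EE[\widetilde P_\lambda^2 V_i^2]},
\end{equation*}
and the last factor is $O(1)$ by Assumption \ref{A4}. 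Thus the lemma follows once $\EE[\omega_\lambda(1-\omega_\lambda)]$ is shown to be as small as desired, uniformly over the neighborhood $\rho_t\le c_1\min\{\xi,\Delta^*\}$, provided $c_1$ is a sufficiently small constant and $\Delta^*/\sigma$ is sufficiently large.

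\emph{Separation plus stability.} Split $\EE[\omega_\lambda(1-\omega_\lambda)]\le\EE[\omega^*(1-\omega^*)]+\EE[\,|\omega_\lambda-\omega^*|\,]$ via $|\omega_\lambda(1-\omega_\lambda)-\omega^*(1-\omega^*)|\le|\omega_\lambda-\omega^*|$. The \emph{stability} term obeys $|\omega_\lambda-\omega^*|\le\tfrac14|s_\lambda-s^*|$, and $|s_\lambda-s^*|$ is at most $\rho_t$ times a dimension-free sub-Gaussian polynomial, so $\EE[\,|\omega_\lambda-\omega^*|\,]\le c_1C'$ for a constant $C'$ depending only on $\sigma$, the sub-Gaussian parameters, and $\norm{\bbeta_1^*}_2,\norm{\bbeta_2^*}_2$; taking $c_1$ small makes it negligible. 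For the \emph{separation} term, condition on $g_i$; on $\{g_i=1\}$ one has $y_i=\bx_i^\top\bbeta_1^*+\epsilon_i$ and
\begin{equation*}
s^*=\bz_i^\top\btheta^*+\frac{1}{2\sigma^2}\Big[\big(\bx_i^\top(\bbeta_1^*-\bbeta_2^*)\big)^2+2\epsilon_i\,\bx_i^\top(\bbeta_1^*-\bbeta_2^*)\Big].
\end{equation*}
Using $\omega^*(1-\omega^*)\le(1-\omega^*)\bbone(s^*\ge0)+\tfrac14\bbone(s^*<0)$ and $1-\omega^*\le e^{-s^*}$, the contribution on $\{g_i=1\}$ is at most $\PP(g_i=1,\,s^*<0)+e^{-M}+\PP(|s^*|<M,\,g_i=1)$ for every $M>0$. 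Given $(\bx_i,\bz_i)$, $s^*$ is Gaussian with mean $\bz_i^\top\btheta^*+(\bx_i^\top(\bbeta_1^*-\bbeta_2^*))^2/(2\sigma^2)$ and standard deviation $|\bx_i^\top(\bbeta_1^*-\bbeta_2^*)|/\sigma$; hence $s^*$ can fall below $M$ only when $(\bx_i^\top(\bbeta_1^*-\bbeta_2^*))^2\lesssim(M+1)\sigma^2$ --- an event of probability $\lesssim (M+1)\sigma/\Delta^*$ by the assumed bounded density (anti-concentration) of $\bx_i^\top(\bbeta_1^*-\bbeta_2^*)/\Delta^*$ near $0$ --- or when the Gaussian fluctuation cancels the quadratic, a tail event of probability $\lesssim e^{-c(\bx_i^\top(\bbeta_1^*-\bbeta_2^*))^2/\sigma^2}$. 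Combining these and choosing $M\asymp\sqrt{\log(\Delta^*/\sigma)}$ gives $\EE[\omega^*(1-\omega^*)]\to0$ as $\Delta^*/\sigma\to\infty$. Taking first $c_1$ small and then $c_2$ large makes the right-hand side of the Cauchy--Schwarz bound at most $\kappa\rho_t$, which is what the lemma asserts (the cases $H_i=\bz_i$ and $H_i=\bx_i\bx_i^\top$, and $g_i=2$, are identical up to relabeling).

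\emph{Main obstacle.} The hard part will be the separation term --- that a large signal-to-noise ratio drives the posterior responsibility $\omega^*$ toward $\{0,1\}$ in $L^1$. This is precisely where the anti-concentration of $\bx_i^\top(\bbeta_1^*-\bbeta_2^*)$ near the origin is indispensable: sub-Gaussianity together with $\lambda_{\min}(\bSigma_{x,k})>1/M$ does not by itself prevent this projection from piling up near $0$, which would keep $(\bx_i^\top(\bbeta_1^*-\bbeta_2^*))^2/\sigma^2$ --- and thus $\omega^*(1-\omega^*)$ --- of constant order on a set of non-vanishing probability. A second, more routine, issue is uniformity over the parameter neighborhood: one must check that the $O(c_1)$ perturbations of $(\bbeta_1^*,\bbeta_2^*,\btheta^*)$ neither enlarge the exceptional sets above nor inflate the sub-Gaussian moment factors, which is why the radius $c_1$ must be a small constant fixed before letting $\Delta^*/\sigma\to\infty$. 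The gradient identity, the Cauchy--Schwarz step, and the sub-Gaussian moment estimates are all routine.
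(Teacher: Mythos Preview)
Your setup via the log-odds $s$ and the integral representation is the same as the paper's, and the ``separation'' analysis of $\EE[\omega^*(1-\omega^*)]$ is essentially correct. The gap is in the Cauchy--Schwarz step: decoupling $\omega_\lambda(1-\omega_\lambda)$ from the polynomial factor $\widetilde P_\lambda V_i$ is too lossy, because these two factors are strongly (anti-)correlated through $U:=\bx_i^\top(\bbeta_1^*-\bbeta_2^*)$.

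Concretely, take the first inequality and condition on $g_i=2$. Then $y_i-\bx_i^\top\bbeta_1^*=\bx_i^\top(\bbeta_2^*-\bbeta_1^*)+\epsilon_i$ and the $\nabla_{\bbeta_1}s$ part of $\widetilde P_\lambda$ is $\sigma^{-2}(y_i-\bx_i^\top\bbeta_{1,\lambda})(\bx_i^\top u)$, so both $\widetilde P_\lambda$ and $V_i=\bx_i^\top\bw\,(\bx_i^\top\bbeta_1^*-y_i)$ carry a factor of $U$; hence $\EE[\widetilde P_\lambda^2 V_i^2]\asymp (\Delta^*/\sigma)^4$, not $O(1)$. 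On the other hand, your own analysis shows only $\EE[\omega^*(1-\omega^*)]\asymp \sigma/\Delta^*$ (the region $|U|\lesssim\sigma$ has probability $\asymp\sigma/\Delta^*$ by the bounded-density assumption and contributes $O(1)$ to $\omega^*(1-\omega^*)$). Plugging into your Cauchy--Schwarz bound gives a quantity of order $(\Delta^*/\sigma)^{3/2}$, which diverges precisely in the regime $\Delta^*/\sigma>c_2$ that you must cover; you cannot then choose $c_2$ large to make the bound $\le\kappa$. The same issue hits the other two inequalities through the $\nabla_{\bbeta_g}s$ pieces of $\widetilde P_\lambda$, and your stability term $\EE|s_\lambda-s^*|$ likewise picks up a factor $\sim\Delta^*/\sigma^2$ that prevents fixing $c_1$ before $c_2$.

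The paper's proof never separates the two factors. It writes out $\partial\omega/\partial\bbeta_g$ explicitly, keeps the exponential $\exp\big(\big|(y-\bx^\top\bbeta_1)^2-(y-\bx^\top\bbeta_2)^2\big|/(2\sigma^2)\big)$ in the denominator together with the polynomial $\sigma^{-2}(y-\bx^\top\bbeta_1)(y-\bx^\top\bbeta_1^*)$ in the numerator, and bounds the ratio pointwise before taking expectation. On the event $\{|U|\ge\mu_2\Delta^*\}$ the exponential dominates; on the complement (probability $\lesssim\mu_2$ by anti-concentration) the elementary inequality $u^2e^{-u^2/8}\le 8/e$ keeps the integrand $O(1)$. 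To repair your argument you need to retain this coupling---either by working with $\EE[\omega_\lambda(1-\omega_\lambda)\widetilde P_\lambda V_i]$ directly on the two events, or by absorbing all $U$-dependent factors into the exponential before applying any moment inequality.
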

 Proof of Lemma \ref{thm:lemma-expectation} is provided in Section \ref{sec:proof-lemma-expectation}. 

\begin{lemma}[Sample EM iterates]  \label{thm:lemma-sample}
Under the assumptions of Theorem \ref{thm:coeff-bound-hd}, suppose that $\bgamma^{(t)}$ is independent with $\{\bx_i, y_i, \bz_i\}$'s, then there exists some constant $C > 0$, such that, with probability at least $1 - \frac{6}{\max\{n, d\}^2}$,  
\begin{align*}
\norm{ \frac{1}{n}\sum_{i=1}^n \left[\omega_i^{(t)} \bx_i ( \bx_i^\top \bbeta_1^* - y_i) \right] 
    - \EE\left[ \omega_i^{(t)} \bx_i ( \bx_i^\top \bbeta_1^* - y_i) \right] }_\infty 
& \le  C \sqrt{ \frac{\log \max\{n, d\}}{n} },  \\
\norm{  \frac{1}{n}\sum_{i=1}^n \left[( \omega_i^{(t)} - p(\bz_i^\top\btheta^{*}) ) \bz_i \right]
	- \EE\left[ (\omega_i^{(t)} - p(\bz_i^\top\btheta^{*}) ) \bz_i \right] }_\infty
& \le C \sqrt{ \frac{\log \max\{n, d\}}{n}},\\
\norm{\frac{1}{n}\sum_{i=1}^n \left[ \omega_i^{(t)} \bx_i\bx_i^{\top} \right]
	- \EE\left[ \omega_i^{(t)}\bx_i\bx_i^{\top}  \right]}_{\max}
& \le C \sqrt{ \frac{\log \max\{n, d\}}{n}},
\end{align*}
where $\norm{A}_{\max}:=\max\limits_{j, k \in [d]} |A_{j, k}|$ for any matrix $A \in \RR^{d \times d}$.
\end{lemma}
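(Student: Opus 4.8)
The plan is to establish each of the three inequalities separately by a coordinate-wise concentration argument carried out \emph{conditionally on the iterate} $\bgamma^{(t)}$, and then to take a union bound over coordinates. By the sample-splitting construction in Algorithm~\ref{alg:em}, the iterate $\bgamma^{(t)}$ is built from the earlier disjoint subsets $\cI^{(1)},\dots,\cI^{(t)}$ (together with the initial estimator) and is therefore independent of the fresh data $\{\bx_i,y_i,\bz_i\}_{i\in\cI^{(t+1)}}$ used in this iteration; this is exactly the hypothesis of the lemma, and it guarantees that, conditional on $\bgamma^{(t)}$, the summands $\omega_i^{(t)}\bx_i(\bx_i^\top\bbeta_1^*-y_i)$, $(\omega_i^{(t)}-p(\bz_i^\top\btheta^*))\bz_i$, and $\omega_i^{(t)}\bx_i\bx_i^\top$ are i.i.d.\ across $i\in\cI^{(t+1)}$. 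The key simplifying fact throughout is that the EM weight $\omega_i^{(t)}=\omega(\bx_i,y_i,\bz_i;\bgamma^{(t)})$ is a conditional probability and hence lies in $[0,1]$ almost surely, uniformly in $\bgamma^{(t)}$, so every Orlicz-norm bound below is free of $\bgamma^{(t)}$.

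\emph{Tail bounds for the individual summands.} For the logistic term, the $j$-th coordinate $(\omega_i^{(t)}-p(\bz_i^\top\btheta^*))z_{ij}$ is a factor in $[-1,1]$ times the sub-Gaussian variable $z_{ij}$, hence sub-Gaussian with parameter $O(\sigma_z)$, and sub-Gaussian (Hoeffding-type) concentration applies. For the reward term I would decompose $\bx_i^\top\bbeta_1^*-y_i=\bx_i^\top(\bbeta_1^*-\bbeta_{g_i}^*)-\epsilon_i$, which is sub-Gaussian as a bounded-norm linear form in the sub-Gaussian $\bx_i$ plus the Gaussian $\epsilon_i$; consequently $\omega_i^{(t)}x_{ij}(\bx_i^\top\bbeta_1^*-y_i)$ is a $[0,1]$ factor times a product of two sub-Gaussians, hence sub-exponential with $\psi_1$-norm bounded by a constant depending only on $\sigma_x,\sigma,\norm{\bbeta_1^*}_2,\norm{\bbeta_2^*}_2$, and Bernstein's inequality applies. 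The matrix entry $\omega_i^{(t)}x_{ij}x_{ik}$ is handled identically. Centering each summand at its conditional expectation at most doubles the relevant Orlicz norm.

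\emph{Deviation level and union bound.} Set the target accuracy $t_0=C\sqrt{\log\max\{n,d\}/n}$. Under the sample-size condition $s^2\log d\log n_0\lesssim n_0$ of Theorem~\ref{thm:coeff-bound-hd} we have $\log\max\{n,d\}/n=o(1)$, which places Bernstein's inequality in its sub-Gaussian regime, so each coordinate (or matrix entry) deviates by more than $t_0$ with conditional probability at most $2\exp(-c_0C^2\log\max\{n,d\})=2\max\{n,d\}^{-c_0C^2}$. A union bound over the $d$ coordinates for the first two statements and over the $d^2$ entries for the third, with $C$ chosen large enough that $c_0C^2\ge 4$, makes each of the three events fail with probability at most $2\max\{n,d\}^{-2}$; summing gives the claimed $1-6/\max\{n,d\}^2$. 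Since every bound in this step holds conditionally on $\bgamma^{(t)}$ with a probability not depending on the value of $\bgamma^{(t)}$, the same bounds hold unconditionally after integrating out $\bgamma^{(t)}$.

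\emph{Main point of care.} This is a routine concentration lemma, so there is no real obstacle; the one step requiring genuine bookkeeping is verifying that the $\psi_1$-norm of $x_{ij}(\bx_i^\top\bbeta_1^*-y_i)$, and the $\psi_1$-norm of $x_{ij}x_{ik}$, are true constants independent of $n,d$ and of $\bgamma^{(t)}$. This uses the decomposition $y_i=\bx_i^\top\bbeta_{g_i}^*+\epsilon_i$, the sub-Gaussianity of $(\bx_i,\bz_i,\epsilon_i)$ from Assumption~\ref{A4}, the boundedness of the signal magnitudes $\norm{\bbeta_1^*}_2,\norm{\bbeta_2^*}_2$, and the standard fact that a product of two sub-Gaussian variables is sub-exponential; the $[0,1]$-boundedness of $\omega_i^{(t)}$ then strips away any dependence on $\bgamma^{(t)}$. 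Accordingly, the constant $C$ in the statement is allowed to depend on $\sigma_x,\sigma_z,\sigma,\norm{\bbeta_1^*}_2,\norm{\bbeta_2^*}_2$, and the eigenvalue bound $M$ of Assumption~\ref{A4}.
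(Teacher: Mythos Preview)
Your proposal is correct and follows essentially the same route as the paper: condition on $\bgamma^{(t)}$ so the summands are i.i.d., use $\omega_i^{(t)}\in[0,1]$ to reduce each coordinate to a product of sub-Gaussians (hence sub-exponential), apply Bernstein's inequality at level $C\sqrt{\log\max\{n,d\}/n}$, and union-bound over the $d$ (resp.\ $d^2$) coordinates. The only cosmetic difference is that the paper expresses the $\psi_1$ constant for the first term via $\|\bbeta_1^*-\bbeta_2^*\|_2$ (equivalently $c_{\rm SNR}\sigma$) rather than $\|\bbeta_1^*\|_2,\|\bbeta_2^*\|_2$ separately, and it treats all three terms through the same sub-exponential Bernstein inequality rather than invoking a separate Hoeffding bound for the logistic piece; neither point affects the argument.
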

 
 Proof of Lemma \ref{thm:lemma-sample} is provided in Section \ref{sec:proof-lemma-sample}. 

\medskip

In Steps 1 and 2 in the following proof, we use $\hat{\bbeta}_1$, $\hat{\bbeta}_2$, $\hat{\btheta}$, $\lambda$ to replace $\bbeta_1^{(t+1)}$, $\bbeta_2^{(t+1)}$,  $\btheta^{(t+1)}$, and $\lambda_n^{(t+1)}$ for simplicity. Also, we temporarily suppose that $\norm{\bbeta_1^{(t)} - \bbeta_1^*}_2 + \norm{\bbeta_2^{(t)} - \bbeta_2^*}_2 + \norm{\btheta^{(t)} - \btheta^*}_2 \le c_1 \min\braces{\xi, \norm{\bbeta_1^*-\bbeta_2^*}_2}$, which is satisfied with $t=0$ and will be shown by induction in Step 3.

\noindent
\textbf{STEP 1. Sample EM iterative bounds for $\bbeta_1$ and $\bbeta_2$.} 

We make use of the definition of \eqref{eqn:beta-1-opt} and \eqref{eqn:beta-2-opt} and a decomposition of the main objective function $Q_{n1}$ as follows:
\begin{align}
-Q_{n1}\paren{{\bbeta_1^*} \cond \bgamma^{(t)}} + Q_{n1}\paren{\hat\bbeta_1 \cond \bgamma^{(t)}}
& = \frac{1}{n\sigma^2}\angles{\sum_{i=1}^{n}\omega_i^{(t)}\paren{y_i-\bx_i^\top\bbeta_1^*} \bx_i,\; \hat\bbeta_1-\bbeta_1^*} \nonumber\\
& - \paren{\hat\bbeta_1-\bbeta_1^*}^\top 
\paren{\frac{1}{2n\sigma^2}\sum_{i=1}^n\omega_i^{(t)}\bx_i\bx_i^\top}
\paren{\hat\bbeta_1-\bbeta_1^*}. \label{eqn:Qn-decomp}
\end{align}
By \eqref{eqn:beta-1-opt}, we have
\begin{equation} \label{eqn:Qn-diff-lb-1}
\lambda\sigma^2\paren{\norm{\hat\bbeta_1}_1-\norm{\bbeta_1^*}_1} 
\le -Q_{n1}\paren{{\bbeta_1^*} \cond \bgamma^{(t)}} + Q_{n1}\paren{\hat\bbeta_1 \cond \bgamma^{(t)}}.
\end{equation}

Combine \eqref{eqn:Qn-decomp} and \eqref{eqn:Qn-diff-lb-1}, we have
\begin{equation}
\begin{aligned}
&\quad  \paren{\hat\bbeta_1-\bbeta_1^*}^\top 
\paren{\frac{1}{2n}\sum_{i=1}^n\omega_i^{(t)}\bx_i\bx_i^{\top}}
\paren{\hat\bbeta_1-\bbeta_1^*} \\
&\le 
\frac{1}{n}\angles{\sum_{i=1}^{n}\omega_i^{(t)}\paren{y_i-\bx_i^\top\bbeta_1^*} \bx_i,\; \hat\bbeta_1-\bbeta_1^*} - \lambda\sigma^2\paren{\norm{\hat\bbeta_1}_1-\norm{\bbeta_1^*}_1}.
\end{aligned}
\end{equation}
In addition, we have the fact that $\EE\brackets{\omega_i^*\paren{y_i-\bx_i^\top\bbeta_1^*} \bx_i}=0$. To see this, note that the probability density function of $y_i$ given $\bx_i, \bz_i$ is
\[f(y_i;\bgamma^*) = p(\bz_i^{\top}\btheta^*)\phi\left(\frac{y_i-\bx_i^{\top}\bbeta_1^*}{\sigma}\right)+(1-p(\bz_i^{\top}\btheta^*))\exp\left(\frac{y_i-\bx_i^{\top}\bbeta_2^*}{\sigma}\right).\]
It is well known that  
\[\EE_y\left[\frac{\partial \log f(y; \bgamma^*)}{\partial \bgamma}\right] = \EE_y\left[\frac{1}{f(y;\bgamma^*)}\frac{\partial f(y; \bgamma^*)}{\partial \bgamma}\right]=\int f(y;\bgamma^*) \frac{1}{f(y;\bgamma^*)} \frac{\partial f(y; \bgamma^*)}{\partial \bgamma} \mathrm{d}y=0,\]
which yields that
\begin{align*}
	0&=\EE_{y_i}\left[\frac{\partial \log f(y_i; \bgamma^*)}{\partial \bbeta_1}\right]\\
	&=\EE_{y_i}\left[\frac{1}{f(y_i;\bgamma^*)}p(\bz_i^{\top}\btheta^*)\left(y_i-\bx_i^\top \bbeta_1^*\right) \bx_i \phi\left(\frac{y_i-\bx_i^{\top}\bbeta_1^*}{\sigma}\right)\right]\\
	&=\EE_{y_i}\left[\omega_i^*\left(y_i-\bx^\top \bbeta_1^*\right) \bx_i \right].
\end{align*}

Applying the decomposition
\begin{align*}
&\quad \angles{\frac{1}{n}\sum_{i=1}^{n}\omega_i^{(t)}\paren{y_i-\bx_i^\top\bbeta_1^*} \bx_i,\; \hat\bbeta_1-\bbeta_1^*} \\
& \le \abs{\angles{\frac{1}{n}\sum_{i=1}^{n}\omega_i^{(t)}\paren{y_i-\bx_i^\top\bbeta_1^*} \bx_i - \EE\brackets{\omega_i^{(t)}\paren{y_i-\bx_i^\top\bbeta_1^*} \bx_i},\; \hat\bbeta_1-\bbeta_1^*}} \\
& ~~~ + \abs{\angles{\EE\brackets{\omega_i^{(t)}\paren{y_i-\bx_i^\top\bbeta_1^*} \bx_i} - \EE\brackets{\omega_i^*\paren{y_i-\bx_i^\top\bbeta_1^*} \bx_i},\; \hat\bbeta_1-\bbeta_1^*}} \nonumber
\end{align*} 
and Lemmas \ref{thm:lemma-expectation} and \ref{thm:lemma-sample}, we obtain, with probability at least $1-2/\max\{n, d\}^2$, 
\begin{equation} \label{eqn:Qn-diff-ub}
\begin{aligned}
& \angles{\frac{1}{n}\sum_{i=1}^{n}\omega_i^{(t)}\paren{y_i-\bx_i^\top\bbeta_1^*} \bx_i,\; \hat\bbeta_1-\bbeta_1^*} \\
& \le  
C \sqrt{\frac{\log \max\{n, d\} }{ n}}\cdot\norm{\hat\bbeta_1-\bbeta_1^*}_1 
+ \kappa\cdot\paren{\norm{\bbeta_1^{(t)} - \bbeta^*_1}_2 + \norm{\bbeta_2^{(t)} - \bbeta^*_2}_2 + \norm{\btheta^{(t)} - \btheta^*}_2}\cdot\norm{\hat\bbeta_1-\bbeta_1^*}_2. 
\end{aligned}
\end{equation}

\begin{lemma} \label{thm:diff-L1-Bound}
Let $S = {\rm supp}(\bbeta_1^*)$ and $s=|S|$. When \[\lambda\sigma^2\ge 3C\sqrt{\frac{\log \max\{d, n\}}{ n}} + \frac{\kappa}{\sqrt{s}}\cdot\paren{\norm{\bbeta_1^{(t)} - \bbeta^*_1}_2 + \norm{\bbeta_2^{(t)} - \bbeta^*_2}_2 + \norm{\btheta^{(t)} - \btheta^*}_2},\] we have
\begin{equation}
\norm{ \hat\bbeta_1 - \bbeta_1^*}_1 \leq 5\sqrt{s}\norm{ \hat\bbeta_1 - \bbeta_1^*}_2. 
\end{equation}
\end{lemma}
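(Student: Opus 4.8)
The plan is to run the classical LASSO ``basic inequality'' / cone-condition argument, with one extra term to dispose of that comes from the EM drift between $\bgamma^{(t)}$ and $\bgamma^*$. First I would combine the decomposition \eqref{eqn:Qn-decomp} with the optimality bound \eqref{eqn:Qn-diff-lb-1} and discard the quadratic form $\paren{\hat\bbeta_1-\bbeta_1^*}^\top\big(\tfrac{1}{2n}\sum_{i=1}^n\omega_i^{(t)}\bx_i\bx_i^\top\big)\paren{\hat\bbeta_1-\bbeta_1^*}$, which is nonnegative since each $\omega_i^{(t)}\in[0,1]$. Writing $\Delta:=\hat\bbeta_1-\bbeta_1^*$, this leaves the basic inequality
\[
\lambda\sigma^2\paren{\norm{\hat\bbeta_1}_1-\norm{\bbeta_1^*}_1}\ \le\ \angles{\tfrac1n\sum_{i=1}^n\omega_i^{(t)}\paren{y_i-\bx_i^\top\bbeta_1^*}\bx_i,\ \Delta},
\]
whose right-hand side is already controlled by \eqref{eqn:Qn-diff-ub}: it is at most $C\sqrt{\log\max\{n,d\}/n}\,\norm{\Delta}_1+\kappa\Lambda^{(t)}\norm{\Delta}_2$, where I abbreviate $\Lambda^{(t)}:=\norm{\bbeta_1^{(t)}-\bbeta_1^*}_2+\norm{\bbeta_2^{(t)}-\bbeta_2^*}_2+\norm{\btheta^{(t)}-\btheta^*}_2$.

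Next I would split over the support $S={\rm supp}(\bbeta_1^*)$, $|S|=s$. Since $\bbeta_1^*$ vanishes on $S^c$, the triangle inequality gives $\norm{\hat\bbeta_1}_1-\norm{\bbeta_1^*}_1\ge\norm{\Delta_{S^c}}_1-\norm{\Delta_S}_1$, while $\norm{\Delta}_1=\norm{\Delta_S}_1+\norm{\Delta_{S^c}}_1$. Now distinguish two cases. If $\norm{\Delta}_1\le 5\sqrt s\,\norm{\Delta}_2$ the claim holds, so assume instead $\norm{\Delta}_2<\norm{\Delta}_1/(5\sqrt s)$. In this regime the EM-drift term is small relative to $\norm{\Delta}_1$: $\kappa\Lambda^{(t)}\norm{\Delta}_2<\tfrac{\kappa}{5\sqrt s}\Lambda^{(t)}\norm{\Delta}_1$, so together with the hypothesis $\lambda\sigma^2\ge 3C\sqrt{\log\max\{n,d\}/n}+\tfrac{\kappa}{\sqrt s}\Lambda^{(t)}$ the right-hand side of the basic inequality is at most $\big(C\sqrt{\log\max\{n,d\}/n}+\tfrac{\kappa}{3\sqrt s}\Lambda^{(t)}\big)\norm{\Delta}_1\le\tfrac13\lambda\sigma^2\norm{\Delta}_1$.

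Substituting back, $\lambda\sigma^2\paren{\norm{\Delta_{S^c}}_1-\norm{\Delta_S}_1}\le\tfrac13\lambda\sigma^2\paren{\norm{\Delta_S}_1+\norm{\Delta_{S^c}}_1}$, hence $\norm{\Delta_{S^c}}_1\le 2\norm{\Delta_S}_1$, and therefore $\norm{\Delta}_1\le 3\norm{\Delta_S}_1\le 3\sqrt s\,\norm{\Delta_S}_2\le 3\sqrt s\,\norm{\Delta}_2$ by Cauchy--Schwarz. This contradicts $\norm{\Delta}_1>5\sqrt s\,\norm{\Delta}_2$ (the case $\Delta=\bzero$ being trivial), so the second case is vacuous and $\norm{\hat\bbeta_1-\bbeta_1^*}_1\le 5\sqrt s\,\norm{\hat\bbeta_1-\bbeta_1^*}_2$ always holds; an identical computation, with $\omega_i^{(t)}$ replaced by $1-\omega_i^{(t)}$ and \eqref{eqn:beta-2-opt} in place of \eqref{eqn:beta-1-opt}, handles $\hat\bbeta_2$.

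The computation is otherwise routine; the one point demanding care is the EM-drift contribution $\kappa\Lambda^{(t)}\norm{\Delta}_2$, which has no analogue in vanilla LASSO. One cannot simply bound $\norm{\Delta}_2\le\norm{\Delta}_1$ (too lossy) nor $\norm{\Delta}_2\le\norm{\Delta}_1/\sqrt s$ (false); instead the case split does the work --- in the only nontrivial case the putative cone violation $\norm{\Delta}_1>5\sqrt s\,\norm{\Delta}_2$ itself turns $\norm{\Delta}_2$ into a small multiple of $\norm{\Delta}_1$, which is precisely what lets the $\ell_1$-penalty dominate. The slack in the statement (the argument yields $3\sqrt s$ rather than $5\sqrt s$) also comfortably absorbs the various $\sigma^2$ factors carried through \eqref{eqn:Qn-decomp}--\eqref{eqn:Qn-diff-lb-1}.
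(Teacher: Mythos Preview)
Your proposal is correct and follows essentially the same route as the paper: the basic inequality from optimality (combining \eqref{eqn:Qn-decomp} and \eqref{eqn:Qn-diff-lb-1}), the support split $\norm{\hat\bbeta_1}_1-\norm{\bbeta_1^*}_1\ge\norm{\Delta_{S^c}}_1-\norm{\Delta_S}_1$, and the bound \eqref{eqn:Qn-diff-ub}. The only difference is in disposing of the drift term $\kappa\Lambda^{(t)}\norm{\Delta}_2$: the paper simply keeps it and obtains $\norm{\Delta_{S^c}}_1\le 2\norm{\Delta_S}_1+\tfrac32\sqrt s\,\norm{\Delta}_2$ directly (then uses $\norm{\Delta_S}_1\le\sqrt s\,\norm{\Delta}_2$ to conclude), so your case split/contradiction is not needed.
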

\begin{proof}{Proof of Lemma \ref{thm:diff-L1-Bound}}
Let $\bu = \hat\bbeta_1 - \bbeta_1^*$. Combining the inequality from the definition of each iterates
\begin{equation} 
\lambda\sigma^2\paren{\norm{\hat\bbeta_1}_1-\norm{\bbeta_1^*}_1} 
\le -Q_{n1}\paren{{\bbeta_1^*} \cond \bgamma^{(t)}} + Q_{n1}\paren{\hat\bbeta_1 \cond \bgamma^{(t)}} 
\end{equation}
and the inequality that
\begin{equation*}
\norm{\hat\bbeta_1}_1-\norm{\bbeta_1^*}_1 
\ge \norm{\bbeta_1^* + \bu_{S^C}}_1 - \norm{\bu_{S}}_1-\norm{\bbeta_1^*}_1
= \norm{\bu_{S^C}}_1 - \norm{\bu_{S}}_1,
\end{equation*}
we obtain
\begin{equation} \label{eqn:Qn-diff-lb}
\lambda\sigma^2\paren{\norm{\bu_{S^C}}_1 - \norm{\bu_{S}}_1}
\le 
-Q_{n1}\paren{{\bbeta_1^*} \cond \bgamma^{(t)}} + Q_{n1}\paren{\hat\bbeta_1 \cond \bgamma^{(t)}}.
\end{equation}
Combining \eqref{eqn:Qn-decomp}, \eqref{eqn:Qn-diff-ub} and \eqref{eqn:Qn-diff-lb}, we have 
\begin{align*}
\lambda\sigma^2\paren{\norm{\bu_{S^C}}_1 - \norm{\bu_{S}}_1}
& \le -Q_{n1}\paren{{\bbeta_1^*} \cond \bgamma^{(t)}} + Q_{n1}\paren{\hat\bbeta_1 \cond \bgamma^{(t)}} \\
& \le \frac{1}{n}\angles{\sum_{i=1}^{n}\omega_i^{(t)}\paren{y_i-\bx_i^\top\bbeta_1^*} \bx_i,\; \hat\bbeta_1-\bbeta_1^*} \\
& \le 
C \sqrt{\frac{\log \max\{d, n\}} {n}}\cdot\norm{\bu}_1 \\
&\quad + \frac{\kappa}{\sqrt{s}}\cdot\paren{\norm{\bbeta_1^{(t)} - \bbeta^*}_2 + \norm{\bbeta_2^{(t)} - \bbeta^*}_2 + \norm{\btheta^{(t)} - \btheta^*}_2}\cdot\sqrt{s}\norm{\bu}_2. 
\end{align*} 
Let \[\lambda\sigma^2\ge 3C\sqrt{\frac{\log \max\{d, n\}}{ n}} + \frac{\kappa}{\sqrt{s}}\cdot\paren{\norm{\bbeta_1^{(t)} - \bbeta^*}_2 + \norm{\bbeta_2^{(t)} - \bbeta^*}_2 + \norm{\btheta^{(t)} - \btheta^*}_2},\] we have
% \begin{equation*}
% \norm{\bu_{S^C}}_1 - \norm{\bu_{S}}_1 \le 1/3 (\norm{\bu_{S^C}}_1 + \norm{\bu_{S}}_1) + \sqrt{s}\norm{\bu}_2. 
% \end{equation*}
\begin{equation*}
\norm{\bu_{S^C}}_1 \le 2\norm{\bu_{S}}_1 + 3/2\sqrt{s}\norm{\bu}_2 \le 4\sqrt{s}\norm{\bu}_2,
\end{equation*}
and hence $\norm{\bu} \leq \norm{\bu_{S}}_1+\norm{\bu_{S^c}}_1 \leq 5\sqrt{s}\norm{\bu}_2$. 
\end{proof}

Then, applying Lemma \ref{thm:diff-L1-Bound} to \eqref{eqn:Qn-diff-ub}, we have
\begin{equation}
\begin{aligned}
&\quad  \paren{\hat\bbeta_1-\bbeta_1^*}^\top 
\paren{\frac{1}{2n}\sum_{i=1}^n\omega_i^{(t)}\bx_i\bx_i^{\top}}
\paren{\hat\bbeta_1-\bbeta_1^*} \label{eqn:Q1n-2nd-mm-ub}\\
& \lesssim \sqrt{\frac{s\log \max\{d, n\}} {n}}\cdot\norm{\hat\bbeta_1-\bbeta_1^*}_2
+ \kappa\cdot\paren{\norm{\bbeta_1^{(t)} - \bbeta_1^*}_2 + \norm{\bbeta_2^{(t)} - \bbeta_2^*}_2 + \norm{\btheta^{(t)} - \btheta^*}_2}\cdot\norm{\hat\bbeta_1-\bbeta_1^*}_2  \\
&\quad + \lambda\sigma^2\sqrt{s}\norm{\hat\bbeta_1-\bbeta_1^*}_2. \nonumber
\end{aligned}
\end{equation}
Second, we establish a lower bound of the second-order term. 
Assumption \ref{A1} ensures that $p(\bz_i^{\top}\btheta^*) \in (\xi, 1-\xi)$. By Lemmas \ref{thm:lemma-expectation} and \ref{thm:lemma-sample}, 
%when $\norm{\bbeta_1^{(t)} - \bbeta^*}_2 + \norm{\bbeta_2^{(t)} - \bbeta^*}_2 + \norm{\btheta^{(t)} - \btheta^*}_2 \le ...$, 
we have, with probability at least $1-\frac{2}{\max\{d, n\}^2}$,  \[\norm{\frac{1}{n}\sum_{i=1}^{n}\left[ \omega_i^{(t)} \bx_i\bx_i^{\top} \right]-\EE\left[\omega_i^{(t)}\bx_i\bx_i^{\top}\right]}_{\max} \leq C \sqrt{\frac{\log \max \{d, n\}}{n}},\]
and
\[\norm{\EE\left[\omega_i^{(t)}\bx_i\bx_i^{\top}\right]-\EE\left[\omega_i^*\bx_i\bx_i^{\top}\right]}_2\leq \kappa \paren{\norm{\bbeta_1^{(t)} - \bbeta_1^*}_2 + \norm{\bbeta_2^{(t)} - \bbeta_2^*}_{2} + \norm{\btheta^{(t)} - \btheta^*}_2}.\]
%Since $\norm{\bbeta_1^{(t)} - \bbeta_1^*}_2 + \norm{\bbeta_2^{(t)} - \bbeta_2^*}_2 + \norm{\btheta^{(t)} - \btheta^*}_2 \leq c_1 \xi$, by taking $\kappa$ sufficiently small, we have $\abs{\frac{1}{n}\sum_{i=1}^{n}\omega_i^{(t)}-\EE[\omega_i^*]} \leq \frac{1}{2} \min\{\xi, 1-\xi\}$ with probability at least $1-\frac{2}{\max\{d, n\}^2}$. 
%Since $\EE[\omega_i^*] = \EE[p(\bz_i^{\top}\btheta^*)] \geq \xi$,  we have that $\frac{1}{n}\sum_{i=1}^{n}\omega_i^{(t)} \geq \frac{\xi}{2}$, and thus $n_{a}:=\sum_{i=1}^n\bbone\braces{\omega_i^{(t)} \ge \frac{\xi}{4}} \ge \frac{\xi}{4}n$. (Otherwise, $\frac{1}{n}\sum_{i=1}^{n}\omega_i^{(t)} \leq \frac{\xi}{4}+ \frac{\xi}{4}(1- \frac{\xi}{4}) < \xi/2$.) %, with probability approaching one. 
For any vector $\bv$ such that $\norm{\bv}_2=1$, we have $\bv^{\top}\EE[\omega_i^*\bx_i\bx_i^{\top}]\bv =\EE[\omega_i^*(\bv^\top\bx_i)^2]\geq \PP(\omega_i^* \geq \xi/2)\cdot (\xi/2M)$, where $ \PP(\omega_i^* \geq \xi/2)$ is a positive constant since $\EE[\omega_i^*] = \EE[p(\bz_i^{\top}\btheta^*)] \geq \xi$. Since $\norm{\bbeta_1^{(t)} - \bbeta_1^*}_2 + \norm{\bbeta_2^{(t)} - \bbeta_2^*}_2 + \norm{\btheta^{(t)} - \btheta^*}_2 \leq c_1 \xi$, by requiring $c_1$ sufficiently small, we can ensure that $\norm{\EE\left[\omega_i^{(t)}\bx_i\bx_i^{\top}\right]}_2 \geq c_3$ for some constant $c_3>0$. As a result, for sufficiently large $n$,
\begin{align}
& \paren{\hat\bbeta_1-\bbeta_1^*}^\top 
\paren{\frac{1}{2n}\sum_{i=1}^n\omega_i^{(t)}\bx_i\bx_i^\top}
\paren{\hat\bbeta_1-\bbeta_1^*} 
%= \frac{1}{2n}\sum_{i=1}^n\omega_i^{(t)}\paren{\bx_i^\top\paren{\hat\bbeta_1-\bbeta_1^*}}^2
\nonumber\\
&= \frac{1}{2}\paren{\hat\bbeta_1-\bbeta_1^*}^\top 
\paren{\frac{1}{n}\sum_{i=1}^n\omega_i^{(t)}\bx_i\bx_i^\top - \EE\left[\omega_i^{(t)}\bx_i\bx_i^{\top}\right]} 
\paren{\hat\bbeta_1-\bbeta_1^*}+ \frac{1}{2}\paren{\hat\bbeta_1-\bbeta_1^*}^\top 
\EE\left[\omega_i^{(t)}\bx_i\bx_i^{\top}\right]
\paren{\hat\bbeta_1-\bbeta_1^*}\nonumber\\
%& \ge \frac{\xi^2}{32M} \norm{\hat\bbeta_1-\bbeta_1^*}_2^2 +  \frac{\xi}{8}(\hat\bbeta_1-\bbeta_1^*)^{\top}\cdot \frac{1}{n}\left[\sum_{\omega_i^{(t)} \geq \xi/4} \big(\bx_i\bx_i^\top-\EE[\bx_i\bx_i^{\top}]\big)\right] (\hat\bbeta_1-\bbeta_1^*) \nonumber\\
& \ge \frac{c_3}{2} \norm{\hat\bbeta_1-\bbeta_1^*}_2^2-\frac{25}{2} Cs\sqrt{\frac{\log \max \{d, n\}}{n}}\norm{\hat\bbeta_1-\bbeta_1^*}_2^2 \geq  \frac{c_3}{4} \norm{\hat\bbeta_1-\bbeta_1^*}_2^2,
\label{eqn:Q1n-2nd-mm-lb}
\end{align}
where we use %the standard result that $\frac{1}{n_a}\left\|\sum_{\omega_i^{(t)} \geq \xi/4} \big(\bx_i\bx_i^\top-\EE[\bx_i\bx_i^{\top}]\big)\right\|_{\max} \lesssim \sqrt{\frac{\log \max\{n_a, d\}}{n_a}}$ with probability at least $1-2\max\{n_a, d\}^{-2}$, 
the result in Lemma  \ref{thm:diff-L1-Bound} and the assumption that $s\sqrt{\frac{\log \max \{n, d\}}{n}}=o(1)$.
Combining \eqref{eqn:Qn-diff-ub} and \eqref{eqn:Q1n-2nd-mm-lb}, we have with probability at least $1-c_0\max\{n, d\}^{-2}$, 
\begin{equation} \label{eqn:beta1-one-iter}
\norm{\hat\bbeta_1-\bbeta_1^*}_2 
\lesssim    
\sqrt{ \frac{s\log  \max\{d, n\} } {n}}
+ \kappa\cdot\paren{\norm{\bbeta_1^{(t)} - \bbeta_1^*}_2 + \norm{\bbeta_2^{(t)} - \bbeta_2^*}_2 + \norm{\btheta^{(t)} - \btheta^*}_2}
+ \lambda\sigma^2\sqrt{s},
\end{equation}
for some constant $c_0>0$.
Similarly, we obtain that with with probability at least $1-c_0\max\{n, d\}^{-2}$,
\begin{equation}\label{eqn:beta2-one-iter}
\norm{\hat\bbeta_2-\bbeta_2^*}_2
\lesssim \sqrt{ \frac{s\log  \max\{d, n\} } {n}}
+ \kappa\cdot\paren{\norm{\bbeta_1^{(t)} - \bbeta_1^*}_2 + \norm{\bbeta_2^{(t)} - \bbeta_2^*}_2 + \norm{\btheta^{(t)} - \btheta^*}_2}
+ \lambda\sigma^2\sqrt{s}.
\end{equation}

\medskip
\noindent
\textbf{STEP 2. Sample EM iterative bounds for $\btheta$.} 
We make use of the definition of \eqref{eqn:theta-opt}. 
Let $\bu = \hat\btheta$ and $S = {\rm supp}(\btheta^*)$.
Firstly, by the definition of estimator, we have that 
\begin{equation} \label{eqn:Qn3-diff-lb}
\lambda\paren{\norm{\hat\btheta}_1 - \norm{\btheta^*}_1} 
\le 
-Q_{n3}\paren{\btheta^* \cond \bgamma^{(t)}} + Q_{n3}\paren{\hat\btheta \cond \bgamma^{(t)}} .
\end{equation}
In addition, we have
\begin{align}
-Q_{n3}\paren{\hat\btheta \cond \bgamma^{(t)}}
+ Q_{n3}\paren{{\btheta^*} \cond \bgamma^{(t)}}
& = 
- \frac{1}{n} \sum_{i=1}^{n} \paren{ \omega_i^{(t)} \cdot \log p(\bz_i^\top\hat\btheta) + (1- \omega_i^{(t)}) \cdot \log(1-p(\bz_i^\top\hat\btheta)) } \nonumber\\
& + \frac{1}{n} \sum_{i=1}^{n} \paren{ \omega_i^{(t)} \cdot \log p(\bz_i^\top\btheta^*) + (1- \omega_i^{(t)}) \cdot \log(1-p(\bz_i^\top\btheta^*)) } \nonumber\\
& = \angles{-\frac{1}{n} \sum_{i=1}^{n} \paren{\omega_i^{(t)}- p(\bz_i^\top\btheta^*)}\bz_i, \; (\hat\btheta-\btheta^*) } \nonumber\\
& + (\hat\btheta-\btheta^*)^\top \paren{\frac{1}{n} \sum_{i=1}^{n}p(\bz_i^\top\btheta')(1-p(\bz_i^\top\btheta'))\bz_i\bz_i^\top} (\hat\btheta-\btheta^*), \label{eqn:Qn3-decomp}
\end{align}
for some $\btheta'$ between $\hat\btheta$ and $\btheta^*$. 
Thus, we have
\begin{align*}
&\quad (\hat\btheta-\btheta^*)^\top \paren{\frac{1}{n} \sum_{i=1}^{n}p(\bz_i^\top\btheta')(1-p(\bz_i^\top\btheta'))\bz_i\bz_i^\top} (\hat\btheta-\btheta^*)\\
& \le \abs{\angles{-\frac{1}{n} \sum_{i=1}^{n} \paren{\omega_i^{(t)}- p(\bz_i^\top\btheta^*)}\bz_i, \; (\hat\btheta-\btheta^*) }} + \lambda \paren{\norm{\btheta^*}_1 - \norm{\hat\btheta}_1}  \\
& \le \abs{\angles{\frac{1}{n} \sum_{i=1}^{n} (\omega_i^{(t)}- p(\bz_i^\top\btheta^*))\bz_i - \EE\brackets{(\omega_i^{(t)}- p(\bz_i^\top\btheta^*))\bz_i}, \; (\hat\btheta-\btheta^*) }} \\
& + \abs{\angles{\EE\brackets{(\omega_i^{(t)}- p(\bz_i^\top\btheta^*))\bz_i} - \EE\brackets{(\omega_i^*- p(\bz_i^\top\btheta^*))\bz_i}, \; (\hat\btheta-\btheta^*) }}  + \lambda \paren{\norm{\btheta^*}_1 - \norm{\hat\btheta}_1} . 
\end{align*}
Applying Lemmas \ref{thm:lemma-expectation} and \ref{thm:lemma-sample}, we have
\begin{align*}
&  \abs{\angles{-\frac{1}{n} \sum_{i=1}^{n} \paren{\omega_i^{(t)}- p(\bz_i^\top\btheta^*)}\bz_i, \; (\hat\btheta-\btheta^*) }}  \\
& \le 
C \sqrt{\log \max\{d,n\} / n}\cdot\norm{\hat\btheta-\btheta^*}_1 
+ \kappa\cdot\paren{\norm{\bbeta_1^{(t)} - \bbeta^*}_2 + \norm{\bbeta_2^{(t)} - \bbeta^*}_2 + \norm{\btheta^{(t)} - \btheta^*}_2}\cdot\norm{\hat\btheta-\btheta^*}_2.
\end{align*}
Similar to Lemma \ref{thm:diff-L1-Bound}, by taking $\lambda > 3C\sqrt{\frac{\log \max\{d,n\}}{ n}} + \frac{\kappa}{\sqrt{s}}\cdot\paren{\norm{\bbeta_1^{(t)} - \bbeta^*}_2 + \norm{\bbeta_2^{(t)} - \bbeta^*}_2 + \norm{\btheta^{(t)} - \btheta^*}_2}$, it holds that
\begin{equation}\label{eqn:Q3n-2nd-mm-upper}
	\begin{aligned}
		& (\hat\btheta-\btheta^*)^\top \paren{\frac{1}{n} \sum_{i=1}^{n}p(\bz_i^\top\btheta')(1-p(\bz_i^\top\btheta'))\bz_i\bz_i^\top} (\hat\btheta-\btheta^*)\\
		& \le 
		\angles{-\frac{1}{n} \sum_{i=1}^{n} \paren{\omega_i^{(t)}-\log p(\bz_i^\top\btheta^*)}\bz_i, \; (\hat\btheta-\btheta^*) } + \lambda \paren{\norm{\hat\btheta}_1 - \norm{\btheta^*}_1} \\
		& \le 
		C \sqrt{\frac{s \log \max\{d,n\} }{ n}}\cdot\norm{\hat\btheta-\btheta^*}_2 
		+ \kappa\cdot\paren{\norm{\bbeta_1^{(t)} - \bbeta^*}_2 + \norm{\bbeta_2^{(t)} - \bbeta^*}_2 + \norm{\btheta^{(t)} - \btheta^*}_2}\cdot\norm{\hat\btheta-\btheta^*}_2 \\
		&\quad + \sqrt{s}\lambda \norm{\hat\btheta - \btheta^*}_2.
	\end{aligned}
\end{equation}

Since $\btheta'$ is between $\hbtheta$ and $\btheta^*$ and $\bz_i$ is sub-Gaussian, it is straightforward to show that $\abs{\EE\brackets{p(\bz_i^\top\btheta')(1-p(\bz_i^\top\btheta'))-p(\bz_i^\top\btheta^*)(1-p(\bz_i^\top\btheta^*))}} \leq \frac{\xi^2}{2}$ for sufficiently small $c_1$, which yields $\EE\brackets{p(\bz_i^\top\btheta')(1-p(\bz_i^\top\btheta'))} \geq \xi^2/2$ since $\EE\left[p(\bz_i^\top\btheta^*)(1-p(\bz_i^\top\btheta^*))\right] \geq \xi^2$. %we have $\frac{1}{n}\sum_{i=1}^{n}p(\bz_i^\top\btheta')(1-p(\bz_i^\top\btheta'))\ge\xi^2/8$ for any $\btheta'$ between $\hat\btheta$ and $\btheta^*$, with probability at least $1-c_0\max\{n, d\}^{-2}$, and hence
%\[\sum_{i=1}^{n}\bbone\braces{(\bz_i^\top\btheta')(1-p(\bz_i^\top\btheta')) \geq \frac{\xi^2}{16}} \geq \frac{\xi^2}{4}n.\]
Then, similar to Step 1, for some constant $c_3'>0$, we can obtain 
\begin{equation} \label{eqn:Q3n-2nd-mm-lower}
(\hat\btheta-\btheta^*)^\top \paren{\frac{1}{n} \sum_{i=1}^{n}p(\bz_i^\top\btheta')(1-p(\bz_i^\top\btheta'))\bz_i\bz_i^\top} (\hat\btheta-\btheta^*)
\ge c_3' \norm{\hat\btheta-\btheta^*}_2^2.
\end{equation}
Combining \eqref{eqn:Q3n-2nd-mm-upper} and \eqref{eqn:Q3n-2nd-mm-lower}, we have with probability at least $1-c_0\max\{n, d\}^{-2}$, 
\begin{equation}\label{eqn:theta-one-iter}
\norm{\hat\btheta-\btheta^*}_2
\lesssim    
 \sqrt{s \log \max\{d,n\} / n}
+ \kappa\cdot\paren{\norm{\bbeta_1^{(t)} - \bbeta_1^*}_2 + \norm{\bbeta_2^{(t)} - \bbeta_2^*}_2 + \norm{\btheta^{(t)} - \btheta^*}_2}
+ 
\lambda\sqrt{s}.
\end{equation}

\medskip
\noindent
\textbf{STEP 3. Proof by induction.}
Combining \eqref{eqn:beta1-one-iter},  \eqref{eqn:beta2-one-iter}, and \eqref{eqn:theta-one-iter}, we have that, with probability at least $1-c\max\{n, d\}^{-2}$
\begin{equation}\label{eq:cgamma}
	\begin{aligned}
		&\norm{\bbeta^{(t+1)}_1 - \bbeta_1^*}_2 + \norm{\bbeta^{(t+1)}_2 - \bbeta_2^*}_2 + \norm{\btheta^{(t+1)} - \btheta^*}_2 \\
		&\leq C_{\gamma} \brackets{
			\sqrt{s \log \max\{d,n\} / n} 
			+ \kappa\cdot\paren{\norm{\bbeta_1^{(t)} - \bbeta_1^*}_2 + \norm{\bbeta_2^{(t)} - \bbeta_2^*}_2 + \norm{\btheta^{(t)} - \btheta^*}_2}
			+ 
			\lambda_n^{(t+1)}\sqrt{s}}, 
	\end{aligned}
\end{equation}

when
\begin{equation*}
\lambda^{(t+1)}_n \ge C_\lambda \sqrt{\log \max\{d,n\} / n} + \kappa/\sqrt{s}\cdot\paren{\norm{\bbeta_1^{(t)} - \bbeta_1^*}_2 + \norm{\bbeta_2^{(t)} - \bbeta_2^*}_2 + \norm{\btheta^{(t)} - \btheta^*}_2},
\end{equation*}
for some absolute constants $c$, $C_{\gamma}$ and $C_{\lambda}$. Let $\oC: = \max\braces{C_{\lambda}, C_{\gamma}, 1}$, choose $\kappa<(2\oC^2)^{-1}$,  let $\widetilde\kappa:=\oC^2\kappa<1/2$, and define
\[\delta_{\bgamma}^{(t)}:=  \norm{\bbeta_1^{(t)} - \bbeta_1^*}_2 + \norm{\bbeta_2^{(t)} - \bbeta_2^*}_2 + \norm{\btheta^{(t)} - \btheta^*}_2.\]

We will show by induction that, by choosing 
\[\lambda_n^{(t+1)}=\frac{2\oC(1-(2\widetilde\kappa)^{t+1})}{1-2\widetilde\kappa}\sqrt{\frac{\log \max\{d,n\}}{n}}+\frac{\oC\kappa(2\widetilde\kappa)^{t}}{\sqrt{s}}\delta_{\bgamma}^{(0)}, \]
it holds that
\[\delta_{\bgamma}^{(t)} \leq \frac{2(1-(2\widetilde\kappa)^{t+1})}{\kappa(1-2\widetilde\kappa)}\sqrt{\frac{s\log \max\{d,n\}}{n}}+(2\widetilde\kappa)^t\delta_{\bgamma}^{(0)}, \]
and 
\[\lambda_n^{(t+1)} \geq C_\lambda\sqrt{\frac{\log \max\{d,n\}}{n}} + \frac{\kappa}{\sqrt{s}}\delta_{\bgamma}^{(t)}.\]

The case $t=0$ is trivial. Assume that the above two inequalities are true for $t$. Consider $t+1$. By \eqref{eq:cgamma}, we have
\begin{align*}
	\delta_{\bgamma}^{(t+1)} 
	&\leq \oC \brackets{1+\frac{2(1-(2\widetilde\kappa)^{t+1})}{1-2\widetilde\kappa}+\frac{2\oC(1-(2\widetilde\kappa)^{t+1})}{1-2\widetilde\kappa}}\sqrt{\frac{s\log \max\{d,n\}}{n}} +\paren{\oC\kappa+\oC^2\kappa}(2\widetilde{\kappa})^t\delta_{\bgamma}^{(0)}\\
	& \leq \frac{1}{\kappa} \brackets{\oC\kappa+\frac{2\oC\kappa(1-(2\widetilde\kappa)^{t+1})}{1-2\widetilde\kappa}+\frac{2\oC^2\kappa(1-(2\widetilde\kappa)^{t+1})}{1-2\widetilde\kappa}}\sqrt{\frac{s\log \max\{d,n\}}{n}}+(2\widetilde{\kappa})^{t+1}\delta_{\bgamma}^{(0)}\\
	& \leq \frac{1}{\kappa} \brackets{2+\frac{2\widetilde\kappa(1-(2\widetilde\kappa)^{t+1})}{1-2\widetilde\kappa}+\frac{2\widetilde\kappa(1-(2\widetilde\kappa)^{t+1})}{1-2\widetilde\kappa}}\sqrt{\frac{s\log \max\{d,n\}}{n}}+(2\widetilde{\kappa})^{t+1}\delta_{\bgamma}^{(0)}\\
	&\leq\frac{2(1-(2\widetilde\kappa)^{t+2})}{\kappa(1-2\widetilde\kappa)}\sqrt{\frac{s\log \max\{d,n\}}{n}}+(2\widetilde{\kappa})^{t+1}\delta_{\bgamma}^{(0)}.
\end{align*}
Furthermore,
\begin{align*}
	&\quad C_\lambda\sqrt{\frac{\log \max\{d,n\}}{n}} + \frac{\kappa}{\sqrt{s}}\delta_{\bgamma}^{(t+1)}\\
	& \leq  \brackets{\oC+\oC\kappa+\frac{2\oC\kappa(1-(2\widetilde\kappa)^{t+1})}{1-2\widetilde\kappa}+\frac{2\oC^2\kappa(1-(2\widetilde\kappa)^{t+1})}{1-2\widetilde\kappa}}\sqrt{\frac{\log \max\{d,n\}}{n}}+\frac{\kappa}{\sqrt{s}}(2\widetilde{\kappa})^{t+1}\delta_{\bgamma}^{(0)}\\
	& \leq \brackets{2\oC+\frac{2\oC^3\kappa(1-(2\widetilde\kappa)^{t+1})}{1-2\widetilde\kappa}+\frac{2\oC^3\kappa(1-(2\widetilde\kappa)^{t+1})}{1-2\widetilde\kappa}}\sqrt{\frac{\log \max\{d,n\}}{n}}+\frac{\kappa}{\sqrt{s}}(2\widetilde{\kappa})^{t+1}\delta_{\bgamma}^{(0)}	\leq\lambda_{n}^{(t+2)}.
\end{align*}
 
Therefore, we have shown that
\begin{equation*}
	\begin{aligned}
&\quad \norm{\bbeta_1^{(t)} - \bbeta_1^*}_2 + \norm{\bbeta_2^{(t)} - \bbeta_2^*}_2 + \norm{\btheta^{(t)} - \btheta^*}_2 \\
& \leq
(2\widetilde\kappa)^t\cdot\paren{\norm{\bbeta_1^{(0)} - \bbeta_1^*}_2 + \norm{\bbeta_2^{(0)} - \bbeta_2^*}_2 + \norm{\btheta^{(0)} - \btheta^*}_2}
+ \frac{2}{\kappa(1-2\widetilde\kappa)}\sqrt{\frac{s\log \max\{d,n\}}{n}}.
	\end{aligned}
\end{equation*}

 Since we focus on the high-dimensional setting where $\log N_{\tau-1}  \lesssim \log d$ for all $\tau$, we replace $\log\max\{d, n\}$ with  $\log d$ from now on. Recall that we use sample splitting $n_{\tau}=N_{\tau-1}/t_{\tau, \max}$ to make sure the solutions obtained in each iteration are independent with each other (which satisfies the assumption of Lemma \ref{thm:lemma-sample}). When $\tau=1$, we have $N_0=n_0$ and  $\delta_{\gamma}^{(0)}\leq \delta_{0}$. Hence,  by taking $t_{1, \max} \asymp \frac{1}{2\log(1/2\tilde{\kappa})}\log (n_0)$, we obtain 
\[
\norm{\hat{\bbeta}_1^{(1)} - \bbeta_1^*}_2 + \norm{\hbbeta_2^{(1)} - \bbeta_2^*}_2 + \norm{\hbtheta^{(1)} - \btheta^*}_2  \lesssim \sqrt{\frac{s\log d \log n_0}{n_0}},
\]
with probability at least $1-ct_{1,\max}^3 / \max\{n_0, d\}^2$.
For $\tau=2$, the initials $\bgamma^{(2, 0)}=\hat{\bgamma}^{(1)}$, and thus $\delta_0^{(2, 0)}\lesssim \sqrt{\frac{s \log d \log n_0}{n_0}}$. By taking $t_{2, \max} \asymp \log(\sqrt{2})/\log(1/(2\tilde{\kappa}))$, which is a constant, we obtain
 \[
 \norm{\hat{\bbeta}_1^{(2)} - \bbeta_1^*}_2 + \norm{\hbbeta_2^{(2)} - \bbeta_2^*}_2 + \norm{\hbtheta^{(2)} - \btheta^*}_2  \lesssim \sqrt{\frac{s\log d \log n_0}{2n_0}} = \sqrt{\frac{s\log d \log n_0}{N_1}}.
 \]
 The same argument shows that, for all $\tau \geq 2$, by taking $t_{\tau, \max} \asymp \log(\sqrt{2})/\log(1/(2\tilde{\kappa}))$,
  \[
 \norm{\hat{\bbeta}_1^{(\tau)} - \bbeta_1^*}_2 + \norm{\hbbeta_2^{(\tau)} - \bbeta_2^*}_2 + \norm{\hbtheta^{(\tau)} - \btheta^*}_2  \lesssim \sqrt{\frac{s\log d \log n_0}{N_{\tau-1}}},
 \]
 with probability at least $1-c t_{\tau, \max}^3 / \max\{N_{\tau-1}, d\}^2$.
%\begin{equation*}
%\begin{aligned}
%&\quad 	\norm{\bbeta_1^{(t_{\max})} - \bbeta_1^*}_2 + \norm{\bbeta_2^{(t_{\max})} - \bbeta_2^*}_2 + \norm{\btheta^{(t_{\max})} - \btheta^*}_2 \\
%&	\leq
%	(2\widetilde\kappa)^{t_{\max}}\cdot\paren{\norm{\bbeta_1^{(0)} - \bbeta_1^*}_2 + \norm{\bbeta_2^{(0)} - \bbeta_2^*}_2 + \norm{\btheta^{(0)} - \btheta^*}_2}
%	+ \frac{2}{\kappa(1-2\widetilde\kappa)}\sqrt{\frac{s\log \max\{d,N_{\tau-1}\}}{N_{\tau-1}/t_{\max}}},
%	\end{aligned}
%\end{equation*}
%where $N_{\tau-1}$ is the sample size of the $\tau$-th episode. 
%By taking the number of iteration $t_{\max} \asymp \log N_{\tau-1}$ , it holds that 
%\[	(2\widetilde\kappa)^{t_{\max}}\cdot\paren{\norm{\bbeta_1^{(0)} - \bbeta_1^*}_2+\norm{\bbeta_2^{(0)} - \bbeta_2^*}_2+\norm{\btheta^{(0)} - \btheta^*}_2} \lesssim \sqrt{\frac{s\log d \log N_{\tau-1}}{N_{\tau-1}}}, \]
%and thus we obtain that
%\[
%\norm{\bbeta_1^{(t_{\max})} - \bbeta_1^*}_2 + \norm{\bbeta_2^{(t_{\max})} - \bbeta_2^*}_2 + \norm{\btheta^{(t_{\max})} - \btheta^*}_2
%\lesssim
%\sqrt{\frac{s\log \max\{d,N_{\tau-1}\} \log N_{\tau-1}}{N_{\tau-1}}},
%\]
% with probability at least $1-c\log^3 N_{\tau-1} \max\{N_{\tau-1}, d\}^{-2}$ for some constant $c$.
  By Lemma \ref{thm:diff-L1-Bound}, we also have that
$$
 	\norm{\hbbeta_1^{(\tau)} - \bbeta_1^*}_1 + \norm{\hbbeta_2^{(\tau)} - \bbeta_2^*}_1 + \norm{\hbtheta^{(\tau)} - \btheta^*}_1
 	\lesssim
 	\sqrt{\frac{s^2\log d \log n_0}{N_{\tau-1}}},
$$ which concludes the proof of Theorem \ref{thm:1-detailed}.
% Finally, since we focus on the high-dimensional setting where $\log N_{\tau-1}  \lesssim \log d$ for all $\tau$, we have $\log\max\{d, N_{\tau-1}\} \lesssim \log d$, 

%\input{9-proof-misclustering.tex}

%!TEX root = 0-main.tex

\section{Proof for the Regret Results}
\label{sec:proof-regret}

\subsection{Proof for the Excess Misclassification Rate}\label{sec:proof-misclustering}

\textbf{Proof of Theorem \ref{thm:miss-class-rate}}
%For ease of presentation, we denote $n = N_{\tau-1}$. 
The misclassification error for $\btheta^*$ can be expressed by
\begin{align*}
	R(\btheta^*) & = \EE\brackets{\EE\big[\bbone(g_i^* \neq G_{\btheta^*}(\bz_i)) \cond \bz_i\big]} \\
	&=\EE\brackets{\EE\big[\bbone(g_i^* =1, \bz_i^{\top}\btheta^*\leq0) \cond \bz_i\big]+\EE\big[\bbone(g_i^* =2, \bz_i^{\top}\btheta^*>0) \cond \bz_i\big]} \\
	&=\EE\brackets{\bbone(\bz_i^{\top}\btheta^* \leq 0)p(\bz_i^{\top}\btheta^*)+\bbone(\bz_i^{\top}\btheta^* > 0)(1-p(\bz_i^{\top}\btheta^*))} \\
	& = \EE \Big[\min\braces{(1 - p(\bz_i^\top\btheta^*), p(\bz_i^\top\btheta^*)}\Big],
\end{align*}
and 
\begin{align*}
	R\big(\hat\btheta\big) & = \EE\brackets{\EE\Big[\bbone(g_i^* \neq G_{\hat\btheta}(\bz_i)) \cond \bz_i\Big]}\\
	&=\EE\Big[\EE\big[\bbone(g_i^* =1, \bz_i^{\top}\btheta^*\leq0, \bz_i^{\top}\hat{\btheta} \leq 0) \cond \bz_i\big]+\EE\big[\bbone(g_i^* =1, \bz_i^{\top}\btheta^*>0, \bz_i^{\top}\hat{\btheta} \leq 0) \cond \bz_i\big]\\
	&\quad\quad +\EE\big[\bbone(g_i^* =2, \bz_i^{\top}\btheta^*\leq0, \bz_i^{\top}\hat{\btheta} >0) \cond \bz_i\big]+\EE\big[\bbone(g_i^* =2, \bz_i^{\top}\btheta^*>0, \bz_i^{\top}\hat{\btheta} > 0) \cond \bz_i\big]\Big] . %=\int_{\calZ}\sum_{k=1,2}\bbone(G_{\hat\btheta}(\bz_i) \ne k,  g_i^*=k)\PP(g^*_i=k\cond\bz_i) \mu(d\bz_i),
\end{align*}
Note that
\[\EE\big[\bbone(g_i^* =1, \bz_i^{\top}\btheta^*\leq0, \bz_i^{\top}\hat{\btheta} \leq 0) \cond \bz_i\big] \leq \EE\big[\bbone(g_i^* =1, \bz_i^{\top}\btheta^*\leq0) \cond \bz_i\big] \leq R(\btheta^*),\]
\begin{align*}
	\EE\big[\bbone(g_i^* =1, \bz_i^{\top}\btheta^*>0, \bz_i^{\top}\hat{\btheta} \leq 0) \cond \bz_i\big] & \leq \EE\big[\bbone\left(\abs{\bz_i^{\top}(\hat\btheta-\btheta^*)}\geq \bz_i^{\top}\btheta^*>0\right)p(\bz_i^{\top}\btheta^*) \cond \bz_i\big] \\
	&\leq R(\btheta^*) + \EE\brackets{\bbone\left(\abs{\bz_i^{\top}(\hat\btheta-\btheta^*)}\geq \bz_i^{\top}\btheta^*>0\right)(2p(\bz_i^{\top}\btheta^*)-1)}\\
	&\leq R(\btheta^*) + \EE\brackets{2p\paren{\abs{\bz_i^{\top}(\hat\btheta-\btheta^*)}}-1},
\end{align*}
\begin{align*}
	\EE\big[\bbone(g_i^* =2, \bz_i^{\top}\btheta^*\leq0, \bz_i^{\top}\hat{\btheta} > 0) \cond \bz_i\big] &\leq \EE\big[\bbone\left(\abs{\bz_i^{\top}(\hat\btheta-\btheta^*)} \leq\bz_i^{\top}\btheta^*\leq0\right)(1-p(\bz_i^{\top}\btheta^*)) \cond \bz_i\big]\\
	&\leq R(\btheta^*) + \EE\brackets{\bbone\left(\abs{\bz_i^{\top}(\hat\btheta-\btheta^*)}\leq \bz_i^{\top}\btheta^*<0\right)(1-2p(\bz_i^{\top}\btheta^*))}\\
	&\leq R(\btheta^*) + \EE\brackets{1-2p\paren{\abs{\bz_i^{\top}(\hat\btheta-\btheta^*)}}},
\end{align*}
\[\EE\big[\bbone(g_i^* =2, \bz_i^{\top}\btheta^*>0, \bz_i^{\top}\hat{\btheta} > 0) \cond \bz_i\big] \leq \EE\big[\bbone(g_i^* =2, \bz_i^{\top}\btheta^*>0) \cond \bz_i\big] \leq R(\btheta^*),\]
and only one of the above four indicator functions equals one. Therefore, we obtain that
\[R\big(\hat\btheta\big)-R(\btheta^*) \leq \EE\brackets{\abs{2p\left(\abs{\bz_i^{\top}(\hat\btheta-\btheta^*)}\right)-1}} \leq 2\EE\brackets{p\left(\abs{\bz_i^{\top}(\hat\btheta-\btheta^*)}\right)-p(0)} \leq \frac{1}{2}\EE\brackets{\abs{\bz_i^{\top}(\hat\btheta-\btheta^*)}}.\]
Furthermore, since $\lambda_{\max}(\EE[\bz_i\bz_i^{\top}]) \leq M$, we finally obtain
\[R\big(\hat\btheta\big)-R(\btheta^*) \lesssim \EE\brackets{\abs{\bz_i^{\top}(\hat\btheta-\btheta^*)}} \leq \sqrt{\EE\brackets{(\hat\btheta-\btheta^*)^{\top}\bz_i\bz_i^{\top}(\hat\btheta-\btheta^*)}}.\]
Define a ``good'' event $\cE_i$ as
\begin{equation*}
	\cE_i := \left\{ \norm{\hat{\bbeta}_{1}^{(\tau)} - \bbeta_{1}^*}_{2} + \norm{\hat{\bbeta}_{2}^{(\tau)} - \bbeta_{2}^*}_{2}  + \norm{\hat{\btheta}^{(\tau)} - \btheta^*}_{2}
	\leq C	\sqrt{\frac{s\log d \log n_0}{N_{\tau-1}}}\right\},
\end{equation*}
where $\hat{\bbeta}_{1}^{(\tau)}$, $\hat{\bbeta}_{2}^{(\tau)}$, $\hat{\btheta}^{(\tau)}$ are the estimators obtained using the samples in the $(\tau-1)$-th phase.  By Theorem \ref{thm:1-detailed}, it holds that $\PP(\cE_i^c) \leq c \frac{\log^3 n_0}{\max\{N_{\tau-1}, d\}^2}  \lesssim  \frac{1}{N_{\tau-1}}$ for some constants $c$ and $C$, which yields
\begin{equation*}
	R\left(\hat\btheta^{(\tau)}\right)-R(\btheta^*) \lesssim 1/N_{\tau-1} + \sqrt{\EE\brackets{\big(\hat\btheta^{(\tau)}-\btheta^*\big)^{\top}\bz_i\bz_i^{\top}\big(\hat\btheta^{(\tau)}-\btheta^*\big) \cond \cE_i}} \lesssim \sqrt{\frac{s\log d \log n_0}{N_{\tau-1}}}.
\end{equation*} 
%Note that for any $p,\hat p\in(0,1)$, we have $|\min\{p,1-p\}-\min\{\hat p,1-\hat p\}|\le |p-\hat p|$. In fact, to prove this claim, it suffices to show this inequality holds if $\hat p<1/2<p$. When $\hat p<1/2<p$, $|\min\{p,1-p\}-\min\{\hat p,1-\hat p\}|=|1-p-\hat p|\le |p-\hat p|$. Therefore,
%\begin{align*}
%	R(\btheta^*)-  R\big(\hat\btheta\big)\le& \int_{\calZ}
%	\abs{p(\bz_i^\top\hat\btheta)-p(\bz_i^\top\btheta^*)} \mu(d\bz_i)\\
%	\le& \frac{1}{4} \int_{\calZ}
%	\abs{\bz_i^\top\hat\btheta-\bz_i^\top\btheta^*}   \mu(d\bz_i)\\
%	\le& \sqrt{\norm{\Cov(\bz_i)}_2}\cdot \norm{\hat\btheta-\btheta^*}_2 \\
%	\le& \sqrt{M \frac{s\log d \log N_{\tau-1}}{N_{\tau-1}}}.
%\end{align*}

\subsection{Proof for the Regret Upper Bound}
\label{sec:proof-upper}
\textbf{Proof of Theorem \ref{thm:regret}}
Let $\calI_{\tau}$ be the set of indices in the $\tau$-th episode and $N_{\tau}=n_02^{\tau}$ be the cardinality of $\calI_{\tau}$. 
The expected cumulative regret over a length of horizon $T$ can be expressed as
\begin{equation*}
\Reg(T) = \sum_{\tau = 0}^{\tau_{\max}} \sum_{i\in\calI_{\tau}} \EE\brackets{\reg_i},
\end{equation*}
where $\tau_{\max}=[\log_2(T/n_0+1)]-1$.

We will show the following results for the instant regret:
Suppose the conditions in Theorem \ref{thm:regret} hold. For an observation $i$ in the $\tau$-th episode ($\tau \geq 2$), we have
\begin{equation} \label{eqn:E[reg_i]-strong}
\EE[\reg_i^{*}] \lesssim   \frac{\overline{x}^2s^2\log d \log n_0}{N_{\tau-1}} + \overline{x} \norm{\bbeta_2^* - \bbeta_1^*}_1\cdot R(\btheta^*), 
\end{equation}
and
\begin{equation} \label{eqn:E[reg_i]-weak}
\EE[\tilde{\reg}_i] \lesssim \overline{x} \norm{\bbeta_2^*-\bbeta_1^*}_1\sqrt{\frac{s\log d \log n_0}{N_{\tau-1}}}. 
\end{equation}
	
We first deal with the instant strong regret. Let $\left(\hat{\btheta}^{(\tau)}, \hat{\bbeta}_{1}^{(\tau)}, \hat{\bbeta}_{2}^{(\tau)}\right)$ be the estimator in the $\tau$-th episode, which is obtained using the data collected in the $(\tau-1)$-th episode. By the proof of Theorem \ref{thm:coeff-bound-hd}, we have that 
	\[
	\norm{\hat{\bbeta}_{1}^{(\tau)} - \bbeta_{1}^*}_1 + \norm{\hat{\bbeta}_{2}^{(\tau)} - \bbeta_{2}^*}_1 + \norm{\hat{\btheta}^{(\tau)} - \btheta^*}_1
	\leq C
	\sqrt{\frac{s^2\log d \log n_0}{N_{\tau-1}}},
	\]
	with probability at least $1-c \frac{\log^3 n_0}{\max\{N_{\tau-1}, d\}^2}$ for some constant $c, C$.
Define ``good'' events $\cE_i$ and $\cG_i$ as
	\begin{align*}
		\cE_i& := \left\{ \norm{\hat{\bbeta}_{1}^{(\tau)} - \bbeta_{1}^*}_{1} + \norm{\hat{\bbeta}_{2}^{(\tau)} - \bbeta_{2}^*}_{1} + \norm{\hat{\btheta}^{(\tau)} - \btheta^*}_{1}
		\leq C
		\sqrt{\frac{s^2\log d \log n_0}{N_{\tau-1}}}\right\}\\
		%&	\underset{a\in [K]}{\max}\; \abs{\angles{\bx_{i,a}, \bbeta_g^* - \hat \bbeta_g}} \leq  C\sqrt{\log K+\log(d \vee N_{\tau-1})}\cdot \norm{\hat \bbeta_g - \bbeta_g^*}_2 \text{ for } g= 1, 2,\\
			\cG_i&:=\left\{\angles{\bx_{i, \tilde{a}_{i, g}}, \bbeta_g^*} > \max_{a\ne \tilde{a}_{i, g}}\angles{\bx_{i, a}, \bbeta^*_{g}} + 2C\overline{x}\sqrt{\frac{s^2\log d  \log n_0}{N_{\tau-1}}}, \text{ for } g= 1, 2 \right\},
	\end{align*}
	where $\tilde{a}_{i, g}:=\underset{a\in [K]}{\arg \max}\; \angles{\bx_{i,a}, \bbeta^*_g}$.  Then it holds that $\PP(\cE_i^c) \leq c \frac{\log^3 n_0}{\max\{N_{\tau-1}, d\}^2}  \lesssim  \frac{1}{N_{\tau-1}}$, which yields
	\begin{equation}\label{eq:Eec-strong}
		\PP(\cE_i^c)\EE[\reg^*_i \cond \cE_i^c] \lesssim \overline{R}/N_{\tau-1}.
	\end{equation}%By the sub-Gaussianity of $\bx_{i,a}$ and the maximal sub-Gaussian inequality, 
	%\[
%	\underset{a\in [K]}{\max}\; \abs{\angles{\bx_{i,a}, \bbeta_g^* - \hat \bbeta_g} }\lesssim \sqrt{\log K+\log(d \vee N_{\tau-1})}\cdot \norm{\hat \bbeta_g - \bbeta_g^*}_2,
%	\]
%	with probability  at least $1-c_0 \frac{1}{\max\{N_{\tau-1}, d\}}$.
 By Assumption \ref{B2}, we have that
	\[\angles{\bx_{i, \tilde{a}_{i, g}}, \bbeta_g^*} \ge \max_{a\ne \tilde{a}_{i, g}}\angles{\bx_{i, a}, \bbeta^*_{g}} + 2C\overline{x}\sqrt{\frac{s^2\log d  \log n_0}{N_{\tau-1}}}, \text{ for } g= 1, 2,\]
	hold with probability at least $1-2C_1C\overline{x}\sqrt{\frac{s^2\log d  \log n_0}{N_{\tau-1}}}$. Therefore,
	\[
	\PP(\cG_i^c) \lesssim \overline{x}\sqrt{\frac{s^2\log d  \log n_0}{N_{\tau-1}}}.
	\]

	Since $\norm{\bx_{i,a}}_{\infty} \leq \overline{x}$, under $\cE_i$, we have for $g=1,2$,
	\[\max_{a \in [K]}\angles{\bx_{i,a}, (\bbeta_g^* - \hat \bbeta_g)} \leq \overline{x}\norm{\bbeta_g^* - \hat \bbeta_g}_1 \leq C\overline{x}\sqrt{\frac{s^2\log d  \log n_0}{N_{\tau-1}}}.\]
	If $\cE_i\cap\cG_i$ holds, then
	\[\max\limits_{a\in [K]}\abs{\angles{\bx_{i,a}, \hat \bbeta_{g}} - \angles{\bx_{i,a}, \bbeta^*_{g}}} \leq \frac{1}{2}\left( \angles{\bx_{i, \tilde{a}_{i, g}}, \bbeta_g^*} - \max_{a\ne \tilde{a}_{i, g}}\angles{\bx_{i, a}, \bbeta^*_{g}}\right),\]
	which implies that, for any $a \neq \tilde{a}_{i, g}= \underset{a\in [K]}{\arg \max}\; \angles{\bx_{i,a}, \bbeta^*_{g}}$,
	\begin{equation}
		\begin{aligned}
			\angles{\bx_{i, a}, \hat\bbeta_{g}} & \leq \abs{\angles{\bx_{i,a}, \hat \bbeta_{g}} - \angles{\bx_{i,a}, \bbeta^*_{g}}}  + \angles{\bx_{i, a}, \bbeta^*_{g}}	\\
			&\leq \max\limits_{a\in [K]}\abs{\angles{\bx_{i,a}, \hat \bbeta_{g}} - \angles{\bx_{i,a}, \bbeta^*_{g}}}  +\max_{a\ne \tilde{a}_{i, g}}\angles{\bx_{i, a}, \bbeta^*_{g}}\\
			& \leq \angles{\bx_{i, \tilde{a}_{i, g}}, \bbeta_g^*}-\frac{1}{2}\left( \angles{\bx_{i, \tilde{a}_{i, g}}, \bbeta_g^*} - \max_{a\ne \tilde{a}_{i, g}}\angles{\bx_{i, a}, \bbeta^*_{g}}\right)\\
			& \leq  \angles{\bx_{i, \tilde{a}_{i, g}}, \hat{\bbeta}_g}.
		\end{aligned}
	\end{equation}
	Therefore, we have $\underset{a\in [K]}{\arg \max}\; \angles{\bx_{i,a}, \hat\bbeta_{g}}=\underset{a\in [K]}{\arg \max}\; \angles{\bx_{i,a}, \bbeta^*_{g}}$ for $g=1,2$.

Now we consider two different cases of $g_i$ and $\hat g_i$. When $\hat g_i = g_i$, $\reg^{*}_i = \underset{a\in [K]}{\max}\; \angles{\bx_{i,a}, \bbeta_{g_i}^*} - \angles{\bx_{i,\hat a_i}, \bbeta_{g_i}^*}$. Under $\cE_i\cap\cG_i$, since $\underset{a\in [K]}{\arg \max}\; \angles{\bx_{i,a}, \hat\bbeta_{g}}=\underset{a\in [K]}{\arg \max}\; \angles{\bx_{i,a}, \bbeta^*_{g}}$ for $g=g_i$, we have $\reg^{*}_i=0$. Otherwise
\begin{align*} 
	&\quad \EE(\reg^{*}_i \cond \hat g_i = g_i, \cE_i \cap \cG_i^c) \\
    &= \EE\brackets{\underset{a\in [K]}{\max}\; \angles{\bx_{i,a}, \bbeta_{g_i}^*} - \angles{\bx_{i,\hat a_i}, \bbeta_{g_i}^*} \cond \cE_i \cap \cG_i^c}  \PP( \cE_i \cap \cG_i^c) \\
& = \EE\brackets{ \underset{a\in [K]}{\max}\; \angles{\bx_{i,a}, \bbeta_{g_i}^*} - \underset{a\in [K]}{\max}\;  \angles{\bx_{i,a}, \hat \bbeta_{g_i}} 
+ \angles{\bx_{i,\hat a_i}, \hat\bbeta_{g_i}} - \angles{\bx_{i, \hat a_i}, \bbeta_{g_i}^*} }\PP( \cE_i \cap \cG_i^c)
\\
& \le 2\; \EE\abs{\underset{a\in [K]}{\max}\; \angles{\bx_{i,a}, (\bbeta_{g_i}^* - \hat \bbeta_{g_i})}} \PP( \cE_i \cap \cG_i^c)\\
& \lesssim \frac{\overline{x}^2s^2\log d  \log n_0}{N_{\tau-1}}.
%\le 2 \underset{a\in [K]}{\max}\;\norm{\bx_{i,a}}_2 \cdot \norm{\hat \bbeta_1 - \bbeta_1^*}_2. 
\end{align*}
% and sub-Gaussian maximal inequality, we have that
%\begin{equation}\label{eqn:x-beta-diff}
%\EE \abs{\underset{a\in [K]}{\max}\; \angles{\bx_{i,a}, (\bbeta_1^* - \hat \bbeta_1)}} \lesssim  \sqrt{\log K}\cdot \norm{\hat \bbeta_1 - \bbeta_1^*}_2. 
%\end{equation}

As a result, we obtain that 
\begin{equation*}
 \EE(\reg^{*}_i \cond \hat g_i =  g_i, \cE_i) \lesssim \frac{\overline{x}^2s^2\log d  \log n_0}{N_{\tau-1}}.
\end{equation*}
%By a similar argument, it can be shown that 
%\begin{equation*}
%\EE(\reg^{*}_i \cond \hat g_i = 2, g_i = 2, \cE_i) \lesssim \sqrt{\frac{s\log(d \vee N_{\tau-1}) \log N_{\tau-1} \log K}{N_{\tau-1}}}.
%\end{equation*}

If $g_i = 1$, but the algorithm mistakenly clusters it to $\hat g_i = 2$, the greedy policy prescribes
$\hat a_i = \underset{a\in [K]}{\arg \max}\; \angles{\bx_{i,a}, \hat \bbeta_2}$. 
The instant strong regret is
\begin{equation}\label{eq:51}
	\begin{aligned}
		&\quad \EE(\reg^{*}_i \cond \hat g_i = 2, g_i = 1, \cE_i)  =  \EE\brackets{\underset{a\in [K]}{\max}\; \angles{\bx_{i,a}, \bbeta_1^*} - \angles{\bx_{i,\hat a_i}, \bbeta_1^*}}  \\
		& = \EE\Big[ \underset{a\in [K]}{\max}\; \angles{\bx_{i,a}, \bbeta_1^*} - \underset{a\in [K]}{\max}\; \angles{\bx_{i,a}, \bbeta_2^*}
		+ \underset{a\in [K]}{\max}\; \angles{\bx_{i,a}, \bbeta_2^*} \\
		%- \underset{a\in [K]}{\max}\;  \angles{\bx_{i,a}, \hat \bbeta_2}\\
		%& \quad + \angles{\bx_{i,\hat a_i}, \hat\bbeta_2}
		 &\quad \quad -\angles{\bx_{i,\hat a_i}, \bbeta^*_2} +\angles{\bx_{i,\hat a_i},\bbeta^*_2}-\angles{\bx_{i,\hat a_i}, \bbeta^*_1}\Big]
		\\
		& \le \EE\brackets{2\abs{\underset{a\in [K]}{\max}\; \angles{\bx_{i,a}, \bbeta_2^* - \bbeta_1^*}} 
			+  \underset{a\in [K]}{\max}\; \angles{\bx_{i,a}, \bbeta_2^*}-\angles{\bx_{i,\hat a_i}, \bbeta^*_2}} \\
		& \lesssim \overline{x}\norm{\bbeta_2^*-\bbeta_1^*}_1
		+  \frac{\overline{x}^2s^2\log d \log n_0}{N_{\tau-1}}.
	\end{aligned}
\end{equation}

By a similar argument, we have 
\begin{equation}\label{eq:52}
\EE(\reg^{*}_i \cond \hat g_i = 1, g_i = 2,\cE_i) \lesssim \overline{x}\norm{\bbeta_2^*-\bbeta_1^*}_1
+  \frac{\overline{x}^2s^2\log d \log n_0}{N_{\tau-1}}.
\end{equation}
In summary, we have
\begin{align*}
&\quad	\EE[\reg_i^{*}\cond \cE_{i}] \\
& = \EE[\reg^{*}_i \cond \hat g_i = g_i, \cE_i] \cdot \left(1-R\big(\hat\btheta\big)\right) + \EE[\reg^{*}_i \cond \hat g_i \neq g_i, \cE_i] R\left(\hat\btheta\right)\\
& \lesssim \frac{\overline{x}^2s^2\log d  \log n_0}{N_{\tau-1}} + \paren{\overline{x}\norm{\bbeta_2^* - \bbeta_1^*}_1 +    \frac{\overline{x}^2s^2\log d \log n_0}{N_{\tau-1}} } \cdot \paren{R(\btheta^*) + \sqrt{\frac{s\log d \log n_0}{N_{\tau-1}}}} \\
& \lesssim   \frac{\overline{x}^2s^2\log d  \log n_0}{N_{\tau-1}} + \overline{x} \norm{\bbeta_2^* - \bbeta_1^*}_1\cdot R(\btheta^*),
\end{align*}  
where we apply Theorem \ref{thm:miss-class-rate}. Combing \eqref{eq:Eec-strong} with the above inequality leads to \eqref{eqn:E[reg_i]-strong}.

Now we deal with the instant regular regret. Similar to \eqref{eq:Eec-strong}, we have 
	\begin{equation}\label{eq:Eec-regular}
	\EE[\tilde{\reg}_i \cond \cE_i^c] \lesssim \overline{R}/N_{\tau-1}, \quad \EE[\tilde{\reg}_i \cond \cG_i^c] \lesssim \overline{x}\sqrt{\frac{s^2\log d  \log n_0}{N_{\tau-1}}}.
\end{equation}
%By the proof of Theorem \ref{thm:coeff-bound-hd}, we have that 
%\[
%\norm{\hat{\bbeta}_{1} - \bbeta_{1}^*}_1 + \norm{\hat{\bbeta}_{2} - \bbeta_{2}^*}_2 + \norm{\hat{\btheta} - \btheta^*}_2
%\lesssim 
%\sqrt{\frac{s\cdot\log(d \vee N_{\tau-1}) \log N_{\tau-1}}{N_{\tau-1}}},
%\]
%with probability at least $1-c_0 \frac{\log^2 N_{\tau-1}}{\max\{N_{\tau-1}, d\}}$. By the sub-Gaussianity of $\bx_{i,a}$ and the maximal sub-Gaussian inequality, 
%\[
%\underset{a\in [K]}{\max}\; \abs{\angles{\bx_{i,a}, \bbeta_g^* - \hat \bbeta_g} }\lesssim \sqrt{\log K+\log(d \vee N_{\tau-1})}\cdot \norm{\hat \bbeta_g - \bbeta_g^*}_2,
%\]
%with probability  at least $1-c_0 \frac{1}{\max\{N_{\tau-1}, d\}}$. By Assumption \ref{assump:reward}, we have that the third inequality in $\cE_i$,
%\[\angles{\bx_{\tilde{a}_{i, g}, i}, \bbeta_g^*} \le \max_{a\ne \tilde{a}_{i, g}}\angles{\bx_{a, i}, \bbeta^*_{g}} + 2C^2\sqrt{\frac{s\cdot\log(d \vee N_{\tau-1}) \log N_{\tau-1}(\log K+\log(d \vee N_{\tau-1}))}{N_{\tau-1}}}, \text{ for } g= 1, 2,\]
%hold with probability $1-2C^2\sqrt{\frac{s\cdot\log(d \vee N_{\tau-1}) \log N_{\tau-1}(\log K+\log(d \vee N_{\tau-1}))}{N_{\tau-1}}}$. Therefore,
%\[
%\PP(\cE_i^c) \lesssim  \sqrt{\frac{s\cdot\log(d \vee N_{\tau-1}) \log N_{\tau-1}(\log K+\log(d \vee N_{\tau-1}))}{N_{\tau-1}}},
%\]
%and thus
%\begin{equation}\label{eq:Eec}
%	\EE[\tilde{\reg}_i \cond \cE_i^c] \lesssim 2\overline{R} \sqrt{\frac{s\cdot\log(d \vee N_{\tau-1}) \log N_{\tau-1}(\log K+\log(d \vee N_{\tau-1}))}{N_{\tau-1}}}.
%\end{equation}
%
%In the sequel, we consider the conditional  expectation on $\cE_i$. 

Note that
\begin{equation}\label{eq:Ee}
	\begin{aligned}
		&\quad \EE[\tilde{\reg}_i\cond \cE_{i} \cap \cG_{i}]\\
		&=\EE\brackets{\angles{\bx_{i,\tilde{a}_i}, \bbeta^*_{g_i}} - \angles{\bx_{i,\hat a_i}, \bbeta^*_{g_i}} \cond \cE_{i} \cap \cG_{i} }\\
		& =\EE\brackets{\underset{a\in [K]}{\max}\; \angles{\bx_{i,a}, \bbeta^*_{g_i}} - \angles{\bx_{i,\hat a_i}, \bbeta^*_{g_i}} \cond \cE_{i} \cap \cG_{i} } - \EE\brackets{\underset{a\in [K]}{\max}\; \angles{\bx_{i,a}, \bbeta^*_{g_i}} - \angles{\bx_{i, \tilde{a}_i}, \bbeta^*_{g_i}}\cond \cE_{i} \cap \cG_{i} }\\
		& = \EE\brackets{\underset{a\in [K]}{\max}\; \angles{\bx_{i,a}, \bbeta^*_{g_i}} - \angles{\bx_{i,\hat a_i}, \bbeta^*_{g_i}} \cond g_i=\widehat{g}_i, \cE_{i} \cap \cG_{i}} \left(1-R\big(\hat{\btheta}\big)\right) \\
		&\quad + \EE\brackets{\underset{a\in [K]}{\max}\; \angles{\bx_{i,a}, \bbeta_{g_i}^*} - \angles{\bx_{i,\hat a_i}, \bbeta_{g_i}^*} \cond g_i\neq\widehat{g}_i,  \cE_{i} \cap \cG_{i}} R\big(\hat{\btheta}\big)\\
		&  \quad - \EE\brackets{\underset{a\in [K]}{\max}\; \angles{\bx_{i,a}, \bbeta^*_{g_i}} - \angles{\bx_{i,\tilde a_i}, \bbeta^*_{g_i}} \cond g_i=\tilde{g}_i,  \cE_{i} \cap \cG_{i}} \left(1-R(\btheta^*)\right)\\
		&\quad - \EE\brackets{\underset{a\in [K]}{\max}\; \angles{\bx_{i,a}, \bbeta_{g_i}^*} - \angles{\bx_{i,\tilde a_i}, \bbeta_{g_i}^*} \cond g_i\neq\tilde{g}_i,  \cE_{i} \cap \cG_{i}} R(\btheta^*)
	\end{aligned}
\end{equation}

% Under $\cE_i$, 
%it holds that 
%\[\max\limits_{a\in [K]}\abs{\angles{\bx_{i,a}, \hat \bbeta_{g}} - \angles{\bx_{i,a}, \bbeta^*_{g}}} \leq \frac{1}{2}\left( \angles{\bx_{i, \tilde{a}_{i, g}}, \bbeta_g^*} - \max_{a\ne \tilde{a}_{i, g}}\angles{\bx_{i, a}, \bbeta^*_{g}}\right),\]
%which implies that, for any $a \neq \tilde{a}_{i, g}= \underset{a\in [K]}{\arg \max}\; \angles{\bx_{i,a}, \bbeta^*_{g}}$,
%\begin{equation}
%	\begin{aligned}
%		\angles{\bx_{i, a}, \hat\bbeta_{g}} & \leq \abs{\angles{\bx_{i,a}, \hat \bbeta_{g}} - \angles{\bx_{i,a}, \bbeta^*_{g}}}  + \angles{\bx_{i, a}, \bbeta^*_{g}}	\\
%		&\leq \max\limits_{a\in [K]}\abs{\angles{\bx_{i,a}, \hat \bbeta_{g}} - \angles{\bx_{i,a}, \bbeta^*_{g}}}  +\max_{a\ne \tilde{a}_{i, g}}\angles{\bx_{i, a}, \bbeta^*_{g}}\\
%		& \leq \angles{\bx_{i, \tilde{a}_{i, g}}, \bbeta_g^*}-\frac{1}{2}\left( \angles{\bx_{i, \tilde{a}_{i, g}}, \bbeta_g^*} - \max_{a\ne \tilde{a}_{i, g}}\angles{\bx_{i, a}, \bbeta^*_{g}}\right)\\
%		& \leq  \angles{\bx_{i, \tilde{a}_{i, g}}, \hat{\bbeta}_g}.
%	\end{aligned}
%\end{equation}
Since $\underset{a\in [K]}{\arg \max}\; \angles{\bx_{i,a}, \hat\bbeta_{g}}=\underset{a\in [K]}{\arg \max}\; \angles{\bx_{i,a}, \bbeta^*_{g}}$ for $g=1,2$ under $\cE_i \cap \cG_i$, we have $\hat{a}_i = \underset{a\in [K]}{\arg \max}\; \angles{\bx_{i,a}, \hat \bbeta_{g_i}}= \underset{a\in [K]}{\arg \max}\; \angles{\bx_{i,a}, \bbeta^*_{g_i}} =\tilde{a}_i$ when $g_i=\hat{g}_i=\tilde{g}_i$. 
Therefore, 
\begin{align*}
	&\quad \EE\brackets{\underset{a\in [K]}{\max}\; \angles{\bx_{i,a}, \bbeta^*_{g_i}} - \angles{\bx_{i,\hat a_i}, \bbeta^*_{g_i}} \cond g_i=\widehat{g}_i,  \cE_{i} \cap \calG_{i}} \left(1-R\big(\hat{\btheta}\big)\right) \\
	&\quad - \EE\brackets{\underset{a\in [K]}{\max}\; \angles{\bx_{i,a}, \bbeta^*_{g_i}} - \angles{\bx_{i,\tilde a_i}, \bbeta^*_{g_i}} \cond g_i=\tilde{g}_i, \cE_{i} \cap \calG_{i}} \left(1-R(\btheta^*)\right)\\
	%& = \EE\brackets{\underset{a\in [K]}{\max}\; \angles{\bx_{i,a}, \bbeta^*_{g_i}} - \angles{\bx_{i,\hat a_i}, \bbeta^*_{g_i}} \cond g_i=\widehat{g}_i,  \cE_{i}\cap \calG_{i}} \left(R(\btheta^*)-R\big(\hat{\btheta}\big)\right)\\
	& = 0.
\end{align*}
And 
\begin{align*}
	&\quad \EE\brackets{\underset{a\in [K]}{\max}\; \angles{\bx_{i,a}, \bbeta^*_{g_i}} - \angles{\bx_{i,\hat a_i}, \bbeta^*_{g_i}} \cond g_i \neq \widehat{g}_i,  \cE_{i} \cap \calG_{i}} R\big(\hat{\btheta}\big) \\
	&\quad - \EE\brackets{\underset{a\in [K]}{\max}\; \angles{\bx_{i,a}, \bbeta^*_{g_i}} - \angles{\bx_{i,\tilde a_i}, \bbeta^*_{g_i}} \cond g_i\neq \tilde{g}_i, \cE_{i} \cap \cG_i} R(\btheta^*)\\
	& = \EE\brackets{\underset{a\in [K]}{\max}\; \angles{\bx_{i,a}, \bbeta^*_{g_i}} - \angles{\bx_{i,\hat a_i}, \bbeta^*_{g_i}} \cond g_i \neq\widehat{g}_i,  \cE_{i} \cap \calG_{i}} \left(R\big(\hat{\btheta}\big)-R(\btheta^*)\right)\\
	& \lesssim \left(\overline{x}\norm{\bbeta_2^*-\bbeta_1^*}_1
	+  \overline{x}\sqrt{\frac{s^2\log d \log n_0 }{N_{\tau-1}}}\right) \cdot \sqrt{\frac{s\log d \log n_0}{N_{\tau-1}}},
\end{align*}
where the last inequality follows from \eqref{eq:51}, \eqref{eq:52}, and Theorem \ref{thm:miss-class-rate}. Combining \eqref{eq:Eec-regular} and the above two inequalities, we obtain that 
\[ \EE[\tilde{\reg}_i] \lesssim \overline{x}\norm{\bbeta_2^*-\bbeta_1^*}_1\sqrt{\frac{s\log d \log n_0}{N_{\tau-1}}}.\]

Now we return to the cumulative regrets. The regret accumulated in the $\tau$-th phase can be bounded in two different cases. 
\begin{enumerate}[label=(\roman*)]
	\item When $\tau\leq1$, we have $N_{\tau} \leq 2n_0$, then the boundedness of rewards in Assumption \ref{B1} implies that
	\begin{align*}
		& \sum_{i\in\calN_{\tau}} \EE\brackets{\reg^{*}_i} \leq 2\overline{R} N_{\tau} \lesssim \overline{R}n_0, \quad  \sum_{i\in\calN_{\tau}} \EE\brackets{\tilde{\reg}_i} \leq 2\overline{R} N_{\tau} \lesssim \overline{R}n_0. 
	\end{align*}
	%Compared with the simple linear setting, the additional $\log N_{\tau-1}$ is incurred by the iterative algorithm and the sample splitting in theoretical analysis. 
	\item When $\tau \geq 2$, by \eqref{eqn:E[reg_i]-strong}, the expected strong regret in the $\tau$-th phase satisfies
	\begin{align*}
		\sum_{i\in \calN_{\tau}}\EE[\reg^{*}_i] \lesssim
		\overline{x}^2s^2\log d \log n_0 + \overline{x} \norm{\bbeta_2^* - \bbeta_1^*}_1\cdot R(\btheta^*)N_{\tau-1}.
	\end{align*} 
	By \eqref{eqn:E[reg_i]-weak}, the expected regular regret in the $\tau$-th phase satisfies
	\begin{align*}
		\sum_{i\in \calN_{\tau}}\EE[\tilde{\reg}_i] \lesssim \overline{x} \norm{\bbeta_2^*-\bbeta_1^*}_1\sqrt{s\log d \log n_0 \cdot N_{\tau-1}}.
	\end{align*} 
\end{enumerate}

Hence, the total expected {\em strong} regret is 
\begin{align*}
	\Reg^{*}(T) & = \sum_{\tau = 0}^{\tau_{\max}} \sum_{i\in\calN_{\tau}} \EE\brackets{\reg^{*}_i} \\
	& \lesssim  \overline{R}n_0 +\overline{x}^2s^2\log d \log n_0 \cdot \log T+ \overline{x} \norm{\bbeta_2^* - \bbeta_1^*}_1\cdot R(\btheta^*) \cdot T.
\end{align*}
The total expected regular regret is
\begin{align*}
	\tilde{\Reg}(T) & = \sum_{\tau = 1}^{\tau_{\max}} \sum_{i\in\calN_{\tau}} \EE\brackets{\tilde{\reg}_i}  \\
	& \lesssim \overline{R}n_0 + \sum_{\tau = 2}^{\tau_{\max}}  \overline{x} \norm{\bbeta_2^*-\bbeta_1^*}_1\sqrt{sn_02^{\tau-1}\log d \log n_0 }\\
	& \lesssim \overline{R}n_0+ \overline{x} \norm{\bbeta_2^*-\bbeta_1^*}_1\sqrt{s\log d \log n_0 } \sqrt{T}.
\end{align*}

\subsection{Proof for the Regret Lower Bound}\label{sec:proof-lower}
\textbf{Proof of Theorem \ref{thm:lower-bound}}
	We first show the lower bound for the instant strong regret. Given constants $\overline{L}>0$ and $\overline{x}>0$, let $\bbeta_{1}^{*}=(\overline{L}, 0, \dots, 0)$, $\bbeta_2^*=(-\overline{L}, 0, \dots, 0)$, and the $j$-th entry of $\bx_{i, a_i}$ be $x_{ij} + \frac{\overline{x}}{2}(3- 2a_i)$, where $x_{ij} \stackrel{i.i.d.}{\sim} \calU[-\overline{x}/2, \overline{x}/2]$ for $j=1,2,\dots, d$, and $a_i \in \left\{1, 2\right\}$. For simplicity, we denote $(\bx_{i, 1}, \bx_{i, 2})$ by $\bx_i$ for any $i$. The parameter $\btheta^* \in \RR^d$ and the distribution of $\bz_i$ can be chosen arbitrarily as long as they satisfy $\norm{\btheta^*}_0 \leq s$, \ref{A1}, \ref{A4}, and $\bz_i$ is independent of $x_{i1}$. Then it is straightforward to verify that this choice of $\mu(\bx, y, \bz;\bgamma^*)$ belongs to $ \calP_{d,s,\overline{x},\overline{L}}$. 
	
	We have that 
	\[\angles{\bx_{i,a_i}, \bbeta^*_{g_i}} =\overline{L} \left(x_{i1}+\frac{\overline{x}}{2}(3- 2a_i)\right)(3-2g_i)=\overline{L}x_{i1}(3-2g_i)+\overline{L}\overline{x}(3 - 2a_i)(3-2g_i)/2,\] and hence 
	\begin{equation}\label{eq:max-regret-lower}
		\max_{a_i \in [2]}\angles{\bx_{i,a_i}, \bbeta^*_{g_i}} =\overline{L}x_{i1}(3-2g_i)+\overline{L}\overline{x}/2.
	\end{equation}
	
	On the other hand, let $\hat\pi(a_i \mid \bx_i, \bz_i, \calH_{i-1})$ denote a policy for choosing $a_i$, i.e., a conditional distribution of $a_i$ given the present features $(\bx_i, \bz_i)$ and the past history $\calH_{i-1}:=(\bx_{i-1}, \bz_{i-1}, y_{i-1}, \dots, \bx_{1}, \bz_{1}, y_{1})$. Let $\hat\pi_1:=\hat\pi(a_i=1 \mid \bx_i, \bz_i, \calH_{i-1})$ and $p_1:=\PP(g_i=1 \mid \bx_i, \bz_i, \calH_{i-1})=p(\bz_i^{\top}\btheta^*)$. Note that given $\bz_i$, the group $g_i$ is independent of the action $\widehat{a}_i$ under $\pi$. Then the conditional expected reward with $\widehat{a}_i \sim \hat\pi(a_i \mid \bx_i, \bz_i, \calH_{i-1})$ can be written as
	\begin{equation}\label{eq:pi-hat-regret}
	\begin{aligned}
			&\quad \EE_{\hat\pi}\left[\angles{\bx_{i,\widehat{a}_i}, \bbeta^*_{g_i}} \mid  \bx_i, \bz_i, \calH_{i-1} \right]\\
			&=\overline{L}\Big\{p_1\left[(x_{i1}+\overline{x}/2)\hat\pi_1+(x_{i1}-\overline{x}/2)(1-\hat\pi_1)\right]+(1-p_1)\left[(-x_{i1}-\overline{x}/2)\hat\pi_1+(\overline{x}/2-x_{i1})(1-\hat\pi_1)\right]\Big\},
	\end{aligned}
	\end{equation}
	and hence, for all $\hat\pi(a_i \mid \bx_i, \bz_i, \calH_{i-1})$,
	\begin{align*}
	 \EE_{\hat\pi}\left[\angles{\bx_{i,\widehat{a}_i}, \bbeta^*_{g_i}} \mid  \bx_i, \bz_i, \calH_{i-1} \right] &\leq \overline{L}(2p_1-1) \left(x_{i1}+\overline{x}\sgn[2p_1-1]/2\right)\\
		&=\overline{L}(2p(\bz_i^{\top}\btheta^*)-1)x_{i1}+\frac{\overline{x}\overline{L}}{2}\abs{2p(\bz_i^{\top}\btheta^*)-1}.
	\end{align*}
	Note that \eqref{eq:max-regret-lower} implies
	\begin{equation}\label{eq:expected-max-regret-lower}
		\begin{aligned}
				\EE\left[\max_{a_i \in [2]}\angles{\bx_{i,a_i}, \bbeta^*_{g_i}} \mid \bx_i, \bz_i, \calH_{i-1}\right] 
				&= p_1(\overline{L}x_{i1}+\overline{L}\overline{x}/2)+(1-p_1)(-\overline{L}x_{i1}+\overline{L}\overline{x}/2)\\
				&=\overline{L}(2p(\bz_i^{\top}\btheta^*)-1)x_{i1}+\overline{L}\overline{x}/2.
		\end{aligned}
	\end{equation}
	As  $R(\btheta^*)=\EE \Big[\min\braces{(1 - p(\bz_i^\top\btheta^*), p(\bz_i^\top\btheta^*)}\Big]=\frac{1}{2}\EE\left[1-\abs{2p(\bz_i^{\top}\btheta^*)-1}\right]$, we obtain, for any policy $\hat\pi$, 
	\[\sup_{\mu \in \calP_{s,d,\overline{x},\overline{L}}}\EE_{\hat\pi}[\reg^*_{i}] \geq \frac{\overline{x}\overline{L}}{2}\EE\left[1-\abs{2p(\bz_i^{\top}\btheta^*)-1}\right] \gtrsim \overline{x}\overline{L}R(\btheta^*).\]
	Hence, the cumulative regret 
		\[\inf_{\hat\pi}\sup_{\mu \in \calP_{s,d,\overline{x},\overline{L}}}\sum_{i=1}^{T}\EE_{\hat\pi}[\reg^*_{i}]\gtrsim \overline{x}\overline{L}R(\btheta^*)T.\]
		
	We then show the lower bound for the instant regular regret, where we first introduce the following lemma on the lower bound of the excess misclassification rate for the sparse logistic model:
	\begin{lemma}[\cite{abramovich2018high}, Section \uppercase\expandafter{\romannumeral 6\relax}]\label{lem:risk-lower-bound}
		Define a sparse logistic model $(y, \bz) \sim \calL_{\btheta^*}$ as $y  \sim \mathrm{Bernoulli}(p)$ with $p=\frac{\exp(\bz^{\top}\btheta^*)}{1+\exp(\bz^{\top}\btheta^*)}$, where $\btheta^*\in \RR^d$ and $\norm{\btheta^*}_0 \leq s$. Then we have
		\[\inf_{\hat\eta}\sup_{\norm{\btheta^*}_0 \leq s} \big[\EE_{\{(y_i, \bz_i)\}_{i=1}^n \sim \calL_{\btheta^*}}[R_{\btheta^*}(\hat\eta)]-R_{\btheta^*}(\eta^*)\big] \gtrsim \sqrt{\frac{s\log(d/s)}{n}},\]
		where $R_{\btheta^*}(\eta):=\EE_{(y, \bz) \in \calL_{\btheta^*}}[\one(\eta(\bz) \neq y)]$, $\eta^*(\bz)=\one(\bz^{\top}\btheta^*>0)$, and the infimum is taken over all classifiers $\hat\eta: \RR^d \to \{0, 1\}$ learned from random samples $\{(y_i, \bz_i)\}_{i=1}^n$.
	\end{lemma}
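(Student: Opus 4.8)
The plan is to prove this minimax lower bound by the standard reduction to multiple hypothesis testing, running Fano's inequality over a carefully chosen finite family of sparse logistic models and exploiting the excess-risk identity
\[
R_{\btheta}(\eta) - R_{\btheta}(\eta_{\btheta}^*) = \EE\big[\,|2p(\bz^\top\btheta)-1|\,\one(\eta(\bz)\neq \eta_{\btheta}^*(\bz))\,\big],
\]
where $\eta_{\btheta}^*(\bz)=\one(\bz^\top\btheta>0)$ is the Bayes rule (a standard identity, cf.\ the computation in the proof of Theorem~\ref{thm:miss-class-rate}). This recasts the excess risk as the margin-weighted probability of the region on which $\eta$ disagrees with the Bayes classifier, which is the quantity I will lower-bound.

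First I would fix a hard design: let the coordinates of $\bz$ be i.i.d.\ sub-Gaussian (Gaussian or Rademacher), so that $\EE[\bz\bz^\top]=\bI_d$ enjoys the covariance regularity. I would then build the hypotheses by the Varshamov--Gilbert lemma: a packing set $\cW\subseteq\{0,1\}^d$ of vectors with exactly $s$ ones, pairwise Hamming distance $\gtrsim s$, and $\log|\cW|\gtrsim s\log(d/s)$. To each $\omega\in\cW$ associate the $s$-sparse parameter $\btheta_\omega=\gamma\,\omega$ with radius $\gamma\asymp\sqrt{\log(d/s)/n}$ to be calibrated. Because $p\approx 1/2$ throughout (as $\gamma$ is small), the per-sample Kullback--Leibler divergence between $\calL_{\btheta_\omega}$ and $\calL_{\btheta_{\omega'}}$ is of order $\EE[(\bz^\top(\btheta_\omega-\btheta_{\omega'}))^2]\asymp\gamma^2\|\omega-\omega'\|_2^2\lesssim\gamma^2 s$; over $n$ i.i.d.\ samples this is $\lesssim n\gamma^2 s\asymp s\log(d/s)\asymp\log|\cW|$, exactly the budget Fano's inequality tolerates.

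The crux — and the step I expect to be the main obstacle — is the separation bound that survives the fact that the reference classifier $\eta^*_{\btheta}$ itself moves with the hypothesis. I would establish the pairwise inequality: for any data-dependent $\hat\eta$ and any $\omega\neq\omega'$,
\[
\big[R_{\btheta_\omega}(\hat\eta)-R_{\btheta_\omega}(\eta^*_{\btheta_\omega})\big]+\big[R_{\btheta_{\omega'}}(\hat\eta)-R_{\btheta_{\omega'}}(\eta^*_{\btheta_{\omega'}})\big]\ \ge\ \EE\big[\min\{w_\omega,w_{\omega'}\}\,\one(D_{\omega,\omega'})\big],
\]
with $w_\omega(\bz)=|2p(\bz^\top\btheta_\omega)-1|$ and $D_{\omega,\omega'}=\{\sgn(\bz^\top\btheta_\omega)\neq\sgn(\bz^\top\btheta_{\omega'})\}$; this holds because on $D_{\omega,\omega'}$ the two Bayes rules disagree, so $\hat\eta$ must differ from at least one of them. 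It then remains to bound the right-hand side from below. Since $\bz^\top\omega$ is a sum of $\asymp s$ independent sub-Gaussian terms, on a constant-probability event $\{|\bz^\top\omega|\asymp\sqrt s,\ |\bz^\top\omega'|\asymp\sqrt s,\ \sgn(\bz^\top\omega)\neq\sgn(\bz^\top\omega')\}$ — whose probability is bounded away from zero precisely because the Hamming distance $\gtrsim s$ keeps the two linear forms far from collinear — the linearization $|2p(u)-1|=|\tanh(u/2)|\asymp|u|$ is valid (as $\gamma\sqrt s\to 0$), giving $\min\{w_\omega,w_{\omega'}\}\gtrsim\gamma\sqrt s$ there. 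Hence the right-hand side is $\gtrsim\gamma\sqrt s\asymp\sqrt{s\log(d/s)/n}$. The delicate point is the simultaneous control of margin (small near each boundary) and disagreement mass; restricting to the event $|\bz^\top\omega|\asymp\sqrt s$ resolves it by staying in the linear regime while keeping both the mass and the margin of the right order.

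Finally I would close with Fano. Defining the test $\hat\omega=\argmin_{\omega\in\cW}[R_{\btheta_\omega}(\hat\eta)-R_{\btheta_\omega}(\eta^*_{\btheta_\omega})]$, the pairwise inequality guarantees that, under a true $\omega$, the event $\{\hat\omega\neq\omega\}$ can occur only when the excess risk under $\omega$ is at least half the separation; therefore $\max_{\omega}\EE_\omega[\text{excess risk}]\gtrsim\gamma\sqrt s\cdot\inf_{\text{tests}}\max_\omega\PP_\omega(\hat\omega\neq\omega)$. With the Kullback--Leibler budget above and $\log|\cW|\gtrsim s\log(d/s)$, Fano's inequality lower-bounds the testing error by a constant, yielding $\inf_{\hat\eta}\sup_{\|\btheta^*\|_0\le s}\EE[R_{\btheta^*}(\hat\eta)-R_{\btheta^*}(\eta^*)]\gtrsim\sqrt{s\log(d/s)/n}$, as claimed. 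Since every $\btheta_\omega$ is $s$-sparse and the construction is self-contained, this reproduces the lower bound of \citet{abramovich2018high}.
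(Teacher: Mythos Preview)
The paper does not prove Lemma~\ref{lem:risk-lower-bound}; it is quoted verbatim as a result from \cite{abramovich2018high} and used as a black box in the proof of Theorem~\ref{thm:lower-bound}. So there is no ``paper's own proof'' to compare against.

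Your reconstruction follows the standard route for such minimax classification lower bounds---Varshamov--Gilbert packing, KL control via the local quadratic behavior of the logistic likelihood at $p\approx 1/2$, and Fano---and is structurally sound. The excess-risk identity, the pairwise separation inequality on the Bayes-disagreement set $D_{\omega,\omega'}$, and the reduction to testing via $\hat\omega=\argmin_\omega[\text{excess risk}]$ are all correct. The one step that deserves a fuller argument is the lower bound $\EE[\min\{w_\omega,w_{\omega'}\}\one(D_{\omega,\omega'})]\gtrsim\gamma\sqrt{s}$: you rely on the joint law of $(\bz^\top\omega,\bz^\top\omega')$ placing constant mass on $\{|\bz^\top\omega|\asymp\sqrt{s},\ |\bz^\top\omega'|\asymp\sqrt{s},\ \sgn(\bz^\top\omega)\neq\sgn(\bz^\top\omega')\}$, which holds because the Hamming separation $\gtrsim s$ forces the correlation of the two linear forms strictly below one, but this should be stated and verified for the specific design you choose rather than asserted. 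You also implicitly use $\gamma\sqrt{s}=o(1)$ to stay in the linear regime of $\tanh$; this is automatic whenever the claimed rate $\sqrt{s\log(d/s)/n}$ is nontrivial, but worth noting. With those two points filled in, your argument reproduces the cited bound.
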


Let $\calH_{i-1, \bz}:=\{\bz_{1}, \bz_{2}, \dots, \bz_{i-1}\}$. 
By \eqref{eq:pi-hat-regret} and \eqref{eq:expected-max-regret-lower}, we have that 
\begin{align*}
	&\quad \EE\left[\max_{a_i \in [2]}\angles{\bx_{i,a_i}, \bbeta^*_{g_i}} \mid \bz_i,  \calH_{i-1, \bz}\right] - \EE_{\hat\pi}\left[\angles{\bx_{i,\widehat{a}_i}, \bbeta^*_{g_i}} \mid  \bz_i,  \calH_{i-1, \bz}\right] \\
		&=\EE_{\bx}\left[\EE\left[\max_{a_i \in [2]}\angles{\bx_{i,a_i}, \bbeta^*_{g_i}} \mid \bx_i, \bz_i, \calH_{i-1}\right]\right] - \EE_{\bx}\left[\EE_{\hat\pi}\left[\angles{\bx_{i,\widehat{a}_i}, \bbeta^*_{g_i}} \mid  \bx_i, \bz_i, \calH_{i-1} \right]\right] \\
		&= \overline{x}\overline{L}\Big[p_1(1-\EE_{\bx}[\hat\pi_1])+(1-p_1)\EE_{\bx}[\hat\pi_1]\Big],
\end{align*}
%\[
%	\EE\left[\max_{a_i \in [2]}\angles{\bx_{i,a_i}, \bbeta^*_{g_i}} \mid \bx_i, \bz_i, \calH_{i-1}\right]- \EE_{\hat\pi}\left[\angles{\bx_{i,\widehat{a}_i}, \bbeta^*_{g_i}} \mid  \bx_i, \bz_i, \calH_{i-1} \right]
%	= \overline{x}\overline{L}\Big[p_1(1-\hat\pi_1)+(1-p_1)\hat\pi_1\Big].
%\]
where $\EE_{\bx}$ is taken over $\bx_1,\dots, \bx_{i}$. In particular,
\[	\EE\left[\max_{a_i \in [2]}\angles{\bx_{i,a_i}, \bbeta^*_{g_i}} \mid   \bz_i, \calH_{i-1,\bz}\right] - \EE\left[\angles{\bx_{i,\widetilde{a}_i}, \bbeta^*_{g_i}} \mid \bz_i, \calH_{i-1,\bz} \right] = \overline{x}\overline{L}[p_1(1-\widetilde\pi_1)+(1-p_1)\widetilde\pi_1],\]
where $\widetilde\pi_1=\one(\bz_i^{\top}\btheta^* \geq 0)$. Note that $\EE_{\bx}[\hat\pi_1]=\EE_{\bx}[\hat\pi(a_i=1 \mid \bx_i, \bz_i, \calH_{i-1})]$ can be viewed a function of $\bz_i$ that is learned based on $\calH_{i-1, \bz}$, and thus we can correspondingly define an estimated classifier $\hat\eta_{\hat\pi}$ such that $\hat\eta_{\hat\pi}=1$ with probability $\EE_{\bx}[ \hat\pi_1]$ and $\hat\eta_{\hat\pi}=2$ with probability $1-\EE_{\bx}[\hat\pi_1]$. Then, using $\EE_{\bz}$ to denote the expectation taken over $\bz_1,\dots, \bz_{i}$, we have $\EE_{\bz}\big[p_1(1-\EE_{\bx}[\hat\pi_1])+(1-p_1)\EE_{\bx}[\hat\pi_1]\big]=\EE_{\bz}[R_{\btheta^*}(\hat\eta_{\hat\pi})]$ and $\EE_{\bz}[p_1(1-\widetilde\pi_1)+(1-p_1)\widetilde\pi_1]=R_{\btheta^*}(\eta^*)$, where $\eta^*(\bz)$ is the classifier such that $g_i=1$ if $\bz^{\top}\btheta^*\geq 0$ and $g_i=2$ otherwise. Therefore, for any policy $\hat\pi$, 
\begin{align*}
	&\quad \EE \left[\angles{\bx_{i, \widetilde{a}_i}, \bbeta^*_{g_i}}\right] - \EE_{\hat\pi} \left[\angles{\bx_{i, \widehat{a}_i}, \bbeta^*_{g_i}}\right]\\
	&=\EE_{\bz}\bigg[\EE\left[\max_{a_i \in [2]}\angles{\bx_{i,a_i}, \bbeta^*_{g_i}} \mid \bz_i,  \calH_{i-1, \bz}\right] - \EE_{\hat\pi}\left[\angles{\bx_{i,\widehat{a}_i}, \bbeta^*_{g_i}} \mid  \bz_i,  \calH_{i-1, \bz}\right]\bigg]\\
	&\quad  - \EE_{\bz}\bigg[\EE\left[\max_{a_i \in [2]}\angles{\bx_{i,a_i}, \bbeta^*_{g_i}} \mid   \bz_i, \calH_{i-1,\bz}\right] - \EE\left[\angles{\bx_{i,\widetilde{a}_i}, \bbeta^*_{g_i}} \mid \bz_i, \calH_{i-1,\bz} \right] \bigg]\\
	&=\overline{x}\overline{L}[\EE_{\bz}[R_{\btheta^*}(\hat\eta_{\hat\pi})]-R_{\btheta^*}(\eta^*)].
\end{align*}
By Lemma \ref{lem:risk-lower-bound}, we obtain that
\[ \inf_{\hat\pi}\sup_{\mu \in \calP_{d,s,\overline{x},\overline{L}}}\EE \left[\angles{\bx_{i, \widetilde{a}_i}, \bbeta^*_{g_i}}\right] - \EE_{\hat\pi} \left[\angles{\bx_{i, \widehat{a}_i}, \bbeta^*_{g_i}}\right] \gtrsim \overline{x}\overline{L}\sqrt{\frac{s\log d}{i-1}}.\]
Hence, for $n_0\gtrsim s\log d$, the cumulative regular regret 
\[ \inf_{\hat\pi}\sup_{\mu \in \calP_{d,s,\overline{x},\overline{L}}}\sum_{i=n_0}^{T}\bigg[\EE \left[\angles{\bx_{i, \widetilde{a}_i}, \bbeta^*_{g_i}}\right] - \EE_{\hat\pi} \left[\angles{\bx_{i, \widehat{a}_i}, \bbeta^*_{g_i}}\right]\bigg] \gtrsim \overline{x}\norm{\bbeta_1^*-\bbeta_2^*}_1\sqrt{\frac{s\log d}{i-1}}\gtrsim \overline{x}\overline{L}\sqrt{sT\log d},\]
where we use the fact that $\sqrt{n}\leq \sum_{i=1}^{n}\frac{1}{\sqrt{i}} \leq 2\sqrt{n}$.

\section{Proof for the Technical Lemmas}
%!TEX root = 0-main.tex

\subsection{Proof of Lemma \ref{thm:lemma-expectation}} \label{sec:proof-lemma-expectation}

\begin{comment}
This lemma is related to Lemma 2 and Definition 4 (5.5) in \cite{balakrishnan2017statistical}. 
See also (A.5a) on page 37 of the paper. 
Lemma A.1 in \cite{zhang2020estimation}.
\end{comment}
Recall that $\bgamma=(\btheta, \bbeta_1, \bbeta_2)$, $\bgamma^*=(\btheta^*, \bbeta_1^*, \bbeta_2^*)$, $\bgamma^{(t)}=\left(\btheta^{(t)}, \bbeta_1^{(t)}, \bbeta_2^{(t)}\right)$,

\begin{equation}
	\omega(\bx, y, \bz ; \bgamma) 
	= \frac{p(\bz^\top\btheta) \cdot
		\phi\paren{\frac{y - \bx^\top\bbeta_1}{\sigma}}}{p(\bz^\top\btheta) \cdot \phi\paren{\frac{y - \bx^\top\bbeta_1}{\sigma}} 
		+ \paren{1-p(\bz^\top\btheta)} \cdot \phi\paren{\frac{y - \bx^\top\bbeta_2}{\sigma}}},
\end{equation}
$\omega_i^*=\omega(\bx_i, y_i, \bz_i; \bgamma^*)$, and $\omega_i^{(t)}=\omega(\bx_i, y_i, \bz_i; \bgamma^{(t)})$.
%Define a function $f_1(\btheta, \bbeta_1, \bbeta_2) 
%\defeq \frac{1-p(\bz^\top\btheta)}{p(\bz^\top\btheta)} \times \frac{\phi\paren{\frac{y - \bx^\top\bbeta_2}{\sigma}}}{\phi\paren{\frac{y - \bx^\top\bbeta_1}{\sigma}}} 
%= \frac{1-p(\bz^\top\btheta)}{p(\bz^\top\btheta)} \times \exp\left(\frac{(y - \bx^\top \bbeta_1)^2 - (y - \bx^\top \bbeta_2)^2}{2\sigma^2}\right)$.
%We rewrite $\omega(\bx, y, \bz ; \bgamma) = \frac{1}{1 + f_1(\btheta, \bbeta_1, \bbeta_2)}$.
We calculate the partial derivatives of $\omega(\bx, y, \bz ; \bgamma)$ with respect to each component of $\bgamma$ as%, given by:
%\begin{equation*}
%\frac{\partial\omega(\bx, y, \bz ; \bgamma)}{\partial \bgamma} = - \frac{1}{(1 + f_1(\bgamma))^2}\cdot\frac{\partial f_1(\bgamma)}{\partial \bgamma},   
%\end{equation*}
%where
\begin{equation}\label{eq:partial}
	\begin{aligned}
		&\quad \frac{\partial \omega}{\partial \btheta} \\
		& =\frac{p(\bz^\top\btheta)(1-p(\bz^\top\btheta))\bz}{\left[p(\bz^\top\btheta)+(1-p(\bz^\top\btheta))\exp\left(\frac{(y-\bx^{\top}\bbeta_1)^2-(y-\bx^{\top}\bbeta_2)^2}{2\sigma^2}\right)\right]\left[p(\bz^\top\btheta)\exp\left(\frac{(y-\bx^{\top}\bbeta_2)^2-(y-\bx^{\top}\bbeta_1)^2}{2\sigma^2}\right)+1-p(\bz^\top\btheta)\right]}, \\
		%%------------------------
		&\quad \frac{\partial \omega}{\partial \bbeta_1} \\
		& = \frac{p(\bz^\top\btheta)(1-p(\bz^\top\btheta))(y-\bx^\top\bbeta_1)\bx/\sigma^2}{\left[p(\bz^\top\btheta)+(1-p(\bz^\top\btheta))\exp\left(\frac{(y-\bx^{\top}\bbeta_1)^2-(y-\bx^{\top}\bbeta_2)^2}{2\sigma^2}\right)\right]\left[p(\bz^\top\btheta)\exp\left(\frac{(y-\bx^{\top}\bbeta_2)^2-(y-\bx^{\top}\bbeta_1)^2}{2\sigma^2}\right)+1-p(\bz^\top\btheta)\right]},\\
		%%------------------------
		&\quad \frac{\partial \omega}{\partial \bbeta_2} \\
		& = \frac{-p(\bz^\top\btheta)(1-p(\bz^\top\btheta))(y-\bx^\top\bbeta_2)\bx/\sigma^2}{\left[p(\bz^\top\btheta)+(1-p(\bz^\top\btheta))\exp\left(\frac{(y-\bx^{\top}\bbeta_1)^2-(y-\bx^{\top}\bbeta_2)^2}{2\sigma^2}\right)\right]\left[p(\bz^\top\btheta)\exp\left(\frac{(y-\bx^{\top}\bbeta_2)^2-(y-\bx^{\top}\bbeta_1)^2}{2\sigma^2}\right)+1-p(\bz^\top\btheta)\right]}.
	\end{aligned}
\end{equation}

We note that the three inequalities in Lemma \ref{thm:lemma-expectation} can be shown following the same procedure. Therefore, we only show the proof of the first inequality, which is also the most difficult one.
Let  $\bdelta_{\bgamma}^{(t)}=\bgamma^{(t)}-\bgamma^*$ and $\bgamma_u^{(t)}=\bgamma^*+u\bdelta_{\bgamma}^{(t)}$ for $u\in[0,1]$, then
\begin{align*}
& \quad \EE\left[\omega_i^{(t)} \bx_i ( \bx_i^\top \bbeta_1^* - y_i)\right] - \EE\left[\omega_i^* \bx_i ( \bx_i^\top \bbeta_1^* - y_i)\right]= \EE\brackets{\int_0^1\angles{\frac{\partial\omega}{\partial \bgamma} \big|_{\bgamma_u^{(t)}}  \cdot \bx^{\top} ( \bx^\top \bbeta_1^* - y), \bdelta_{\bgamma}^{(t)}} \mathrm{d} u}.
%& = \brackets{\int_0^1\angles{\EE\frac{\partial\omega}{\partial \btheta}\big|_{\bgamma_u^{(t)}}, \bdelta_\theta} \mathrm{d} u} + \sum_{g=1,2}\EE\brackets{\int_0^1\angles{\EE\frac{\partial\omega}{\partial \bbeta_{g}}\big|_{\bgamma_{u}}, \bdelta_{\beta_g}} \mathrm{d} u}.
\end{align*}
It suffices to show that, for any constant $\kappa>0$, when $\delta_{t}:=\norm{\bbeta_1^{(t)} - \bbeta_1^*}_2 + \norm{\bbeta_2^{(t)} - \bbeta_2^*}_2 + \norm{\btheta^{(t)} - \btheta^*}_2$ is sufficiently small and $c_{\mathrm{SNR}}:=\norm{\bbeta_1^*-\bbeta_2^*}_2 / \sigma$ is sufficiently large, we have
\begin{equation}\label{eq:partial-bound}
	\sup_{u \in [0, 1]}\norm{\EE\left[\frac{\partial\omega}{\partial \bbeta_g}\big|_{\bgamma_{u}^{(t)}}\bx^{\top} ( \bx^\top \bbeta_1^* - y)\right]}_2 \leq \kappa \text{ for } g=1, 2, \text{ and } \sup_{u \in [0, 1]}\norm{\EE\left[\frac{\partial\omega}{\partial \btheta}\big|_{\bgamma_{u}^{(t)}}\bx^{\top} ( \bx^\top \bbeta_1^* - y)\right]}_2 \leq \kappa.
\end{equation}

We first show that $\sup\limits_{u \in [0, 1]}\norm{\EE\left[\frac{\partial\omega}{\partial \bbeta_1}\big|_{\bgamma_{u}^{(t)}}\bx^{\top} ( \bx^\top \bbeta_1^* - y)\right]}_2 \leq \kappa$, i.e., for any $\bv \in \RR^d$ such that $\norm{\bv}_2=1$, we will show that
\[\EE\left[\bv^{\top}\frac{\partial\omega}{\partial \bbeta_1}\big|_{\bgamma_{u}^{(t)}}(\bx^{\top}\bv) ( \bx^\top \bbeta_1^* - y)\right] \leq \kappa.\] 
In the sequel, we omit the subscript $u$, i.e., we use $(\bbeta_1, \bbeta_2, \btheta)$ to denote an arbitrary parameter between $(\bbeta_1^*, \bbeta_2^*, \btheta^*)$ and $\big(\bbeta_1^{(t)}, \bbeta_2^{(t)}, \btheta^{(t)}\big)$. Define 
\[\cE_1= \braces{\abs{\bz^{\top}(\btheta-\btheta^*)}<\mu_1\norm{\btheta-\btheta^*}_2, \abs{\bx^{\top}\bv}\leq \mu_1,  \abs{\bx^{\top}(\bbeta_g-\bbeta_g^*)} \leq \mu_1\norm{\bbeta_g-\bbeta_g^*}_2, g=1, 2},\]
where $\mu_1$ is a constant to be determined. By the sub-Gaussianity of $\bx, \bz$, we have that $\PP\paren{\cE_1^c} < 2e^{-\mu_1^2/(2\sigma_z^2)}+6e^{-\mu_1^2/(2\sigma_x^2)}$. 

Under $\cE_1$, note that $p(\bz^{\top}\btheta)(1-p(\bz^{\top}\btheta)) \leq \frac{1}{4}$ and $\abs{p(\bz^{\top}\btheta)-p(\bz^{\top}\btheta^*)} \leq \frac{1}{4}\abs{\bz^{\top}(\btheta-\btheta^*)} \leq  \frac{1}{4}\mu_1\norm{\btheta-\btheta^*}_2<\xi/2$ if $\norm{\btheta-\btheta^*}_2 \leq  \delta_{0}$ for $\delta_{0} \leq 2\xi/\mu_1$. Then we obtain that $\xi/2 <p(\bz^{\top}\btheta) < 1-\xi/2$, and thus \[\left[p(\bz^\top\btheta)+(1-p(\bz^\top\btheta))\exp\left(\frac{(y-\bx^{\top}\bbeta_1)^2-(y-\bx^{\top}\bbeta_2)^2}{2\sigma^2}\right)\right] \geq \frac{\xi}{2}\brackets{1+ \exp\left(\frac{(y-\bx^{\top}\bbeta_1)^2-(y-\bx^{\top}\bbeta_2)^2}{2\sigma^2}\right)},\]
	\[\left[p(\bz^\top\btheta)\exp\left(\frac{(y-\bx^{\top}\bbeta_2)^2-(y-\bx^{\top}\bbeta_1)^2}{2\sigma^2}\right)+1-p(\bz^\top\btheta)\right]\geq \frac{\xi}{2}\brackets{1+ \exp\left(\frac{(y-\bx^{\top}\bbeta_2)^2-(y-\bx^{\top}\bbeta_1)^2}{2\sigma^2}\right)},\]
which implies that
\begin{equation}
	\label{eq:E-partial-oemga-1}
	\norm{\EE\left[\bv^{\top}\frac{\partial\omega}{\partial \bbeta_1}(\bx^{\top}\bv) ( \bx^\top \bbeta_1^* - y) \big| \cE_1 \right]}_2 \leq \frac{\mu_1^2}{\xi^2}\EE\brackets{\frac{\abs{y-\bx^{\top}\bbeta_1}\abs{y-\bx^{\top}\bbeta_1^*}/\sigma^2}{\exp\paren{\abs{\frac{(y-\bx^{\top}\bbeta_1)^2-(y-\bx^{\top}\bbeta_2)^2}{2\sigma^2}}}} \,\big|\, \cE_1} .
	\end{equation}

Furthermore, we define $\cE_2=\left\{\abs{\bx^{\top}(\bbeta_1^*-\bbeta_2^*)}\geq \mu_2\norm{\bbeta_1^*-\bbeta_2^*}_2\right\}$, where $\mu_2$ is a constant to be determined. Since $\bx^{\top}(\bbeta_1^*-\bbeta_2^*)/\norm{\bbeta_1^*-\bbeta_2^*}_2$ has a bounded density around 0, we have $\PP(\cE_2^c) \lesssim \mu_2$ when $\mu_2$ is sufficiently small.

When $g=2$, i.e., $y=\bx^{\top}\bbeta_2^*+\varepsilon$, the numerator in \eqref{eq:E-partial-oemga-1}

\begin{equation}\label{eq:60}
	\begin{aligned}
		\abs{y-\bx^{\top}\bbeta_1}\abs{y-\bx^{\top}\bbeta_1^*}/\sigma^2 &\leq \frac{1}{\sigma^2} \left[\frac{1}{2}(y-\bx^{\top}\bbeta_1)^2+\frac{1}{2}(y-\bx^{\top}\bbeta_1^*)^2\right]\\
		& \leq\frac{1}{\sigma^2} \left[3(\bx^{\top}(\bbeta_1^*-\bbeta_2^*))^2+3\varepsilon^2+(\bx^{\top}(\bbeta_1^*-\bbeta_1))^2\right],
	\end{aligned}
\end{equation}
and the denominator
\begin{equation}
	\begin{aligned}
		&\quad \exp\paren{\abs{\frac{(y-\bx^{\top}\bbeta_1)^2-(y-\bx^{\top}\bbeta_2)^2}{2\sigma^2}}} \\
		&\geq  	\exp\paren{\frac{1}{4\sigma^2}(\bx^{\top}(\bbeta_1^*-\bbeta_2^*))^2-\frac{1}{2\sigma^2}\left((\bx^{\top}(\bbeta_1^*-\bbeta_1))^2+(\bx^{\top}(\bbeta_2^*-\bbeta_2))^2\right)-\frac{1}{\sigma^2}\varepsilon\abs{\bx^{\top}(\bbeta_1-\bbeta_2)}}\\
		& \geq \exp\paren{\frac{1}{8\sigma^2}(\bx^{\top}(\bbeta_1^*-\bbeta_2^*))^2-\frac{5}{8\sigma^2}\left((\bx^{\top}(\bbeta_1^*-\bbeta_1))^2+(\bx^{\top}(\bbeta_2^*-\bbeta_2))^2\right)-\frac{6}{\sigma^2}\varepsilon^2}.
	\end{aligned}
\end{equation}
Therefore, under $\cE_1$,
\begin{equation}\label{eq:Eeps-E1}
	\begin{aligned}
		&\quad \EE_{\varepsilon}\brackets{\frac{\abs{y-\bx^{\top}\bbeta_1}\abs{y-\bx^{\top}\bbeta_1^*}/\sigma^2}{\exp\paren{\abs{\frac{(y-\bx^{\top}\bbeta_1)^2-(y-\bx^{\top}\bbeta_2)^2}{2\sigma^2}}}} \,\big|\, \cE_1, g=2}  \\
		&\leq \EE_{\varepsilon}\left[ \frac{1}{\sigma^2} \left[3(\bx^{\top}(\bbeta_1^*-\bbeta_2^*))^2+3\varepsilon^2+\mu_1^2\delta_{t}^2\right]\exp\paren{\frac{6\varepsilon^2+\mu_1^2\delta_t^2}{\sigma^2}}\exp\paren{-\frac{1}{8\sigma^2}(\bx^{\top}(\bbeta_1^*-\bbeta_2^*))^2}  \mid \cE_1, g=2\right]\\
		&\lesssim  \frac{1}{\sigma^2}(\bx^{\top}(\bbeta_1^*-\bbeta_2^*))^2\exp\paren{-\frac{1}{8\sigma^2}(\bx^{\top}(\bbeta_1^*-\bbeta_2^*))^2+\frac{2\mu_1^2\delta_t^2}{\sigma^2}},
	\end{aligned}
\end{equation}
where we use the fact that $\varepsilon$ is sub-Gaussian. Under $\cE_2$, we have $(\bx^{\top}(\bbeta_1^*-\bbeta_2^*))^2/\sigma^2 \geq \mu_2^2c_{\mathrm{SNR}}^2$, then there exist constant $c_1$, $c_2$ such that, if $\delta_t \leq c_1 \min\{\xi, \norm{\bbeta_1^*-\bbeta_2^*}_2\}$ and $c_{\rm{SNR}}  \geq c_2$, it holds that $\EE\brackets{\frac{\abs{y-\bx^{\top}\bbeta_1}\abs{y-\bx^{\top}\bbeta_1^*}/\sigma^2}{\exp\paren{\abs{\frac{(y-\bx^{\top}\bbeta_1)^2-(y-\bx^{\top}\bbeta_2)^2}{2\sigma^2}}}} \,\big|\, \cE_1 \cap \cE_2, g=2} \leq \kappa/4$.

On $\cE_1 \cap \cE_2^c$, we upper bound \eqref{eq:Eeps-E1} using the fact that $x^2 e^{-x^2/8} \leq 8/e$, which leads to that, for some constant $C$, \[\EE\brackets{\frac{\abs{y-\bx^{\top}\bbeta_1}\abs{y-\bx^{\top}\bbeta_1^*}/\sigma^2}{\exp\paren{\abs{\frac{(y-\bx^{\top}\bbeta_1)^2-(y-\bx^{\top}\bbeta_2)^2}{2\sigma^2}}}} \,\big|\, \cE_1, g=2} \leq \frac{\kappa}{4} +C\exp\left(\frac{2\mu_1^2c_1^2\xi^2}{\sigma^2}\right) \mu_2  \leq \frac{\kappa}{2},\]
by picking $\mu_2$ such that $C\exp\left(\frac{2\mu_1^2c_1^2}{\sigma^2}\right) \mu_2 \leq \kappa/4$.

On $\cE_1^c$, we apply the fact that 
\[\frac{p(\bz^\top\btheta)(1-p(\bz^\top\btheta))}{\left[p(\bz^\top\btheta)+(1-p(\bz^\top\btheta))\exp\left(\frac{(y-\bx^{\top}\bbeta_1)^2-(y-\bx^{\top}\bbeta_2)^2}{2\sigma^2}\right)\right]\left[p(\bz^\top\btheta)\exp\left(\frac{(y-\bx^{\top}\bbeta_2)^2-(y-\bx^{\top}\bbeta_1)^2}{2\sigma^2}\right)+1-p(\bz^\top\btheta)\right]} \leq \frac{1}{4},\]
and the result above that
\begin{equation*}
	\begin{aligned}
		\EE  \left[(y-\bx^\top\bbeta_1)(y-\bx^\top\bbeta_1^*)(\bx^{\top}\bv)/\sigma^2\right]& \lesssim \EE \left\{\frac{(\bx^{\top}\bv)}{\sigma^2} \left[(\bx^{\top}(\bbeta_1^*-\bbeta_2^*))^2+\varepsilon^2+(\bx^{\top}(\bbeta_1^*-\bbeta_1))^2\right]\right\} \\
		&\lesssim 1+c_{\rm{SNR}}^2,
	\end{aligned}
\end{equation*}
where the second inequality follows from the assumptions that $\EE[\bx\bx^{\top}]$ has bounded eigenvalues and $\delta_t \leq c_1 \norm{\bbeta_1^*-\bbeta_2^*}_2$. Therefore,
\[\EE\brackets{\bv^{\top}\frac{\partial\omega}{\partial \bbeta_1}(\bx^{\top}\bv) ( \bx^\top \bbeta_1^* - y)\,\big|\,g=2}  \lesssim \frac{\kappa}{2}+ \left(1+c_{\rm{SNR}}^2\right)\left(e^{-\mu_1^2/(2\sigma_z^2)}+e^{-\mu_1^2/(2\sigma_x^2)}\right) \leq \kappa,\]
by picking $\mu_1$ such that $\left(1+c_{\rm{SNR}}^2\right)\left(e^{-\mu_1^2/(2\sigma_z^2)}+e^{-\mu_1^2/(2\sigma_x^2)}\right)  \leq \kappa/2$.

For the case for $g=1$, i.e.,  $y=\bx^{\top}\bbeta_1^*+\varepsilon$, note that the numerator in \eqref{eq:E-partial-oemga-1}
\begin{equation}
	\begin{aligned}
		\abs{y-\bx^{\top}\bbeta_1}\abs{y-\bx^{\top}\bbeta_1^*}/\sigma^2 &\leq \frac{1}{\sigma^2} \left[\frac{1}{2}(\bx^{\top}\bbeta_1^*-\bx^{\top}\bbeta_1)^2+\frac{3}{2}\varepsilon^2\right],
	\end{aligned}
\end{equation}
which is less than the bound in \eqref{eq:60}, and then the following procedure is the same as that for $g=2$. The other two inequalities in \eqref{eq:partial-bound} can be shown in the same way.

\subsection{Proof of Lemma \ref{thm:lemma-sample}} \label{sec:proof-lemma-sample}

%We first introduce two standard lemmas  in \cite{boucheronconcentration} which are useful in establish our concentration inequalities.
%
%\begin{lemma}[Symmetrization Lemma]\label{thm:symmetrization}
%	Let $\ba_1, \dots, \ba_n \in \calA$ be independent random vectors and $f\in\calF$ for some function class $\calF$ defined on $\calA$.  We have
%	\begin{equation*}
%		\EE\brackets{\underset{f\in\calF}{\sup}\abs{\sum_{i=1}^n \big(f(\ba_i) - \EE\brackets{f(\ba_i)}\big)}} \le 2\EE\brackets{\underset{f\in\calF}{\sup}\abs{\sum_{i=1}^n \sigma_i\cdot f(\ba_i)}},
%	\end{equation*}
%	where $\sigma_1, \dots, \sigma_n$ are i.i.d. Rademacher random variables that are independent of $\ba_1, \dots, \ba_n$. 
%\end{lemma}
%
%\begin{lemma}[Talagrand's Lemma]\label{thm:Talagrand}
%	Let $\ba_1, \dots, \ba_n \in \calA$ be random samples and $g\in\calG$ for some function class $\calG$ defined on $\calA$. 
%	Let $l_i(\cdot)$ be a $L$-Lipschitz function that satisfy $l_i(0)=0$ and 
%	\begin{equation}
%		\abs{l_i(x) - l_i(x')} \le L\cdot\abs{x-x'},\quad\text{for all}\quad x, x'\in\RR. 
%	\end{equation}
%We have
%	\begin{equation*}
%		\EE_{\bsigma}\brackets{\underset{f\in\calF}{\sup}\abs{\sum_{i=1}^n \sigma_i\cdot [ l_i\circ g(\ba_i)]}} \le 2L\EE_{\bsigma}\brackets{\underset{f\in\calF}{\sup}\abs{\sum_{i=1}^n \sigma_i\cdot g(\ba_i)}} 
%	\end{equation*}
%	where $\sigma_1, \dots, \sigma_n$ are i.i.d. Rademacher random variables that are independent of $\ba_1, \dots, \ba_n$, and $\EE_{\bsigma}$ denotes the expectation taken over $(\sigma_1,\dots, \sigma_n)$. 
%\end{lemma}
	We first prove the first concentration inequality in Lemma \ref{thm:lemma-sample}: For some constant $C$, with probability at least $1-\frac{2}{\max\{n, d\}^2}$,
	\begin{equation*}
		 \norm{ \frac{1}{n}\sum_{i=1}^n \omega_i(\bgamma^{(t)}) \bx_i ( \bx_i^\top \bbeta_1^* - y_i) 
			- \EE[ \omega_i(\bgamma^{(t)}) \bx_i ( \bx_i^\top \bbeta_1^* - y_i) ] }_\infty \le  C \sqrt{ \frac{\log \max\{n, d\}}{n}},
	\end{equation*}
	which is equivalent to 
	\begin{equation*}
		\max_{j \in [d]} \abs{ \frac{1}{n}\sum_{i=1}^n \omega_i(\bgamma^{(t)}) x_{ij} ( \bx_i^\top \bbeta_1^* - y_i) 
			- \EE[ \omega_i(\bgamma^{(t)}) x_{ij} ( \bx_i^\top \bbeta_1^* - y_i) ] } \le  C \sqrt{ \frac{\log \max\{n, d\}}{n} } .
	\end{equation*}
	Let $\ba_{i} = (\bx_i, y_i, \bz_i)$ and $f(\ba_{i})=\omega(\bgamma^{(t)};\bx_i, y_i, \bz_i) x_{ij} ( \bx_i^\top \bbeta_1^* - y_i)$. 
	Note that $\bgamma^{(t)}$ is independent of $(\bx_i, y_i, \bz_i)$, and thus, given $\bgamma^{(t)}$, the $\{f(\ba_{i})\}$ are i.i.d. Furthermore, since $\abs{f(\ba_i)} \leq \abs{x_{ij}(\bx_i^\top \bbeta_1^* - y_i)}$, we have
	\[\norm{f(\ba_i)-\EE[f(\ba_i)]}_{\psi_{1}} \lesssim \norm{x_{ij}}_{\psi_2}\norm{\bx_i^\top \bbeta_1^* - y_i}_{\psi_{2}} \lesssim \sigma_x^2c_{\rm{SNR}}\sigma+\sigma_x\sigma < \infty.\]
	By Proposition 5.16 of \cite{vershynin2010introduction}, we have
	\begin{equation}\label{eq:subexp-concentration}
	\PP\paren{\abs{\sum_{i=1}^n \left(f(\ba_i)-\EE[f(\ba_i)]\right)} > t} \leq 2\exp\brackets{-c'\min\left\{\frac{t^2}{n(C')^2}, \frac{t}{C'}\right\}},
	\end{equation}
	for some constant $c'$ and $C'$. Letting $t = \sqrt{3(C')^2n \log \max\{n, d\}/c'}$ yields that 
	\[\PP\paren{\abs{\sum_{i=1}^n \left(f(\ba_i)-\EE[f(\ba_i)]\right)} > \sqrt{3(C')^2n \log \max\{n, d\}/c'}} \leq \frac{2}{\max\{n, d\}^3}.\]

Hence,
\begin{equation*}
\PP\paren{\max_j\abs{ \frac{1}{n}\sum_{i=1}^n \omega_i^{(t)} x_{ij} ( \bx_i^\top \bbeta_1^* - y_i) 
- \EE[ \omega_i^{(t)} x_{ij} ( \bx_i^\top \bbeta_1^* - y_i) ] } > C\sqrt{\frac{ \log \max\{n, d\}}{n}} } \le \frac{2}{\max\{n, d\}^2}. 
\end{equation*}
where $C=C'\sqrt{3/c'}$.

Similary, the second concentration inequality
\begin{equation*}
\norm{  \frac{1}{n}\sum_{i=1}^n \big( \omega_i^{(t)} - p(\bz_i^\top\btheta^{*}) \big) \bz_i 
- \EE\left[ \big(\omega_i^{(t)} - p(\bz_i^\top\btheta^{*}) \big) \bz_i \right] }_\infty \le C \sqrt{ \frac{\log \max\{n, d\}}{n} }
\end{equation*}
is equivalent to 
\begin{equation*}
\max_j \abs{  \frac{1}{n}\sum_{i=1}^n \big( \omega_i^{(t)} - p(\bz_i^\top\btheta^{*}) \big) z_{i, j}
- \EE\left[ \big(\omega_i^{(t)} - p(\bz_i^\top\btheta^{*}) \big) z_{i, j} \right] } \le C \sqrt{ \frac{\log \max\{n, d\}}{n} }.
\end{equation*}
Since $\omega_i^{(t)}$ and $p(\bz_i^{\top}\btheta^*)$ are bounded,
\begin{equation*}
\norm{\big( \omega_i^{(t)} - p(\bz_i^\top\btheta^{*}) \big) z_{i, j} }_{\psi_1} \lesssim \norm{ \omega_i^{(t)} - p(\bz_i^\top\btheta^{*})}_{\psi_2} \norm{z_{i, j} }_{\psi_2}  < \infty,
\end{equation*}
and hence we can similarly establish the desired result. 

The third inequality, 
\[\norm{\frac{1}{n}\sum_{i=1}^n \left[ \omega_i^{(t)} \bx_i\bx_i^{\top} \right]
	- \EE\left[ \omega_i^{(t)}\bx_i\bx_i^{\top}  \right]}_{\max}
 \le C \sqrt{ \frac{\log \max\{n, d\}}{n}},\]
 is equivalent to
\[\max_{j, k}\abs{\frac{1}{n}\sum_{i=1}^n \left[ \omega_i^{(t)} x_{i, j}x_{i, k} \right]
	- \EE\left[ \omega_i^{(t)}x_{i, j}x_{i, k}  \right]}
\le C \sqrt{ \frac{\log \max\{n, d\}}{n}}.\]
Since 
\[\norm{ \omega_i^{(t)}x_{i, j}x_{i, k}  }_{\psi_1} \lesssim \norm{x_{i, j}}_{\psi_2} \norm{x_{i, k} }_{\psi_2}  < \infty,\]
similar to \eqref{eq:subexp-concentration}, there exist constants $c''$ and $C''$ such that
\begin{equation}
	\PP\paren{\abs{\sum_{i=1}^n \left(\omega_i^{(t)}x_{i, j}x_{i, k}-\EE\left[\omega_i^{(t)}x_{i, j}x_{i, k}\right]\right)} > t} \leq 2\exp\brackets{-c''\min\left\{\frac{t^2}{n(C'')^2}, \frac{t}{C''}\right\}}.
\end{equation}
Letting $t = \sqrt{4(C'')^2n \log \max\{n, d\}/c''}$ yields that 
\[\PP\paren{\abs{\sum_{i=1}^n\left(\omega_i^{(t)}x_{i, j}x_{i, k}-\EE\left[\omega_i^{(t)}x_{i, j}x_{i, k}\right]\right)} > \sqrt{4(C'')^2n \log \max\{n, d\}/c''}} \leq \frac{2}{\max\{n, d\}^4}.\]
Hence,
\begin{equation*}
	\PP\paren{\max_{j, k}\abs{ \frac{1}{n}\sum_{i=1}^n\left(\omega_i^{(t)}x_{i, j}x_{i, k}-\EE\left[\omega_i^{(t)}x_{i, j}x_{i, k}\right]\right)} > C\sqrt{\frac{ \log \max\{n, d\}}{n}} } \le \frac{2}{\max\{n, d\}^2}. 
\end{equation*}
where $C=2C''/\sqrt{c''}$.

%\listoffigures
%\listoftables

\end{document}